\def\thm@space@setup{%
  \thm@preskip=5cm plus 1cm minus 2cm
  \thm@postskip=\thm@preskip 
}
\newif\ifextended
\newcommand{\extVersion}[1]{\ifextended #1 \fi}
\begin{document}

\title[]{Scalable Verification of GNN-Based Job Schedulers}         


\author{Haoze Wu}
\affiliation{
  \department{Department of Computer Science}              
  \institution{Stanford University}            
  \country{USA}                    
}

\author{Clark Barrett}
\affiliation{
  \department{Department of Computer Science}              
  \institution{Stanford University}            
  \country{USA}                    
}

\author{Mahmood Sharif}
\affiliation{
  \department{School of Computer Science}              
  \institution{Tel Aviv University}            
  \country{Israel}                    
}

\author{Nina Narodytska}
\affiliation{
  \institution{VMware Research}            
  \country{USA}                    
}

\author{Gagandeep Singh}
\affiliation{
  \department{School of Computer Science}              
  \institution{University of Illinois at Urbana-Champaign}            
  \country{USA}                    
}

\begin{CCSXML}
<ccs2012>
<concept>
<concept_id>10011007.10011006.10011008</concept_id>
<concept_desc>Software and its engineering~General programming languages</concept_desc>
<concept_significance>500</concept_significance>
</concept>
<concept>
<concept_id>10003456.10003457.10003521.10003525</concept_id>
<concept_desc>Social and professional topics~History of programming languages</concept_desc>
<concept_significance>300</concept_significance>
</concept>
</ccs2012>
\end{CCSXML}

\ccsdesc[500]{Software and its engineering~General programming languages}
\ccsdesc[300]{Social and professional topics~History of programming languages}

\begin{abstract}
Recently, Graph Neural Networks (GNNs) have been applied for scheduling jobs over clusters, achieving better performance than hand-crafted heuristics. Despite their impressive performance, concerns remain over whether these GNN-based job schedulers meet users' expectations about other important properties, such as strategy-proofness, sharing incentive, and stability. 
In this work, we consider formal verification of GNN-based job schedulers. We address several domain-specific challenges such as networks that are deeper and specifications that are richer than those encountered when verifying image and NLP classifiers. We develop \sys, the first general framework for verifying both single-step and multi-step properties of these schedulers based on carefully designed algorithms that combine abstractions, refinements, solvers, and proof transfer. Our experimental results show that \sys achieves significant speed-up when verifying important properties of a state-of-the-art GNN-based scheduler compared to previous methods.
\end{abstract}

\keywords{Formal Verification,
Neural Networks,
Graph Neural Networks,
Cluster Scheduling,
Abstract Interpretation,
Forward-backward Analysis
}  

\maketitle

\section{Introduction}
\label{sec:intro}



Designing efficient job scheduling for multi-user distributed-computing clusters is a challenging and important task~\cite{barroso2013datacenter}. One of the main evaluation metrics of a schedule is performance, for example optimizing job completion time on a job profile. However, the user expectation typically requires that the scheduler satisfy a number of important properties beyond performance, such as strategy-proofness, sharing incentive, and stability~\cite{fairness,locality}.  If a scheduler lacks any of these properties, the result could be catastrophic, potentially costing millions of dollars at scale.  For example, if the scheduler is not strategy-proof (meaning that users can benefit from misrepresenting their job attributes), the users would be incentivized to manipulate their jobs to get them scheduled earlier than they are supposed to. The result could be long waiting times for all users or inefficient overall operation of the cluster~\cite{locality}.

Recently, a class of job schedulers \cite{decima,park2021learning,sun2021deepweave} based on Graph Neural Networks (GNNs) were shown to achieve significant performance improvement over schedulers using hand-crafted heuristics. However, whether these GNN-based job schedulers possess essential properties is not known and, more importantly, there are no tools available to check whether these properties hold.
Formally guaranteeing these properties is known to be difficult and until now has only been achieved for simple hand-crafted policies~\cite{shenker2013choosy}.
Introducing techniques for proving or disproving such properties for GNN-based schedulers would allow system designers to better evaluate the policies implemented by these schedulers, making adjustments so that the scheduler satisfies users' expectations without sacrificing the performance too much.
However, GNNs' decision-making processes are complex and opaque, making it challenging to formally validate these properties.



In this work, we focus on the formal verification of GNN-based job schedulers, which, to the best of our knowledge, has not been considered in prior work. In particular, given a specification over a GNN-based job scheduler, our goal is to either formally prove the specification holds or disprove it with a counter-example.
While there is a growing body of work on formally analyzing and verifying properties of deep neural networks applied in the vision, robotics, and natural language processing domains, work on formal analysis of ML models in the systems domain has been limited. 
This may be explained in part by the unique challenges posed by the systems domain, some of which we outline below.

\paragraph{Computation graph with 100+ layers.} GNN-based systems, including schedulers, perform a message-passing algorithm as part of the inference stage. 
While message passing can be unrolled into a sequence of affine and non-linear activation layers, the resulting network is quite deep, typically containing over $100$ layers. Existing state-of-the-art verifiers are designed to handle shallower networks (typically $<20$ layers) and begin to lose substantial precision~\cite{zelazny2022optimizing,nnv,planet,dlv,deeppoly,kpoly,frown,crown,wong,reluval,neurify,sherlock,mipverify,fastlin,barrier,barrier-revisited,sdp,star,AI2,bcrown,singh2019boosting,cnn-cert,deepz,wu2020parallelization,gpupoly} or scalability ~\cite{unified,reluplex,marabou,planet,nnenum,nnv,rpm,deepsplit,fromherz2020fast,optAndAbs,mipverify,mip01,branching,babsr,peregrinn,dependency,xu2020fast,wu2022efficient} with increasing network depth. To deal with this challenge, we propose a new, general framework for iterative forward-backward abstraction refinement that balances the analysis precision and speed for deeper networks.

\paragraph{Rich specifications.}
Many desirable properties require reasoning about sets of nodes rather than a single node. For example, one might specify ``no task from job A is scheduled before at least one task from job B is finished'' for strategy-proofness. Properties like this contain a large disjunction: we need to check the requirement for each task in job A. We propose an abstraction technique that can reason about multiple disjuncts simultaneously to speed up verification of such properties. As with robustness properties, many desirable properties for schedulers can be defined both globally and locally. We focus on the latter which is popular in the neural network verification literature and stronger than empirical evaluation on finite sets of inputs as done in the past for more complex scheduling policies~\cite{kandasamy2020online}. Our framework can theoretically also handle global properties. We discuss the practical difficulties of it in Sec.~\ref{sec:spec} and leave it as future work.

\paragraph{Sequential decision making.} 
Schedulers perform sequential decisions to schedule tasks. Therefore, to thoroughly analyze the learned schedulers, it is not sufficient to only reason about single-step input-output properties considered by state-of-the-art verifiers, as a malicious user can craft a job that affects the scheduler's behavior downstream. Therefore, we consider multi-step verification to reason about bounded traces produced by a sequence of scheduling actions from the scheduler. This adds an extra layer of complexity on top of the already challenging single-step verification, as we need to reason about all the different states along different traces, which requires unrolling the system. 
To address this, we introduce a proof-transfer encoding of the system which only registers incremental changes in the network encoding along the traces. This significantly speeds up complete verification in the multi-step setting.


\paragraph{This work} We present the first approach for formally analyzing state-based and trace-based properties of GNN-based job schedulers. We build general algorithms for single-step and multi-step verification. 
Our main contributions are:
\begin{outline}

\1 We present a new, generic iterative refinement framework for forward-backward analysis of neural networks. Our framework can be instantiated with popular numerical domains such as Zonotope or DeepPoly to iteratively refine the analysis results.
\1 We provide a novel, tunable node abstraction for a set of node embeddings produced by the GNN to speed up the verification of properties with multiple disjuncts.
\1  
We present an algorithm for multi-step verification based on trace enumeration. To improve speed, we leverage proof transfer to reuse encodings for the parts of the GNN structures that do not change across time steps. 
\1 We provide an end-to-end implementation of our approach in a framework called \sys

(\textcolor{red}{ve}rification of \textcolor{red}{G}NN-b\textcolor{red}{a}sed \textcolor{red}{s}chedulers) and evaluate its effectiveness for checking desirable state-based and trace-based reachability properties for the state-of-the-art GNN-based scheduler Decima~\cite{decima}. 
Our results show that analysis with \sys is significantly more precise and scalable than baselines based on existing state-of-the-art verifiers. 
Using \sys we prove that Decima satisfies the strategy-proofness property in many cases but not always. Thus adjustments in the training procedure are potentially needed to make Decima fully strategy-proof.
Our system and benchmarks are available at \href{https://github.com/anwu1219/vegas}{this repository}.
\end{outline}

\section{Overview}
\label{sec:overview}

\begin{figure*}[t]
    \centering
    \includegraphics[width=0.98\textwidth]{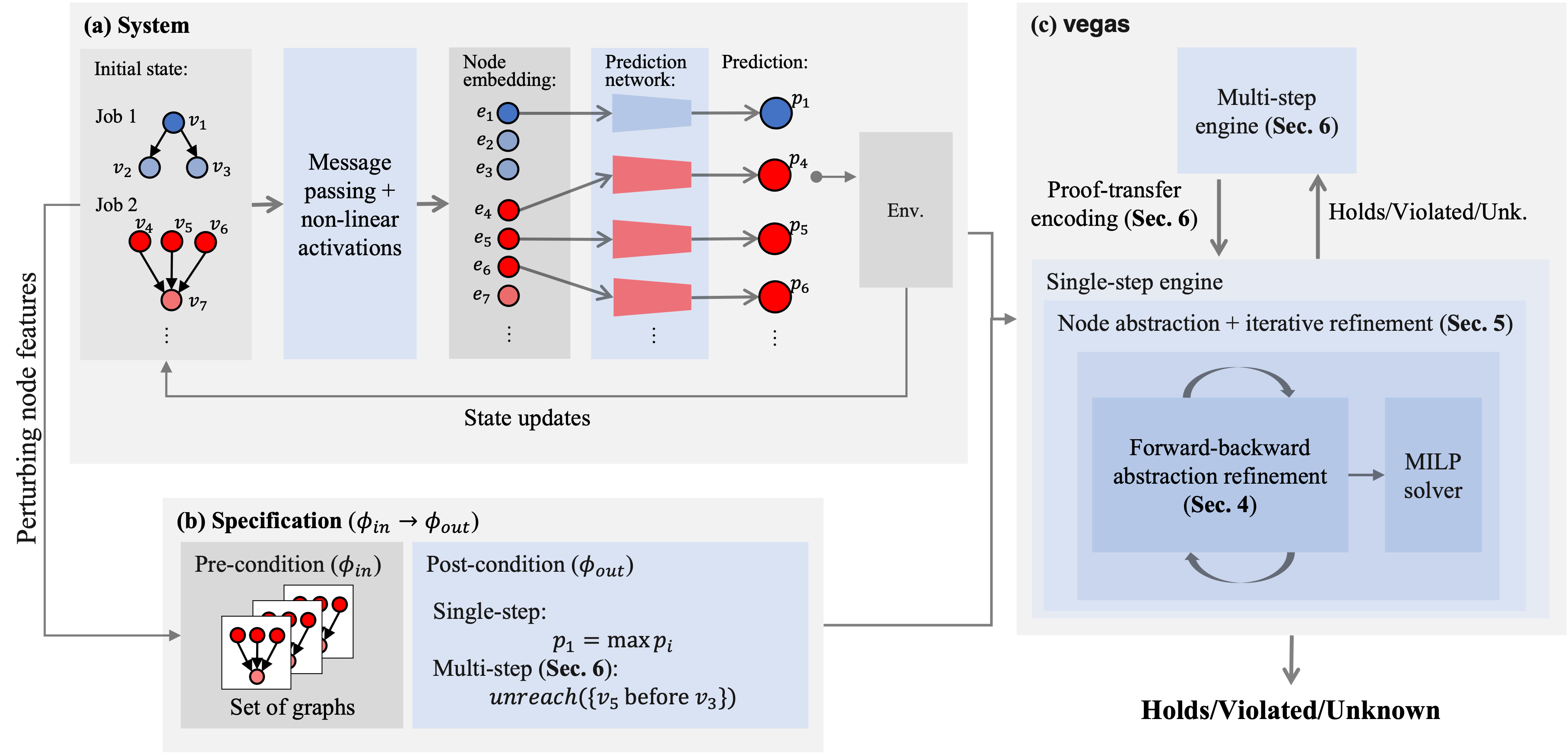}
    \caption{Overview of our verification workflow. It has three main components: (a) the system to verify; (b) a formal language for specifying properties; and (c) the verification engine \sys.}
    \label{fig:overview}
\end{figure*}

In this section, we first describe our verification workflow and then explain our key technical contributions using small intuitive examples. Formal details are in later sections.

\subsection{Verification Workflow}

Our verification workflow shown in \figref{fig:overview} has three components: (a) the system to verify; (b) a formal language for specifying properties; and (c) the verification engine \sys.

\paragraph{GNN-based scheduling system.}
GNN arises as a natural solution for learning-based job schedulers on clusters because many clusters (e.g., Spark) encode jobs as directed acyclic graphs (DAGs), with each node representing a computational \emph{stage} consisting of one or more tasks that can be run in parallel. Each node is associated with a feature vector \extVersion{(described in \appref{app:features}) }containing all the state information for that node, including the average task duration and the number of remaining tasks. There is an edge from stage $\node_i$ to stage $\node_j$ if the latter takes the outputs of the former as inputs. That is, $\node_j$ cannot be scheduled before $\node_i$ is completed. We call a node with no children a \emph{frontier node}. The input to the GNN-based scheduler is a set of jobs to schedule. 
The output of the GNN is a score $\out_i$ for each frontier node $\node_i$, representing the estimated reward if $\node_i$ is scheduled next. The node with the highest score is selected to be scheduled.

As is typical in a GNN, the GNN-based job scheduler contains a message passing component which computes a latent representation (i.e., an embedding) $e_i$ for each node $v_i$.
We define message passing precisely in \secref{subsec:gcn}.
The score $\out_i$ for a frontier node $\node_i$ is computed from a prediction network which takes $e_i$ as input.
The scheduling action (i.e., the node with the highest score) is reported to the environment, which schedules the reported node and produces a new state (with, for example, nodes removed).

\paragraph{Specifications.}
We consider a wide range of specifications of the form $\phi_{in} \to \phi_{out}$. 
We assume $\phi_{in}$ is a conjunction of linear constraints over the GNN input features, and consider post conditions $\phi_{out}$ in both single-step and multi-step settings. Single step post conditions are logical constraints over linear inequalities on the outputs of the network. Multi-step post conditions are defined in terms of unreachability of ``bad'' \emph{traces} (i.e., sequences of scheduling decisions).


\paragraph{Verifier.}
Our verification engine \sys has two main components, a single-step engine and a multi-step engine. 
Motivated by the unique challenges in this verification setting, we propose a forward-backward abstraction refinement framework (\secref{sec:backward}) which goes beyond the forward-propagation-only abstract interpretation, as well as a node-abstraction scheme (\secref{sec:abtraction}) for handling disjunctions in the verification query.
The multi-step engine runs a trace enumeration procedure that repeatedly invokes the single-step engine. We propose an efficient encoding of the unrolled system referred to as the \emph{proof-transfer encoding} (\secref{sec:multi}) which significantly reduces the run-time.


\subsection{Forward-backward abstraction refinement}

As mentioned earlier, one of the distinctive features in GNN verification is the need to reason about very deep computational graphs resulting from the unrolling of the message-passing procedure. Forward abstract interpretation techniques are less effective here as the imprecision can grow exponentially with increasing depth of the computation graph. We propose to refine the forward abstraction by backward refinement guided by the output constraints. This yields a general forward-backward abstraction refinement loop (\secref{sec:backward}). While our ideas are driven to tackle challenges in GNN verification, we formalize and implement the proposed technique in a general manner so that it can be applied for neural networks with different architectures and activations.


\paragraph{Running example.} We illustrate the forward-backward abstraction refinement on a pre-trained fully-connected feed-forward neural network with Leaky ReLU activation functions ($\sigma(x) = \max(\alpha x, x)$) shown in \figref{fig:backToyExample}. Here $\alpha$ is a hyper-parameter of Leaky ReLU. For numerical simplicity, we assume $\alpha$ = 0.1 in this example. 
We use Leaky ReLU as an example since it is used in the state-of-the-art GNN-based job scheduler Decima which we set out to verify. 
Note that while the running example uses a feed-forward neural network for simplicity, in practice the architecture of a GNN is much more complex (e.g., contains residual connections). We discuss how to handle forward-backward analysis in the GNN setting in Sec.~\ref{subsec:skip}. 

The network here consists of four layers: an input layer, two hidden layers, and an output layer with two neurons each. The outputs of a (non-input) layer are computed by applying an affine transformation to the last layer's outputs followed by the activation function. The activation function is often not applied at the output layer (also in this example). The values on the edges represent the learned weights of the affine transformations. The values above or below the neurons represent the learned biases (translation values) of the affine transformation. 
For example, the top neuron in the first hidden layer, $x_4$, can be computed as $\sigma(x_2)$, where $x_2$, the pre-activation value of the neuron, is equal to $x_0 + x_1$. 

\paragraph{Specification.}
Let us assume a hypothetical job profile with two disconnected nodes. Suppose their feature vectors (1-dimensional in this case) range from $[0, 1]$. Our goal is to prove that the score for the second node ($x_{11}$) is always greater than the score for the first node ($x_{10}$), for any possible values of the two features in the range $[0, 1]$.

\paragraph{Forward abstract interpretation.}
A typical abstract interpretation on neural networks \cite{crown,deepz,deeppoly,kpoly} involves propagating the input set (represented in pre-defined abstraction domain such as Zonotope or DeepPoly) forward layer by layer (via pre-defined abstract transformers) to compute an over-approximation of the reachable output set. The specification holds if the over-approximated output set is disjoint from the bad states ($\neg\phi_{out}$). 
In this work, we use the DeepPoly domain~\cite{deeppoly} for forward abstract interpretation, though the refinement technique applies to any sub-polyhedral abstract domain~\cite{SinghPV:18}. We show the intervals derived by the DeepPoly analysis in the bottom (blue) box in \figref{fig:backToyExample}. The steps to deriving these bounds are omitted here\extVersion{ (can be found in \appref{app:deeppoly})}. As shown in the figure, the output bounds derived by DeepPoly are not precise enough to prove the property.

\paragraph{Backward abstraction refinement.}
Normally, at this point, the abstraction-based analysis is inconclusive and we have to resort to search-based methods (e.g. MILP or SMT solvers) which perform case analysis on the Leaky ReLUs and have an exponential runtime in the worst case. Instead, we observe that the forward abstract interpretation typically ignores the post condition when computing the over-approximation at each layer leaving room for refinement guided by the verification conditions and proving the property without invoking search-based methods. In the case where a complete search is unavoidable, a refined over-approximation still helps to prune the search space. 

\begin{figure*}[!t]
    \centering
    \includegraphics[width=0.98\textwidth]{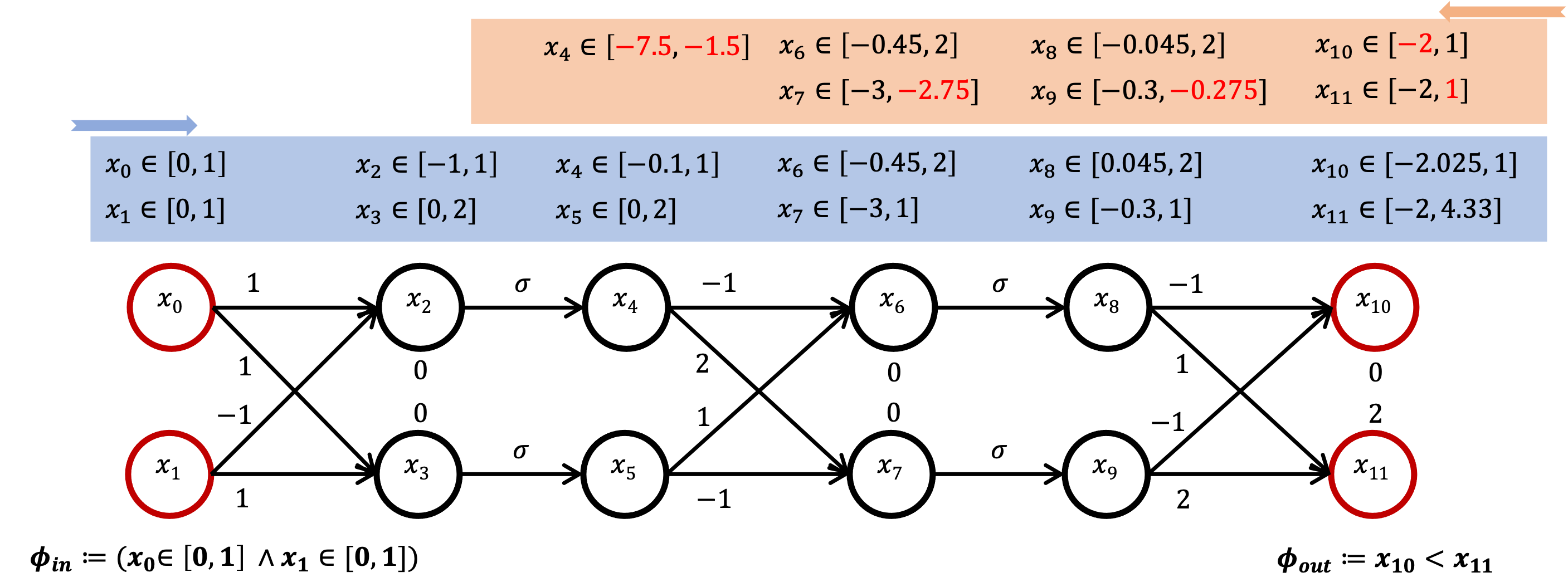}
    \caption{A toy example for forward-backward abstraction refinement.}
    \label{fig:backToyExample}
\end{figure*}


We illustrate a forward-backward abstraction refinement on our example. We associate two abstract elements, a forward one $a_i$  and a backward one $\abst{a_i}$ from the same abstract domain (e.g., DeepPoly) with each neuron $x_i$. The analysis alternates between forward and backward passes, which respectively update the forward abstract elements and the backward abstract elements. We construct new forward transformers that consider both the forward and the backward abstract elements (\secref{sec:backward}), yielding more precise forward analysis. The backward elements are initialized to $\top$ in the beginning, and therefore the first forward pass results in the same results as before. This is followed by a backward pass.

The backward analysis starts with the ``bad'' output set $\neg\phi_{out}:= x_{10} \geq x_{11}$, which we use to refine $\abst{a_{10}}$ and $\abst{a_{11}}$, which are currently set to $\top$. 
We first compute the bounds of $x_{10}$ and $x_{11}$, conditioned on the existing bounds and $\neg\phi_{out}$. This yields a tighter lower bound and upper bound for $x_{10}$ and $x_{11}$ respectively. These new bounds are then used to refine the underlying backward abstract element $\aBack_{11}$: $\aBack_{11} = \cond(\aFor_{11}, x_{11}\in [-2,1])$, where $\aFor_{11}$ is the forward abstract element updated from the forward pass and $\cond$ is the conditional transformer from the domain. $\aBack_{10}$ is updated similarly. 

We now move to the last hidden layer. Again, we first compute sound intervals for neurons $x_{8}$ and $x_9$ with Linear Programming (LP). For instance, to compute an upper bound for $x_9$, we can cast the following optimization problem:
\useshortskip 
\begin{align*}
&u_{9} = \max_{x_{8:11}} x_9,
\text{s.t. } x_{10} = -x_8 - x_9, x_{11} = x_8 + 2 x_9 + 2 \\
&\qquad x_{10} \in [-2, 1], x_{11} \in [-2, 1], x_{8} \in [-0.045, 2] 
\end{align*}

We obtain a tightened bound $x_9 \leq -0.275$. Importantly, now the underlying backward abstract element $\abst{a_9}$ is set to $\cond(a_9, [-0.3, -0.275])$ where the input-output relationship for the Leaky ReLU is linear (and can be captured exactly by domains like DeepPoly). The exactness significantly improves the analysis precision in the next forward pass.

Note that in the most general form (and in our implementation), two LPs per neuron are called to tighten the bounds. While this incurs overhead, the process is highly parallelizable as neurons from the same layer can be processed independently. More importantly, we observe that this tractable overhead (we prove complexity in \secref{sec:backward}) usually pays off in practice on challenging benchmarks which would otherwise require extensive search by a complete procedure.

After refining $\abst{a_8}$ and $\abst{a_9}$, we process $x_6$ and $x_7$, where $x_8 = \sigma(x_6)$ and $x_9 = \sigma(x_7)$. 
Due to the non-linearity of the activation function, a precise encoding of Leaky ReLU results in a Mixed Integer Linear Program, which is in general challenging to solve. Therefore, we encode a sound linear relaxation~\cite{planet} of the activation function. 

Next, using the same procedure, we derive that $x_{4} \in [-7.5, -1.5]$.
Note that this is disjoint from the interval derived during forward analysis. Intuitively, this means that for $\phi_{in}\land \neg\phi_{out}$ to hold, $x_4$ must be less than or equal to -1.5, while for $\phi_{in}$ to hold, $x_4$ must be between -0.1 to 1. This implies that $\neg\phi_{out}$ cannot hold for any input satisfying $\phi_{in}$, and the property is proved without the use of search-based methods. 

In the case where the backward analysis does not prove the property by itself, we could perform forward analysis again by taking the refined backward abstract elements into consideration. This could result in a tighter over-approximation of the output set compared to the first forward analysis. Performing backward analysis again from this tighter over-approximation could in turn result in further refinement. Thus, we could alternate between forward and backward analysis to keep refining the abstractions until either the property is proved or some convergence condition is met. We define this forward-backward analysis formally in Section~\ref{sec:backward}.

\subsection{Node abstraction with iterative refinement}

If the post-condition $\phi_{out}$ contains multiple conjuncts, the bad output set $\neg\phi_{out}$ becomes disjunctive. This is a common occurrence in practice as the post-condition often specifies that a set of output neurons all satisfy a certain property $P$. For instance, the simple post-condition ``node $\node_1$ is always scheduled'' can be formally specified as $\phi_{out} := \bigwedge \out_1 > \out_i$, where $\out_i$ extends over the score of all frontier nodes other than $\out_1$. In this case, the bad output set $\neg \phi_{out}$ becomes $\bigvee \out_1 \leq \out_i$. 
A na\"ive approach would analyze each disjunct individually, which becomes expensive, especially when the number of disjuncts is large. However, we observe that the special structure of a GNN used in node prediction tasks allows us to reason about multiple nodes simultaneously.

We illustrate this idea on the weaker post-condition ``node $\node_1$ has higher score than frontier nodes in job 2'' (in \figref{fig:overview}(a)), that is, $\bigwedge_{i\in\{4,5,6\}}\out_1 > \out_i$. As shown in \figref{fig:abs}, a vanilla approach would individually check for the unsatisfiability of the three formulas  $\out_1 \leq \out_4$, $\out_1 \leq \out_5$, and $\out_1 \leq \out_6$. However, we observe that in GNNs for node prediction tasks~\cite{wu2020comprehensive}, the prediction for a node $\node_i$ is often computed by applying the \emph{same} prediction network to the embedding $\embed_i$. It is therefore natural to consider an abstraction $\abst{e}$ at the embedding level which
contains all values that $e_4$, $e_5$, and $e_6$ can take. As illustrated in \figref{fig:abs}, suppose $e_4\in[-2,2]$, $e_5\in[-1,3]$, and $e_6\in[3,5]$. An abstract embedding $\abst{e}$ can then take values in $[-2,5]$. If $\out_1 > \abst{\out}$ holds, or equivalently, $\out_1 \leq \abst{\out}$ is unsatisfiable, then the original post-condition must hold. We state and prove this formally in \secref{sec:abtraction}.
On the other hand, if we obtain a spurious counter-example, the analysis is inconclusive 
\begin{wrapfigure}{r}{0.5\textwidth}
\begin{center}
\vspace{-4mm}
\includegraphics[width=0.5\textwidth]{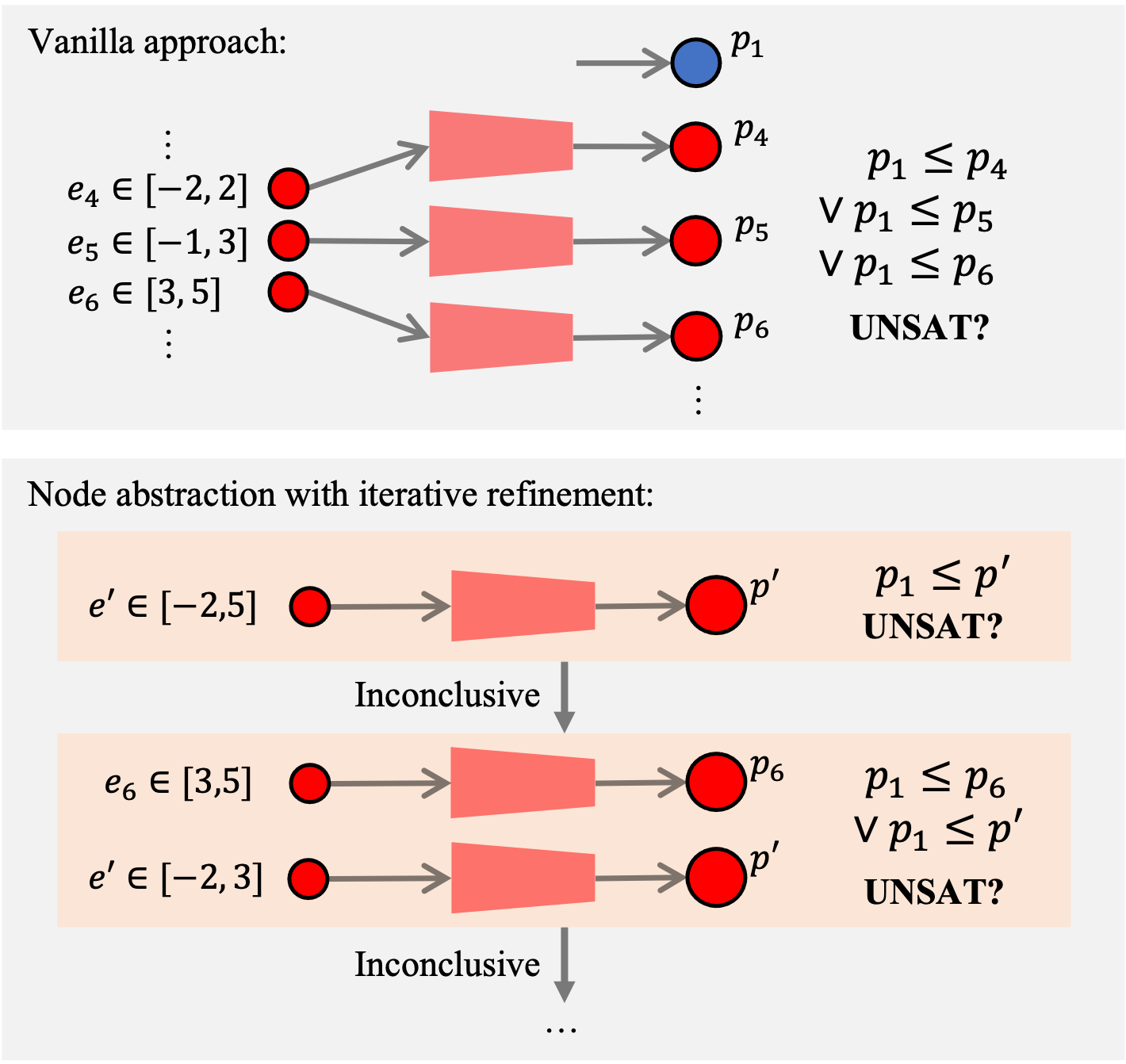}
\caption{A toy example for node abstraction.}
\vspace{-6mm}
\label{fig:abs}  
\end{center}
\end{wrapfigure}
and we need to refine 
the abstraction. 
One way to refine the abstraction is by reducing the number of node embeddings considered in the abstract embedding. In particular, we heuristically remove the embedding that reduces the volume of the abstraction the most, in this case $e_6$, and try to reason about $\out_1 > \out_6$ and $\out_1 > \abst{\out}$ individually. Now $\abst{e}$ is more constrained  ($\abst{\embed}\in [-2,3]$), and the property is more likely to hold on $\abst{\embed}$. We can perform this refinement iteratively until either the property is proved or a real counter-example is found. In the worst case, no node abstraction can be performed and we would have to examine each node individually. In practice, it often pays off to speculatively reason about multiple neurons simultaneously, especially when the number of disjuncts is large.

\subsection{Beyond single-step verification}

Building on top of our single-step engine combining forward-backward refinement and node abstraction, we present a procedure for verifying multi-step properties in the form of trace (un)reachability ($\phi_{in}\to\unreach(T)$). As we shall see, this allows us to define meaningful specifications over the job-scheduling system. We illustrate the procedure on the example in \figref{fig:multi}. The postcondition we verify states that ``$\node_5$ cannot be scheduled before $\node_3$''. 
Our procedure proves this by computing the set of feasible traces from the initial state by repeatedly invoking the single-step engine. For example, at step 0, we ask the single step engine to check $\phi_{in}\to (\node_1 \geq \node_4)$ and $\phi_{in} \to (\node_1 \leq \node_4)$. 
If both can be violated (i.e., $\node_1$ and $\node_4$ can be scheduled), we proceed by computing the reachable traces from $\node_1$ and $\node_4$, respectively, and so forth. During this process,  whenever we construct a reachable (partial)
\begin{wrapfigure}{r}{0.55\textwidth}
\begin{center}
\vspace{-4mm}
\includegraphics[width=0.55\textwidth]{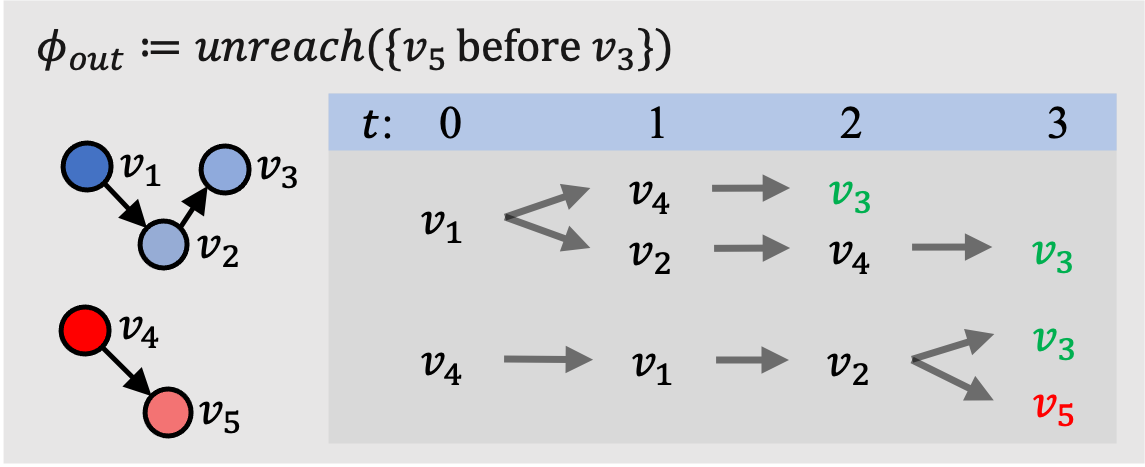}
\caption{A toy example for multi-step verification.}
\vspace{-4mm}
\label{fig:multi} 
\end{center}
\end{wrapfigure}
trace $t$, we check whether it matches the prefix of any traces in $T$. If it does not (e.g., $\node_1 \to \node_4 \to\node_3$), we do not need to continue growing that trace as the property must hold for any trace with this prefix. If the partial trace matches exactly with a trace in $T$ (e.g., $\node_4 \to \node_1\to\node_2\to\node_5$), then the post-condition is violated. Finally, if it is inconclusive, then we need to continue growing the trace.

\begin{wrapfigure}{r}{0.55\textwidth}
\begin{center}
\vspace{-0.4cm}
\includegraphics[width=0.53\textwidth]{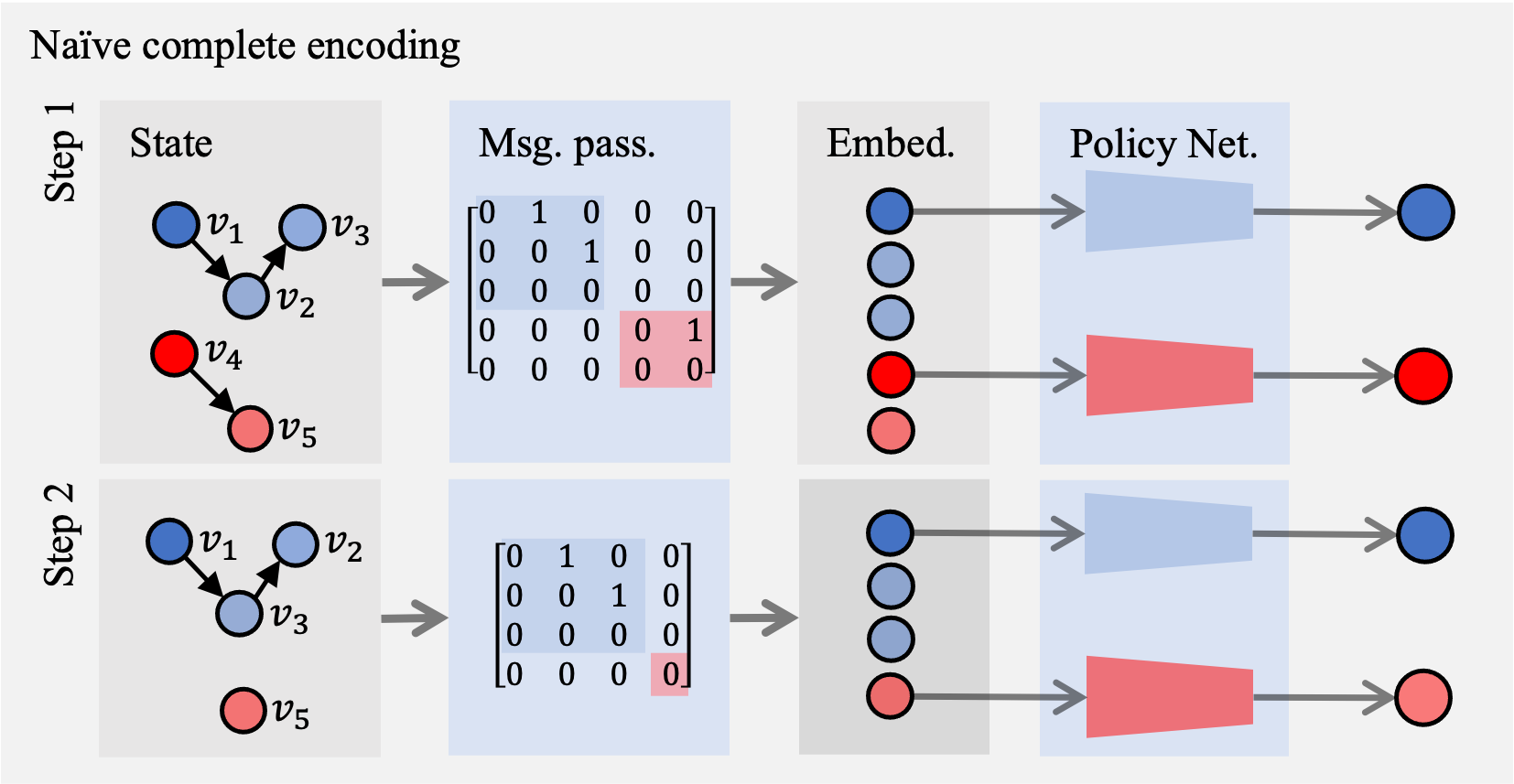}
\vspace{-0.5cm}
\caption{Example of a na\"ive encoding of two steps.}
\label{fig:multi-encoding} 
\end{center}
\vspace{-0.4cm}
\end{wrapfigure}
When computing the possible next actions from a partial trace, a complete encoding of the system requires unrolling, i.e., an encoding of the system over multiple steps. For example, as shown in Fig.~\ref{fig:multi-encoding}, suppose in the first step node $\node_4$ is scheduled; then, in the second step, the graph is updated with one removed node. A na\"ive encoding would re-encode the entire neural network for step 2 and invoke the single-step engine on this widened neural network to check whether $\out_1@2\geq \out_5@2$ and $\out_5@2\geq \out_1@2$ are respectively feasible under the additional constraint that $\out_4@1 \geq \out_1@1$. A na\"ive encoding that introduces a fresh encoding of the network for each time step quickly becomes intractable. On the other hand, an incomplete encoding that ignores the previous time steps and only encodes the current step can find spurious counter-examples. 

\begin{wrapfigure}{r}{0.55\textwidth}
\begin{center}
\vspace{-0.4cm}
\includegraphics[width=0.55\textwidth]{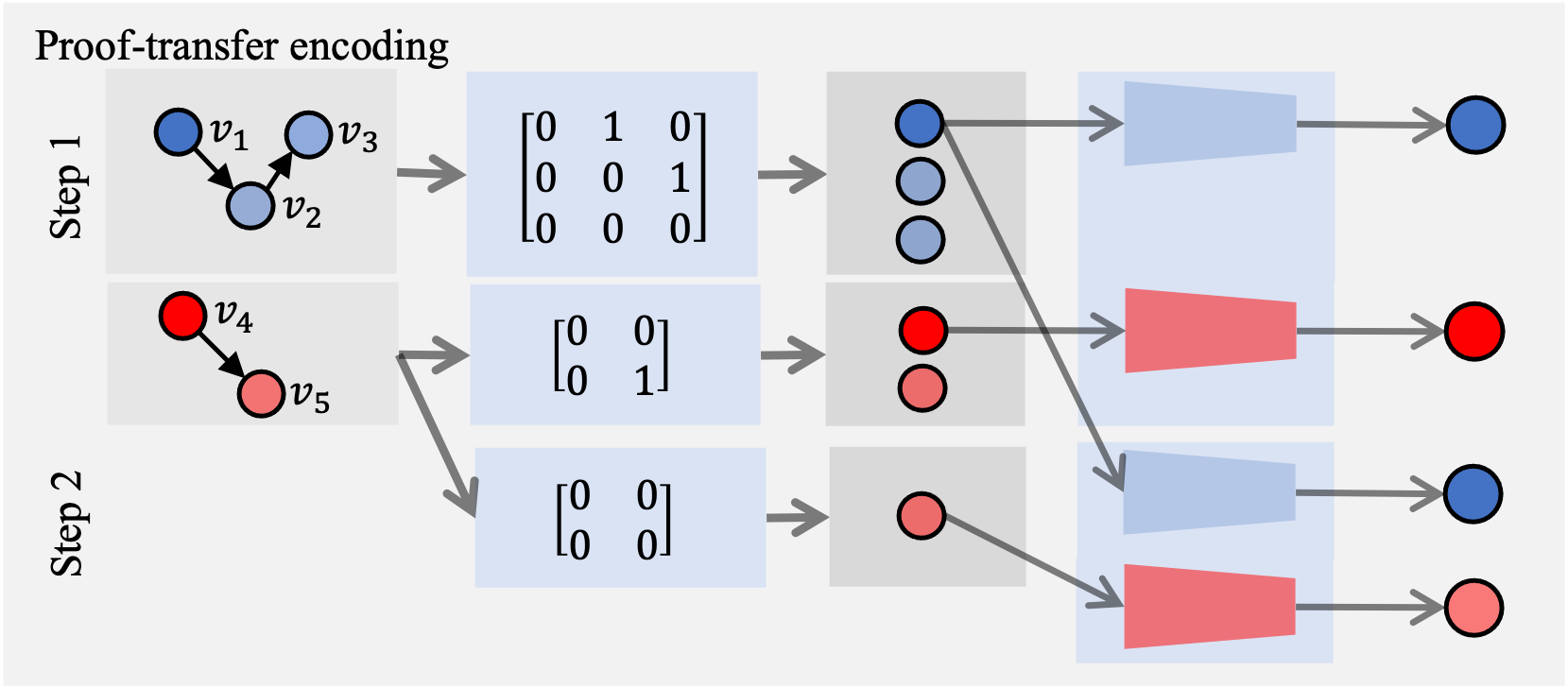}
\caption{Example of the proof-transfer encoding of the same two steps as in Fig.~\ref{fig:multi-encoding}.}
\label{fig:multi-encoding-pf} 
\end{center}
\vspace{-0.5cm}
\end{wrapfigure}
In order to obtain an efficient encoding that still tracks all constraints across time steps, we propose building a meta-network that captures the \emph{incremental} changes in the network structure across time steps and reusing the parts that do not change. We refer to this graph as a \emph{proof-transfer network}. Fig.~\ref{fig:multi-encoding-pf} shows the construction of the proof-transfer network for the same two steps as in Fig.~\ref{fig:multi-encoding}. In particular, the removal of $\node_4$ only affects nodes in the bottom job (i.e., $\node_5$) and the message passing steps for the other three nodes remain unchanged. This means that at each time step, we can reuse the GNN encoding for all but one job DAG (a disconnected component of the input graph), which results in significantly slower growth in the size of the encoding as the time step increases.

\section{Preliminaries}
\label{sec:prelim}

\subsection{Graphs and Graph Neural Networks \label{subsec:gcn}}
\begin{definition}[Graph] We define a graph $G$ with $N$ nodes as a tuple $(A, X)$ where $A$ is an $N\times N$ adjacency matrix and $X$ is the set of node attributes.
There is an edge from node $\node_i$ to $\node_j$ if $A_{ij} = 1$.
The neighborhood of a node $\node_i$ is defined as $N(\node_i) = \{\node_j |A_{ij} = 1\}$. 
The node attributes $\mathbf{X} \in \mathbb{R}^{N\times d}$ make up a node feature matrix with $x_i \in \mathbb{R}^d$ representing the feature vector of a node $v_i$. The input to a GNN is a graph $G = (A,X)$. In the case of job scheduling, an input graph can be disjoint, which is useful for modeling collections of jobs. 
\end{definition}

\paragraph{Graph Convolutional Networks.} 
Graph Neural Networks (GNNs) \cite{DuvenaudMABHAA15,NiepertAK:16,KipfW:17} are a class of deep neural networks for supervised learning on graphs. They have been successfully employed for node, edge, and graph classification in a variety of real-world applications including recommender systems \cite{YingHCEHL:18}, protein prediction \cite{ FoutBSB:17}, and malware detection \cite{WangCYLNTGLCY:19}. 
We focus on an important and widely used subset of GNNs called spatial graph convolutional networks (GCN)~\cite{wu2020comprehensive}.
Given an input graph, a GCN generates an embedding $\embed_i \in \mathbb{R}^d$ for each node $\node_i$ by aggregating its own features $\feat_i$ and all nodes reachable from $\node_i$ through a sequence of \emph{message passing} steps. In each message passing step, a node $\node$ whose neighbors have aggregated messages from all of their children  computes its own embedding as:
\[
\embed_i = g\Big[\sum_{v_j\in N(v_i)}{f(\embed_j)}\Big] + \feat_i    
\]
where $f(\cdot)$ and $g(\cdot)$ are feed-forward neural networks.%
\footnote{This definition of message passing covers common forms of GCNs as seen in \cite{khalil2017learning,KipfW:17,defferrard2016convolutional}.}
In practice, message passing is performed for a fixed number of rounds: in the first round, message-passing steps are performed on a pre-selected initial set of nodes,
and in a subsequent round, message-passing steps are performed on neighbors of the nodes that participated in the previous round.

A distinct characteristic of GCNs compared to previous GNN architectures is that there are no cyclic mutual dependencies in message passing, i.e., a GCN can be unrolled. 
However, unlike the neural networks targeted by existing verifiers, the networks obtained from unrolling are much deeper (100+ layers in practice).
Furthermore, the unrolled networks are not simple feed-forward networks but also contain complex residual connections. For simplicity, we assume that the unrolling results in a feed-forward network for the rest of this paper unless specified otherwise. Our approach for handling residual connections is described in Sec.~\ref{subsec:skip}.

\paragraph{Node regression/classification with GCN}
The embeddings $\embed_i$ after message passing can subsequently be used for different graph analytics tasks. We focus on node prediction tasks where the goal is to predict on nodes indexed by a set $\Theta$. Note that $\Theta$ might not include all the nodes in the input graph, as for a scheduler some of the nodes may not be eligible for scheduling at a given time step due to sequential dependencies.
The prediction $\out_i$ for a node $\node_i$ is computed by feeding the node embedding $\embed_i$ into a feed-forward network $\pred$. We allow the flexibility to augment the input to $\pred$ with an additional non-linear embedding $\embedMore$, computed from the node features and embeddings via a summary network $\summary$: $\embedMore= \summary(\{(\feat_i, \embed_i), \node_i\in G\})$. In short, $\out_i := \pred(\embed_i, \summary(\{(\feat_i, \embed_i), \node_i\in G\}))$.

\subsection{Verification of GNNs \label{subsec:spec}}
We consider verifying a specification $\phi$ over a GNN $F$, where $\phi$ has the form $\phi_{in} \to \phi_{out}$. 
The pre-condition $\phi_{in}$ defines a set of inputs to $F$, and the specification states that for each input point satisfying $\phi_{in}$, the post-condition $\phi_{out}$ must hold.
In this work, we limit the form of $\phi_{in}$ to describing a set of inputs where the graph structure is constant and the node features are defined by a conjunction of linear constraints. Formally, given an input graph $G = (A, X)$ and a GNN $F(\msg, \pred, \summary)$ with message-passing component $\msg$ parameterized by $g$ and $f$, prediction network $\pred$, and summary network $\summary$, the concrete behavior of $F$ can be expressed with the following set of constraints ($\feat_i$'s, $\embed_i$'s, $\embedMore$, and $\out_i$'s are interpreted as real-valued variables):
\begin{equation}\label{eq:graph_concrete}
    M := 
\begin{cases}
    \phi_{in}(\feat_1,\dots, \feat_N)\tikzmark{top}\\
    \embed_1, \ldots, \embed_N = \msg(\{\feat_i, i \in [1,N]\}, A)\tikzmark{right}\\
    \embedMore = \summary(\{\feat_i,\embed_i | \node_i \in G\})\tikzmark{bot}\\ 
    \bigwedge_{i\in\Theta} \out_i = \pred(\embed_i, \embedMore)\\
\end{cases}
\end{equation}
\AddNote{top}{bot}{right}{\scriptsize $:=M_{in}$}
We use $M_{in}$ to denote all outputs before the prediction network $\pred$ is applied. This will be used when we define the node abstraction scheme in \secref{sec:abtraction}.
The verification problem is to check whether $M\to \phi_{out}$ is valid, or equivalently, whether $M\land \neg\phi_{out}$ is unsatisfiable. Under the latter interpretation, the verification problem is to show that under the constraints $M$, the ``bad''states described by $\neg \phi_{out}$ cannot be reached.

\paragraph{Single-step post conditions.} We support single-step post conditions of the form $\phi_{simp} := \bigvee_j \psi_j(P)$, where $\psi_j(P)$ is an \emph{atomic} linear constraint over the output variables
, i.e., $(\sum_{\ell\in[1,N]} a_\ell \cdot \out_\ell) \bowtie b$, where $a_\ell$ and $b$ are constants and $\bowtie \in \{=, <, \leq\}$. We refer to this as a \emph{simple} post condition because the ``bad'' states can be described as a conjunction of linear constraints, and checking $\phi_{simp}$ amounts to showing that $M\land \bigwedge_j \neg\psi_j(P)$. Moreover, we support richer post conditions that state that multiple simple post conditions must hold simultaneously: $\phi_{complex} = \bigwedge_i \phi^i_{simp}$. 
We describe our novel abstraction to efficiently handle such post conditions in \secref{sec:abtraction}.

\subsection{GNN-based job scheduling \label{prelim:gnn-verify}}

As a proof of concept, we focus on verifying the state-of-the-art GNN-based job scheduler, Decima~\cite{decima}.
Formally, the input to the scheduler is a graph $G$ with $K$ disconnected components $G_1, \ldots, G_K$, representing the current state of the cluster. Each disconnected component $G_k$ is a job DAG. The output of Decima is a single score $\out_i$ for each frontier node $\node_i$. We use $\frontier(G)$ to denote the frontier nodes. The node to schedule next is $\argmax_{\node_i\in\frontier(G)}\out_i$. 
Our goal is to reason about Decima not only in the single-step setting, but also in the multi-step setting. which we describe next.

\paragraph{Multi-step setting.} We consider a setting with an initial state $G=(A, X)$, a GNN-based scheduler $F$, and a cluster environment $\trans(G, \node)\mapsto G'$, which takes the current state $G$ and a node $\node\in\frontier(G)$, and outputs a new state $G'$ representing the new state after $\node$ is scheduled. 
We restrict $\mathcal{T}$ to perform two types of graph updates: 1) removal/addition of nodes; and 2) affine transformations of features of existing nodes: $x_i' \mapsto A \vect{x_i} + \vect{b}$. This precisely captures the actual cluster environment for a wide range of initial states. For simplicity, we only consider static job profiles with no new incoming jobs (i.e., only node removal and no node addition). 
Given $G$, $F$, and $\trans$, we refer to a finite sequence of scheduling decisions $F(G), F(\trans(G, F(G))), \ldots$ as a \emph{trace}.

\paragraph{Multi-step post conditions.}
We consider multi-step post conditions defined in terms of trace reachability. 
The post condition $\phi_{out}$ is of the form $\unreach(T)$ where $T$ is a finite set of finite traces of possibly different lengths.
The specification states that none of the traces in $T$ are feasible if the initial state satisfies $\phi_{in}$. As we will see in \secref{sec:spec}, this form of specifications allows us to specify meaningful multi-step properties for the job-scheduler.

\section{Forward-backward analysis}
\label{sec:backward}

 \begin{wrapfigure}{r}{0.63\textwidth}
\begin{minipage}[t]{0.63\textwidth}
\vspace{-0.7cm}
\begin{algorithm}[H]
  \small
  \begin{algorithmic}[1]
    \State {\bfseries Input: neural network $f$ and specification $\phi_{in}\to \phi_{out}$} 
    \State {\bfseries Output: $\hold/\nothold/\unknown$}
    \Function{forwardBackwardAnalysis}{$\phi$}
    \State $\asFor \mapsto \top,\asBack \mapsto \top$
    \While {$\neg\converged()$}
    \State $\asFor \mapsto \forward(f,\{a^\hiddenlayer\}, \{a'^\hiddenlayer\}, \phi_{in})$ \label{lst:line:forward}
    \If{$\cond(a^{\numlayers}, \neg\phi_{out}) = \bot$} \label{lst:line:if_forward}
    \State {\bf return} \hold
    \EndIf
    \State {$bounds\mapsto []$}
    \For {$\hiddenlayer = K, K-1, \ldots, 2$} \label{lst:line:backward}
    \State {$bounds[\hiddenlayer] \mapsto \computeBounds(f, \hiddenlayer,\{a^\hiddenlayer\},\{a'^\hiddenlayer\}, \phi_{out})$}
    \State $\aBack^{\hiddenlayer} \mapsto \cond(\aFor^\hiddenlayer,  (x^{(\hiddenlayer)}\in bounds[\hiddenlayer])) \sqcap \aBack^\hiddenlayer$
    \If{$\aBack^{\hiddenlayer}= \bot$} \label{lst:line:if_backward}
    \State {\bf return} \hold
    \EndIf
    \EndFor
    \EndWhile
    \State {\bf return} $\checkSat(\phi_{in} \land (\underaccent{\hiddenlayer\in [1,\numlayers]}{\bigwedge} (\varphi^{(\hiddenlayer)}_{\text{non-linear}} \land \varphi^{(\hiddenlayer)}_{\text{linear}}))\land \neg\phi_{out})$
    \EndFunction
  \end{algorithmic}
  \caption{Forward-Backward Analysis.\label{alg:forbackward}}
\end{algorithm}
\vspace{-0.8cm}
\end{minipage}
\end{wrapfigure}
 
In this section, we describe our forward-backward abstraction refinement framework in more formal terms. We consider an $\numlayers$-layer feed-forward neural network $f\colon\mathbb{R}^{n_0} \to \mathbb{R}^{n_\numlayers}$, where $n_0$ and $n_\numlayers$ are the number of input and output neurons respectively. For an input $x$, we use $f_{\hiddenlayer}(x)$ and $f_{\hiddenlayer:\numlayers}(x)$ respectively to denote the network output at an intermediate layer $\hiddenlayer$ and all layers between $l$ and $\numlayers$. We consider the affine and the non-linear activation layers as separate. As illustrated in \secref{sec:overview}, our key insight is to iteratively refine the abstraction obtained from a forward abstract interpretation with an LP-based backward pass and vice-versa. \algoref{alg:forbackward} shows the pseudocode for our framework. Next we describe each step in greater detail and prove soundness properties.

\subsection{Forward abstract interpretation} The forward analysis in our framework is generic as it can be instantiated with any sub-polyhedral domain including the popular domains for neural network verification such as Boxes \cite{reluval}, Zonotope \cite{AI2,deepz}, DeepPoly \cite{deeppoly}, or Polyhedra \cite{singh2017fast}. 
We use $\sA_n$ to denote an abstract element overapproximating the concrete values of $n$ numerical variables. We require that the abstract domain is equipped with the following components:
\begin{itemize}
\item A  concretization function $\gamma_n\colon \sA_n\to\mathcal{P}(\mathbb{R}^n)$ that computes the set of concrete points from $\mathbb{R}^n$ represented by an abstract element $a\in \sA_n$.
\item A bottom element $\bot\in\sA_n$ such that $\gamma_n(\bot)=\emptyset$.

\item A sound abstraction function $\alpha_n\colon\mathcal{P}(\mathbb{R}^n)\to\sA_n$ that computes an abstract element overapproximating a region $\phi_{in}\in\mathcal{P}(\mathbb{R}^n)$ provided as input to  the neural network. We have $\phi_{in}\subseteq\gamma_n(\alpha_n(\phi_{in}))$ for all $\phi_{in}\in\mathcal{P}(\mathbb{R}^n)$. Note that we do not require $\alpha_n$ to compute the smallest abstraction for $\phi_{in}$, however, the input regions considered in our experiments can be exactly abstracted with common domains such as DeepPoly and Zonotope.
\item A bounding box function $\iota_n\colon \sA_n\to \sR^n\times\sR^n$, where $\gamma_n(a)\subseteq\prod_i[c_i,d_i]$ for $(\vc,\vd)=\iota_n(a)$ for all $a\in\sA_n$.

\item A sound conditional transformer $\cond(a, C)$ that for each $a\in\sA_n$ and set of linear constraints $C$ defined over $n$ real variables satisfies $\gamma_n(\cond (a,C))\subseteq \gamma_n(a)$, i.e., the conditional output does not contain more points than the input $a$. 
\item A sound abstract transformer $\forwardT \colon\sA_m\to\sA_n$ for each layerwise operation $\operation \colon\mathbb{R}_m\to\mathbb{R}_n$ (e.g., affine, non-linear activations, etc.) in the neural network.

\item A sound meet transformer $\sqcap$ for each $a,a'\in \sA_n$ satisfying $\gamma_n(a\sqcap a') \subseteq \gamma_n(a)$ and $\gamma_n(a\sqcap a') \subseteq \gamma_n(a')$.
\end{itemize}
 
 Our framework associates an abstract element $a^\hiddenlayer$ from the forward pass and an element $a'^\hiddenlayer$ from the backward pass with each layer $\hiddenlayer$. Both elements are constructed such that they individually overapproximate the set of concrete values at layer $\hiddenlayer$ with respect to $\phi_{in} \land \neg\phi_{out}$ at each iteration of the while loop in \algoref{alg:forbackward}. Initially, all elements are $\top$. 

\paragraph{Constructing forward transformers.}
The forward pass shown at Line~\ref{lst:line:forward} in \algoref{alg:forbackward} first constructs an abstraction of the input region $a^1=\alpha_{n_{0}}(\phi_{in})$. We propagate $a^1$ through the different layers of the network via a novel construction that creates new \emph{higher order} abstract transformers $\forwardbackwardT$ from existing $\forwardT$ for each operation $\operation$. For a layer $\hiddenlayer$, the construction of $\forwardbackwardT$ takes $\forwardT$ and the abstract elements $a^{\hiddenlayer-1}, a'^{\hiddenlayer-1}$ at layer $\hiddenlayer-1$ as inputs. Its output is the new forward abstract element at layer $\hiddenlayer$:
\begin{equation}\label{eq:forward_T}
    a^\hiddenlayer=\forwardbackwardT(\forwardT,a^{\hiddenlayer-1},a'^{\hiddenlayer-1})=\forwardT(a^{\hiddenlayer-1}) \sqcap \forwardT(a'^{\hiddenlayer-1})
\end{equation}
We next prove the soundness of our construction.

\begin{theorem}\label{thm:forwardback_theorem}
$\forwardbackwardT$ is a sound abstract transformer, that is, given an input that includes all concrete values at layer $l-1$ with respect to $\phi_{in} \land \neg\phi_{out}$, the transformer's output includes all concrete values possible at layer $l$ with respect to $\phi_{in} \land \neg\phi_{out}$.
\end{theorem}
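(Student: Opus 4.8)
This is a single-step soundness statement, so I would prove it directly (no induction on $\hiddenlayer$ is needed here), chaining the soundness of the base transformer $\forwardT$ with the soundness of the meet $\sqcap$. The one preliminary I need is a precise description of the ``concrete values at layer $\hiddenlayer$ with respect to $\phi_{in}\land\neg\phi_{out}$''. Write $\operation^{(\hiddenlayer)}$ for the (affine or activation) operation computed at layer $\hiddenlayer$, so that $f_{\hiddenlayer}=\operation^{(\hiddenlayer)}\circ f_{\hiddenlayer-1}$, and set (dimensions suppressed on $\gamma$)
\[
  R_{\hiddenlayer}\;=\;\{\, f_{\hiddenlayer}(x)\;:\; x\models\phi_{in}\ \text{and}\ f(x)\models\neg\phi_{out}\,\}.
\]
The first step is the observation that $R_{\hiddenlayer}\subseteq\operation^{(\hiddenlayer)}(R_{\hiddenlayer-1})$: for $y\in R_{\hiddenlayer}$ pick a witnessing input $x$, so $y=f_{\hiddenlayer}(x)=\operation^{(\hiddenlayer)}(f_{\hiddenlayer-1}(x))$; the same $x$ also witnesses $f_{\hiddenlayer-1}(x)\in R_{\hiddenlayer-1}$, because the side conditions $\phi_{in}$ and $\neg\phi_{out}$ constrain only the input and the output of $f$ and never mention the layer index. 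This is the only place the structure of the specification enters: $\neg\phi_{out}$ acts as a common ``downstream'' filter selecting the same set of inputs at every layer, so the bad-reachable set at layer $\hiddenlayer$ is the image under $\operation^{(\hiddenlayer)}$ of the one at layer $\hiddenlayer-1$ (only the $\subseteq$ direction is needed).

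Now assume the hypothesis of the theorem, namely that the inputs $a^{\hiddenlayer-1}$ and $a'^{\hiddenlayer-1}$ each include all concrete values at layer $\hiddenlayer-1$ w.r.t.\ $\phi_{in}\land\neg\phi_{out}$, i.e.\ $R_{\hiddenlayer-1}\subseteq\gamma(a^{\hiddenlayer-1})$ and $R_{\hiddenlayer-1}\subseteq\gamma(a'^{\hiddenlayer-1})$. Applying monotonicity of the image and then the soundness of the abstract transformer $\forwardT$ for $\operation^{(\hiddenlayer)}$ to each argument separately,
\[
  R_{\hiddenlayer}\subseteq\operation^{(\hiddenlayer)}(\gamma(a^{\hiddenlayer-1}))\subseteq\gamma(\forwardT(a^{\hiddenlayer-1})),\qquad
  R_{\hiddenlayer}\subseteq\operation^{(\hiddenlayer)}(\gamma(a'^{\hiddenlayer-1}))\subseteq\gamma(\forwardT(a'^{\hiddenlayer-1})).
\]
Hence $R_{\hiddenlayer}$ lies in the intersection $\gamma(\forwardT(a^{\hiddenlayer-1}))\cap\gamma(\forwardT(a'^{\hiddenlayer-1}))$, and by soundness of the meet transformer (that $\sqcap$ over-approximates intersection) this intersection is contained in $\gamma\big(\forwardT(a^{\hiddenlayer-1})\sqcap\forwardT(a'^{\hiddenlayer-1})\big)=\gamma(a^{\hiddenlayer})$. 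Thus $a^{\hiddenlayer}$ includes all concrete values at layer $\hiddenlayer$ w.r.t.\ $\phi_{in}\land\neg\phi_{out}$, which is the claim.

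The proof is short; what needs care rather than ingenuity is (i) stating the commutation fact $R_{\hiddenlayer}\subseteq\operation^{(\hiddenlayer)}(R_{\hiddenlayer-1})$ cleanly, since this is exactly what licenses the two independent applications of $\forwardT$, and (ii) the direction of meet-soundness invoked at the end. The list of domain requirements in the excerpt records $\gamma_n(a\sqcap a')\subseteq\gamma_n(a)\cap\gamma_n(a')$ (a precision property), whereas the argument uses the converse inclusion $\gamma_n(a)\cap\gamma_n(a')\subseteq\gamma_n(a\sqcap a')$ --- the usual ``sound over-approximation of intersection'' requirement, which holds for all the instantiated domains (Boxes, Zonotope, DeepPoly, Polyhedra); I would add it explicitly to the requirements and invoke it here, and I expect that to be the one place the argument could be objected to. Finally, as a corollary, iterating this one-step result from $\hiddenlayer=2$ to $\numlayers$, with base case $R_1\subseteq\gamma(\alpha_{n_0}(\phi_{in}))$ (soundness of $\alpha$) and $R_1\subseteq\gamma(\top)$, shows that the whole forward pass on Line~\ref{lst:line:forward} over-approximates every $R_{\hiddenlayer}$, which is how the result is used downstream in \algoref{alg:forbackward}.
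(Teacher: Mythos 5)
Your proof is correct and follows essentially the same route as the paper's: show the bad-reachable set at layer $\hiddenlayer-1$ is contained in both $\gamma_n(a^{\hiddenlayer-1})$ and $\gamma_n(a'^{\hiddenlayer-1})$, push it through $\forwardT$ twice by soundness, and close with the meet. Your side remark is also accurate: the paper's own proof invokes the inclusion $\gamma_n(a)\cap\gamma_n(a')\subseteq\gamma_n(a\sqcap a')$ even though the listed domain requirement for $\sqcap$ only states the opposite (precision) direction, so the over-approximation property of the meet is indeed used implicitly rather than stated.
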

\begin{proof}
\sloppy
Let $S^{\hiddenlayer-1}$ and $S^{\hiddenlayer}$ respectively be the set of concrete values at layer $\hiddenlayer-1$ and $\hiddenlayer$ with respect to $\phi_{in}\land \neg\phi_{out}$. For sound abstractions $a^{\hiddenlayer-1}, a'^{\hiddenlayer-1}$ , we have $S^{\hiddenlayer-1} \subseteq \gamma_n(a^{\hiddenlayer-1})$ and $S^{\hiddenlayer-1} \subseteq \gamma_n(a'^{\hiddenlayer-1})$. Since $\forwardT$ is sound, $S^{\hiddenlayer} \subseteq \gamma_n(\forwardT(a^{\hiddenlayer-1}))$ and $S^{\hiddenlayer} \subseteq \gamma_n(\forwardT(a'^{\hiddenlayer-1}))$. Thus  $S^{\hiddenlayer}\subseteq \gamma_n(\forwardT(a^{\hiddenlayer-1})) \cap \gamma_n(\forwardT(a'^{\hiddenlayer-1})) \subseteq \gamma_n(\forwardT(a^{\hiddenlayer-1}) \sqcap \forwardT(a'^{\hiddenlayer-1}))=\gamma_n(\forwardbackwardT(\forwardT,a^{\hiddenlayer-1},a'^{\hiddenlayer-1}))$
\end{proof}

\begin{corollary}
The output of $\forwardbackwardT$ is included in the output of $\forwardT$ for inputs soundly abstracting the concrete values with respect to $\phi_{in} \land \neg\phi_{out}$. That is, the output of $\forwardbackwardT$ is at least as precise as $\forwardT$.
\end{corollary}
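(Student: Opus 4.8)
The plan is to read off the corollary as an immediate consequence of Theorem~\ref{thm:forwardback_theorem} together with the soundness axiom for the meet transformer. Unrolling the definition in Equation~\eqref{eq:forward_T}, the output of the higher-order transformer at layer $\hiddenlayer$ is literally a meet, $\forwardbackwardT(\forwardT,a^{\hiddenlayer-1},a'^{\hiddenlayer-1})=\forwardT(a^{\hiddenlayer-1}) \sqcap \forwardT(a'^{\hiddenlayer-1})$. So it suffices to instantiate the required property $\gamma_n(a\sqcap a')\subseteq\gamma_n(a)$ of $\sqcap$ with $a := \forwardT(a^{\hiddenlayer-1})$ and $a' := \forwardT(a'^{\hiddenlayer-1})$, which directly yields $\gamma_n(\forwardbackwardT(\forwardT,a^{\hiddenlayer-1},a'^{\hiddenlayer-1}))\subseteq\gamma_n(\forwardT(a^{\hiddenlayer-1}))$.

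Next I would make explicit how this inclusion formalizes the informal phrase ``at least as precise.'' I interpret precision concretely: an abstract element $b$ is at least as precise as $b'$ when $\gamma_n(b)\subseteq\gamma_n(b')$, i.e., $b$ denotes a tighter over-approximation. Under this reading the inclusion above is exactly the claim. I would also observe that both sides are sound over-approximations of the concrete set $S^\hiddenlayer$ of values reachable at layer $\hiddenlayer$ with respect to $\phi_{in}\land\neg\phi_{out}$ — the left-hand side by Theorem~\ref{thm:forwardback_theorem}, and the right-hand side by soundness of $\forwardT$ applied to the sound forward element $a^{\hiddenlayer-1}$ — so the corollary states that $\forwardbackwardT$ never loses, and potentially gains, precision relative to the plain forward transformer that a forward-only analysis would use.

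There is essentially no obstacle here; the only point that warrants a sentence of care is the choice of baseline for the comparison. The natural baseline is $\forwardT(a^{\hiddenlayer-1})$, the element a forward-only analysis would compute at layer $\hiddenlayer$ from the forward element at layer $\hiddenlayer-1$; the backward element $a'^{\hiddenlayer-1}$ can only shrink this through the meet, never enlarge it. If instead one wanted the stronger lattice-level statement $\forwardbackwardT(\forwardT,a^{\hiddenlayer-1},a'^{\hiddenlayer-1})\sqsubseteq\forwardT(a^{\hiddenlayer-1})$, one would need $\sqcap$ to be a lower bound in the abstract order, which is not among the axioms listed for the domain interface; phrasing everything through $\gamma_n$ sidesteps that and keeps the argument entirely within the stated assumptions. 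Finally I would note that the argument is uniform in the operation $\operation$ and the domain, hence holds at every layer $\hiddenlayer\in[2,\numlayers]$ of the network.
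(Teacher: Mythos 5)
Your proof is correct and follows the same (essentially only) route the paper intends: the corollary is immediate from the definition $\forwardbackwardT(\forwardT,a^{\hiddenlayer-1},a'^{\hiddenlayer-1})=\forwardT(a^{\hiddenlayer-1}) \sqcap \forwardT(a'^{\hiddenlayer-1})$ together with the soundness axiom $\gamma_n(a\sqcap a')\subseteq\gamma_n(a)$ for the meet transformer. Your remark that only the concretization-level inclusion (not the abstract lattice order) follows from the stated domain interface is a valid and careful reading of the claim.
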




\eqref{eq:forward_T} invokes the original transformer $\forwardT$ twice along with the $\sqcap$ transformer. For most popular domains, $\sqcap$ is asymptotically cheaper than $\forward$, therefore the asymptotic cost of $\forwardbackwardT$ is same as $\forwardT$.
Since $\forwardbackwardT$ obtained for each layer is sound, the forward propagation produces  $\gamma_{n_\numlayers}(a^\numlayers) \supseteq f(\phi_{in}) \supseteq f(\phi_{in}) \land \neg\phi_{out} $ at the output layer. In this section, we assume that the set of bad outputs $\neg\phi_{out}$ can be described as a conjunction of linear constraints. In the next section, we will consider a more general class of $\neg\phi_{out}$ containing disjunctions also. For our restriction here, we can compute $\cond(a^\numlayers, \neg\phi_{out})$, and if it is equal to $\bot$, we have proved that the specification must hold since $\gamma_{n_\numlayers}(\cond(a^\numlayers,\neg\phi_{out})) \supseteq f(\phi_{in}) \land \neg\phi_{out}$ (Line~\ref{lst:line:if_forward} in \algoref{alg:forbackward}).

\subsection{Backward abstract interpretation} The forward propagation ignores the post condition $\phi_{out}$ during the construction of the abstract element $a^\numlayers$. A refinement of the abstraction can be obtained by taking $\phi_{out}$ into consideration. The backward pass  shown at Line~\ref{lst:line:backward} in \algoref{alg:forbackward} is designed to accomplish this refinement.
%

The backward pass first updates the backward element at the output layer $a'_{\numlayers} = \cond(a_{\numlayers},\neg\phi_{out})\sqcap a'_{\numlayers}$. It can be seen that this update is sound.
The backward pass at a non-output layer $k$ performs two steps. First, we use Linear Programming (LP) to compute refined lower and upper bounds $\vect{c^{\hiddenlayer}},\vect{d^{\hiddenlayer}}$, taking into consideration both $\phi_{out}$ and a linear overapproximation of the network behavior from layer $\hiddenlayer$ to the output layer with respect to $\phi_{in} \land \neg \phi_{out}$. 
We then refine the backward abstract element using the domain conditional transformer $\cond$ and the linear constraints $c^{\hiddenlayer}_i\le x^{\hiddenlayer}_i\le d^{\hiddenlayer}_i$ for each neuron $x_i^\hiddenlayer$ in layer $\hiddenlayer>1$, i.e., we compute

\begin{equation}\label{eq:backward}
a'^\hiddenlayer=\cond(a^\hiddenlayer,(\bigwedge_i c_i\le x_i^{\hiddenlayer}\le d_i)) \sqcap a'^\hiddenlayer.
\end{equation}

We now describe our linear encoding of the network for computing the refined interval bounds $\vect{c^{\hiddenlayer}},\vect{d^{\hiddenlayer}}$ of the neurons in layer $\hiddenlayer$. We start with an empty set of constraints and iteratively collect linear constraints $\varphi^{k}(\vect{x^{k-1}},\vect{x^{k}})$ overapproximating the network behavior with respect to $\phi_{in} \land \neg \phi_{out}$ for each layer $k>\hiddenlayer$. If $k$ is an affine layer, then we add 
constraints of the form $x^{k} =  A^{k-1} \vect{x}^{k-1} + b^{k-1}$, where $A^{k-1},b^{k-1} \in \mathbb{R} $ are the learned weights and biases respectively of the affine layer. If $k$ is an activation layer, then we add the constraints from a linear approximation~\cite{deeppoly} of the layerwise activation $\vect{x}^{k}=\sigma(\vect{x}^{k-1})$. The linear approximation can be obtained directly using the corresponding domain transformer $\forwardT$ or a more precise relaxation based on the constraints from the forward abstract element. If $k$ is the output layer, then we add the constraint $\neg \phi_{out}$ to our encoding. For each $k$, we also add the constraints $C^{k}$ obtained from the bounding box $\iota_n$ of the abstract element $a^{k}$ at layer $k$.

Let $\varphi^{\hiddenlayer}$ denote the conjunction of constraints collected above. To obtain the refined lower and upper bounds for layer $\hiddenlayer$ with $m$ neurons, we need to solve the following two LPs for each neuron $x_i^{\hiddenlayer}$:
\begin{equation}\label{eq:lps}
c^{\hiddenlayer}_i = \min_{\substack{\vect{x^{\hiddenlayer}},\ldots,\vect{x^{\numlayers}}\\
\text{s.t. }\varphi^{\hiddenlayer}}} x^{\hiddenlayer}_i,
d^{\hiddenlayer}_i = \max_{\substack{\vect{x^{\hiddenlayer}},\ldots,\vect{x^{\numlayers}}\\
\text{s.t. }\varphi^{\hiddenlayer}}} x^{\hiddenlayer}_i\\
\end{equation}

While we refine all neurons in the backward step to gain maximum precision, it is possible to tune the cost and the precision of the backward pass by selectively refining only a subset of the neurons. 
Note that we do not refine the backward element at the input layer, and it is always $\top$ which preserves soundness.

\begin{theorem}\label{thm:LP_refinement}
For each layer $\hiddenlayer$, the refined bounds $\vect{c^{\hiddenlayer}},\vect{d^{\hiddenlayer}}$ computed by the LP overapproximate the network output with respect to $\phi_{in} \land \neg\phi_{out}$.
\end{theorem}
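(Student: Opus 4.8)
The plan is to prove the inclusion $\{\,f_\hiddenlayer(x)\mid x\models\phi_{in},\ f(x)\models\neg\phi_{out}\,\}\subseteq\prod_i[c^\hiddenlayer_i,d^\hiddenlayer_i]$ pointwise. Fix any concrete input $x_0$ with $x_0\models\phi_{in}$ and $f(x_0)\models\neg\phi_{out}$, and let $x^\hiddenlayer_0,\dots,x^\numlayers_0$ be the true activation vectors it induces, so $x^k_0=f_k(x_0)$ for $k\ge\hiddenlayer$; by definition $x^k_0$ belongs to $S^k$, the set of concrete values at layer $k$ with respect to $\phi_{in}\land\neg\phi_{out}$ (the notation of the proof of Theorem~\ref{thm:forwardback_theorem}). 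The heart of the argument is the claim that the tuple $\xi=(x^\hiddenlayer_0,\dots,x^\numlayers_0)$ satisfies every conjunct of $\varphi^\hiddenlayer$, i.e.\ it is feasible for both LPs in \eqref{eq:lps}. Granting this, $c^\hiddenlayer_i$ and $d^\hiddenlayer_i$ are the minimum and maximum of $x^\hiddenlayer_i$ over a feasible region that contains $\xi$, hence $c^\hiddenlayer_i\le x^\hiddenlayer_{0,i}\le d^\hiddenlayer_i$; since $x_0$ was arbitrary the inclusion follows. (If no such $x_0$ exists, both LPs are infeasible and $S^\hiddenlayer=\emptyset$, so the statement is vacuous.)

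It then remains to verify feasibility conjunct by conjunct. The affine constraints $x^k=A^{k-1}\vect{x}^{k-1}+b^{k-1}$ are satisfied because they are literally the defining equations of $f$ on the layers $k>\hiddenlayer$. For an activation layer $k$, $\varphi^\hiddenlayer$ contains a sound linear relaxation of $\vect{x}^k=\sigma(\vect{x}^{k-1})$ that over-approximates the graph of $\sigma$ on the bounding box of $a^{k-1}$; since $x^{k-1}_0\in S^{k-1}\subseteq\gamma_n(a^{k-1})$ and $\gamma_n(a^{k-1})$ is contained in that bounding box, the pair $(x^{k-1}_0,x^k_0)$ with $x^k_0=\sigma(x^{k-1}_0)$ satisfies the relaxed constraints — this holds whether the relaxation is produced by the plain domain transformer $\forwardT$ or by the tighter variant derived from the forward element, as long as the latter is sound on that box. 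The bounding-box constraints $C^k$ hold because $x^k_0\in S^k\subseteq\gamma_n(a^k)$, which is contained in the box returned by $\iota_n$. Finally, $\neg\phi_{out}$ is satisfied because $x^\numlayers_0=f(x_0)\models\neg\phi_{out}$ by the choice of $x_0$.

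The one nontrivial ingredient — and where I expect the real work — is establishing the side condition used above, namely that the forward elements $a^k$ (and, where the refined activation relaxations consult them, the backward elements $a'^k$) over-approximate $S^k$, not merely $f_k(\phi_{in})$. This is exactly what Theorem~\ref{thm:forwardback_theorem} supplies for the forward pass, modulo an inductive invariant that the backward elements are themselves sound for $S^k$: on the first iteration $a'^k=\top$ is trivially sound and $a^k$ over-approximates $f_k(\phi_{in})\supseteq S^k$; on later iterations each update of $a'^k$ in \eqref{eq:backward} only applies $\cond$ and $\sqcap$, which shrink the concretization, while the box constraints it injects still contain $S^k$ by the present theorem applied one layer higher. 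I would therefore isolate this bookkeeping as a single loop invariant of Algorithm~\ref{alg:forbackward} — ``$\gamma_n(a^k)\supseteq S^k$ and $\gamma_n(a'^k)\supseteq S^k$ for every layer $k$, at every point in the execution'' — prove it by simultaneous induction over iterations and layers using Theorem~\ref{thm:forwardback_theorem}, and then the statement of Theorem~\ref{thm:LP_refinement} reduces to the mechanical feasibility check of the second paragraph.
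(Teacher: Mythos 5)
Your proposal is correct and follows essentially the same route as the paper's proof: both verify, layer by layer from the output down to layer $\hiddenlayer$, that every conjunct of $\varphi^{\hiddenlayer}$ (the affine equalities, the sound activation relaxations, the bounding-box constraints $C^{k}$, and $\neg\phi_{out}$) overapproximates the corresponding piece of the network's behavior restricted to $\phi_{in}\land\neg\phi_{out}$, so the LP's feasible region contains the true activation tuple and its optima bound layer $\hiddenlayer$. Your pointwise feasibility framing and, in particular, the explicit loop invariant that $\gamma_n(a^{k})\supseteq S^{k}$ and $\gamma_n(a'^{k})\supseteq S^{k}$ throughout the iterations are actually more careful than the paper's argument, which asserts the soundness of the box constraints $C^{k}$ without tracking that invariant across forward--backward rounds.
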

\begin{proof}
We show that the constraints in the LP overapproximate the network behavior from layer $\hiddenlayer$ to the output which guarantees the soundness of the refined bounds. The proof is by induction. For the base case, the LP constraints at the output layer satisfy $\neg\phi_{out} \land C^\numlayers \supseteq f(\phi_{in}) \land \neg \phi_{out}$. For the inductive step, suppose $\underaccent{k\in [\hiddenlayer + u+ 1,\numlayers]}{\bigwedge} (\varphi^{k}(\vect{x^{k-1}},\vect{x^{k}}) \land C^{k}) \supseteq f_{\hiddenlayer+u+1:\numlayers}(\phi_{in}) \land \neg \phi_{out}$ holds with $u \geq 0$. For the layer $\hiddenlayer+u$, the LP will add the conjunction satisfying  $\varphi^{k}(\vect{x^{\hiddenlayer+u-1}},\vect{x^{(\hiddenlayer+u)}}) \land C^{\hiddenlayer+u}\supseteq f_{\hiddenlayer+u:\hiddenlayer+u+1}(\phi_{in}) \land \neg \phi_{out}$ (since both the box constraints from $C^{\hiddenlayer+u}$ and the constraints from $\varphi^{k}(\vect{x^{(\hiddenlayer+u-1)}},\vect{x^{(\hiddenlayer+u)}})$ overapproximate $f_{\hiddenlayer+u:\hiddenlayer+u+1}(\phi_{in})$). Therefore, $ \underaccent{k\in [\hiddenlayer + u,\numlayers]}{\bigwedge} (\varphi^{k}(\vect{x^{k-1}},\vect{x^{k}}) \land C^{k}) \supseteq f_{\hiddenlayer+u:\numlayers}(\phi_{in}) \land \neg \phi_{out}$ and the induction holds.  
\end{proof}
Since the bounds computed by the LP are sound, the update \eqref{eq:backward} is sound at each intermediate layer $\hiddenlayer$. Therefore the backward pass computes a sound approximation at each iteration of the while loop of \algoref{alg:forbackward}. Due to the soundness of the backward pass, if we find that a backward element at a layer $\hiddenlayer>1$ became $\bot$ after applying \eqref{eq:backward}, then we can soundly return \hold\  
(Line~\ref{lst:line:if_backward} of \algoref{alg:forbackward}).

\begin{theorem}\label{thm:backward_refine}
For a layer $\hiddenlayer$, let $a'^{\hiddenlayer}$ and $a'^{\hiddenlayer}_{\text{new}}$ be the backward abstract elements before and after the refinement respectively. Then, $\gamma_n(a'^{\hiddenlayer}_{\text{new}}) \subseteq \gamma_n(a'^{\hiddenlayer})$ holds, i.e., the backward pass does not make the backward abstraction less precise. 
\end{theorem}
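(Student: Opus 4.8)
The plan is to reduce the statement immediately to the postulated soundness of the meet transformer $\sqcap$. By definition of the refinement (Eq.~\eqref{eq:backward}), the updated element has the shape
\[
a'^{\hiddenlayer}_{\text{new}} = b \sqcap a'^{\hiddenlayer}, \qquad \text{where } b := \cond\bigl(a^{\hiddenlayer},\textstyle\bigwedge_i c_i \le x_i^{\hiddenlayer} \le d_i\bigr).
\]
So the only thing to exploit is that $a'^{\hiddenlayer}_{\text{new}}$ is a meet in which $a'^{\hiddenlayer}$ appears as one of the two operands.

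First I would recall the defining property of $\sqcap$ listed among the abstract-domain requirements: for all $a, a'' \in \sA_n$ we have $\gamma_n(a \sqcap a'') \subseteq \gamma_n(a)$ and $\gamma_n(a \sqcap a'') \subseteq \gamma_n(a'')$. Instantiating this with $a := b$ and $a'' := a'^{\hiddenlayer}$ yields $\gamma_n(b \sqcap a'^{\hiddenlayer}) \subseteq \gamma_n(a'^{\hiddenlayer})$, i.e., $\gamma_n(a'^{\hiddenlayer}_{\text{new}}) \subseteq \gamma_n(a'^{\hiddenlayer})$, which is exactly the claim. Note that no property of $\cond$, of the LP-derived bounds $\vect{c^{\hiddenlayer}},\vect{d^{\hiddenlayer}}$, or of the forward element $a^{\hiddenlayer}$ is needed: the monotonicity of the refinement is purely structural, a consequence of the fact that each step intersects the current backward element with something.

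Second I would observe that the argument is uniform across layers: the refinement of the backward element at the output layer has the form $a'^{\numlayers}_{\text{new}} = \cond(a^{\numlayers},\neg\phi_{out}) \sqcap a'^{\numlayers}$, again a meet with $a'^{\numlayers}$, so $\gamma_{n_\numlayers}(a'^{\numlayers}_{\text{new}}) \subseteq \gamma_{n_\numlayers}(a'^{\numlayers})$ by the same instantiation of the meet-soundness property. Hence every refinement performed by the backward pass is non-increasing with respect to $\gamma_n$, which is precisely the meaning of ``does not make the backward abstraction less precise''; the same reasoning also shows the monotonicity persists when the update is re-applied across iterations of the while loop in \algoref{alg:forbackward}.

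I do not expect a genuine obstacle here, since the theorem is essentially a restatement of the soundness of $\sqcap$ specialized to the shape of Eq.~\eqref{eq:backward}. The only point requiring care is to avoid conflating this monotonicity claim with the (separate) soundness claim that each $a'^{\hiddenlayer}$ still overapproximates the concrete set at layer $\hiddenlayer$ with respect to $\phi_{in} \land \neg\phi_{out}$ — that is handled by Theorem~\ref{thm:LP_refinement} together with the soundness of $\cond$, and is orthogonal to what is proved here.
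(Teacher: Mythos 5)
Your proposal is correct and follows essentially the same route as the paper, whose proof is the one-line observation that the claim ``follows from the definition of $\cond$ and $\sqcap$''; you have simply spelled out the instantiation of the meet-soundness property $\gamma_n(a\sqcap a')\subseteq\gamma_n(a')$ with the updated element written as $b\sqcap a'^{\hiddenlayer}$. Your further remark that only the $\sqcap$ property (and no property of $\cond$ or of the LP bounds) is actually needed is a valid and slightly sharper reading of the same argument.
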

\begin{proof}
Follows from the definition of $\cond$ and $\sqcap$.
\end{proof}

\begin{corollary}
Let $q$ be the total number of neurons in the network $f$, then the complexity of computing \eqref{eq:lps} for all neurons in layers $1<k\leq K$ is $O(q \cdot LP(p,q) )$ where $LP(p,q)$ is the cost of solving an LP with $p$ constraints defined over $q$ variables.
\end{corollary}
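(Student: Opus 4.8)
The plan is a direct accounting over the work performed in the backward pass of \algoref{alg:forbackward}. Write $n_k$ for the number of neurons in layer $k$, so the total neuron count is $q=\sum_{k} n_k$, and take $p$ to be the largest number of linear constraints among the encodings $\varphi^{\hiddenlayer}$ used in \eqref{eq:lps} (ranging over the refined layers $1<\hiddenlayer\le\numlayers$). First I would bound a single LP solve. Fix such a layer $\hiddenlayer$. By construction, $\varphi^{\hiddenlayer}$ is a conjunction collected over layers $k\ge\hiddenlayer$: affine equalities $x^{k}=A^{k-1}\vect{x}^{k-1}+b^{k-1}$ for affine layers, a constant number of inequalities per neuron from the sound linear relaxation of each activation layer $\vect{x}^{k}=\sigma(\vect{x}^{k-1})$, the box constraints $C^{k}$ from $\iota_n(a^{k})$, and, at the output layer, the (here conjunctive) constraint $\neg\phi_{out}$. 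Its number of constraints is therefore at most $p$ by the definition of $p$, and its variables are exactly $\vect{x^{\hiddenlayer}},\ldots,\vect{x^{\numlayers}}$, of which there are $\sum_{k\ge\hiddenlayer} n_k\le q$. Hence each of the two LPs in \eqref{eq:lps} for a neuron of layer $\hiddenlayer$ is an LP with at most $p$ constraints over at most $q$ variables, costing at most $LP(p,q)$, using that $LP(\cdot,\cdot)$ is non-decreasing in both arguments, as holds for standard LP algorithms.

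Next I would count the solves. Equation \eqref{eq:lps} solves exactly two LPs per neuron of layer $\hiddenlayer$, one for the lower bound $c^{\hiddenlayer}_i$ and one for the upper bound $d^{\hiddenlayer}_i$; so layer $\hiddenlayer$ contributes $2 n_{\hiddenlayer}\cdot LP(p,q)$. Summing over the layers that are actually refined, $\hiddenlayer=2,\dots,\numlayers$ (the input layer is never refined, as noted after \eqref{eq:lps}), the total LP cost is $\sum_{\hiddenlayer=2}^{\numlayers} 2 n_{\hiddenlayer}\cdot LP(p,q)\le 2\bigl(\sum_{k} n_k\bigr)LP(p,q)=2q\cdot LP(p,q)=O(q\cdot LP(p,q))$.

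Finally I would argue the auxiliary work is dominated: the encodings $\varphi^{\hiddenlayer}$ are built incrementally as $\hiddenlayer$ decreases, so over the whole pass each of the $O(q)$ layers is appended once, with $O(q)$ work per append, for $O(q^2)$ total bookkeeping, which is absorbed into $q\cdot LP(p,q)$ since even reading out one LP solution over $\Theta(q)$ variables costs $\Omega(q)$. The one point I would be careful about is that the LP dimensions grow as $\hiddenlayer$ moves toward the input (more downstream layers are included), so different layers could in principle demand different costs; I avoid a layer-by-layer estimate by taking $p$ (and, for the variable count, $q$ itself) to be the uniform worst case over all LPs solved and invoking monotonicity of $LP$, which gives exactly the single-pair bound asserted by the corollary. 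I do not expect any genuine obstacle here — the only subtlety is this uniformization of the per-LP size bound.
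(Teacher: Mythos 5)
Your proof is correct and follows the same straightforward counting argument the paper intends (the corollary is stated without an explicit proof precisely because it reduces to this accounting): two LPs per neuron, each with at most $p$ constraints over at most $q$ variables, summed over all $q$ neurons in the non-input layers. Your explicit care about uniformizing the per-LP size bound via monotonicity of $LP(\cdot,\cdot)$ and about the dominated bookkeeping cost is a reasonable tightening of what the paper leaves implicit, but it does not constitute a different route.
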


\subsection{Iterative forward-backward refinement} 

We perform an iterative refinement procedure in \algoref{alg:forbackward} by repeatedly performing forward and backward analysis till a stopping criteria is met. Each forward-backward pass results in elements at least as precise as in previous iterations due to Theorem~\ref{thm:backward_refine} and \eqref{eq:forward_T}.
 For arbitrary choices of abstract elements and transformers allowed by our framework, the forward-backward analysis is not guaranteed to converge to a fixed point: another iteration of the while loop does not refine the analysis results. In practice, for piecewise-linear activations like Leaky ReLU used in Decima, we terminate the analysis when a refinement round does not fix the phase of any activations. This is usually achieved within 6 iterations.

If, after the forward-backward analysis has finished, no abstract element has been refined to $\bot$, we resort to a non-linear encoding of the network outputs $\underaccent{\hiddenlayer \in [1,\numlayers]}{\bigwedge} \varphi^{\hiddenlayer}_{\text{non-linear}}$ combined with linear constraints $\underaccent{\hiddenlayer \in [1,\numlayers]}{\bigwedge} \varphi^{\hiddenlayer}_{\text{linear}}$ from our final abstract elements. For example, we use a MILP encoding for piecewise linear activations such as Leaky ReLU used in Decima which results in complete verification: the property is satisfied if and only if the set of constraints $\phi_{in} \land (\underaccent{\hiddenlayer\in [1,\numlayers]}{\bigwedge} (\varphi^{\hiddenlayer}_{\text{non-linear}} \land \varphi^{\hiddenlayer}_{\text{linear}}))\land \neg\phi_{out}$ is unsatisfiable. The non-linear solver benefits in speed from a reduction in the search area/branches due to the precise constraints added from our refined abstraction.

\begin{theorem}
\algoref{alg:forbackward} is sound, i.e., $\phi_{in} \land \neg \phi_{out}$ is unsatisfiable when the algorithm returns \hold.
\end{theorem}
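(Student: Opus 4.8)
The plan is a case analysis over the three program points at which \algoref{alg:forbackward} can return \hold: (i) Line~\ref{lst:line:if_forward}, when the forward pass yields $\cond(a^{\numlayers},\neg\phi_{out})=\bot$; (ii) Line~\ref{lst:line:if_backward}, when some backward element $\aBack^{\hiddenlayer}$ collapses to $\bot$; and (iii) the final $\checkSat$ call, when the non-linear solver reports unsatisfiability. In each case it suffices to exhibit a sound over-approximation, at the relevant layer, of the set $S$ of values reachable in $f$ from some input satisfying $\phi_{in}$ that additionally satisfy $\neg\phi_{out}$, and then to observe that this over-approximation is empty, whence $S=\emptyset$ and $\phi_{in}\land\neg\phi_{out}$ is unsatisfiable.

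The first step I would carry out is to state and prove a single loop invariant: at every iteration of the while loop and for every layer $\hiddenlayer$, both $\aFor^{\hiddenlayer}$ and $\aBack^{\hiddenlayer}$ over-approximate the concrete values at layer $\hiddenlayer$ with respect to $\phi_{in}\land\neg\phi_{out}$. For $\aFor^{\hiddenlayer}$ this follows by induction on $\hiddenlayer$: the base case is soundness of $\alpha_{n_0}$ on $\phi_{in}$, and the inductive step is Theorem~\ref{thm:forwardback_theorem}, the soundness of $\forwardbackwardT$, together with \eqref{eq:forward_T}. For $\aBack^{\hiddenlayer}$ it follows from its initialization to $\top$, the soundness of the output-layer update $\aBack^{\numlayers}=\cond(a^{\numlayers},\neg\phi_{out})\sqcap\aBack^{\numlayers}$, Theorem~\ref{thm:LP_refinement} (the LP bounds $\vect{c^{\hiddenlayer}},\vect{d^{\hiddenlayer}}$ over-approximate the layer-$\hiddenlayer$ values with respect to $\phi_{in}\land\neg\phi_{out}$), and the soundness of $\cond$ and $\sqcap$ in \eqref{eq:backward}; note only the per-iteration invariant is needed, not the monotone refinement guaranteed by Theorem~\ref{thm:backward_refine}. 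Given the invariant, case (i) is immediate since $\gamma_{n_\numlayers}(\cond(a^{\numlayers},\neg\phi_{out}))\supseteq S$ while the left-hand side is $\emptyset$; and case (ii) is immediate since $\gamma_{n}(\aBack^{\hiddenlayer})$ over-approximates the projection of $S$ onto layer $\hiddenlayer$ and equals $\emptyset$, forcing $S=\emptyset$.

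For case (iii) I would show that the formula $\phi_{in}\land\bigl(\bigwedge_{\hiddenlayer}(\varphi^{\hiddenlayer}_{\text{non-linear}}\land\varphi^{\hiddenlayer}_{\text{linear}})\bigr)\land\neg\phi_{out}$ is over-approximating in the sense that every genuine counterexample --- an input in $\phi_{in}$ whose network output violates $\phi_{out}$ --- gives rise to a satisfying assignment. Indeed, the non-linear constraints encode the affine layers exactly and, for the piecewise-linear activations used in Decima, the MILP encoding of each activation layer is exact; the linear constraints $\varphi^{\hiddenlayer}_{\text{linear}}$ are read off from the bounding boxes $\iota_n$ of the final abstract elements and from sound linear relaxations of the activations, and by the loop invariant these abstract elements over-approximate exactly the behaviors consistent with $\phi_{in}\land\neg\phi_{out}$, so any genuine counterexample satisfies all of them. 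Hence if $\checkSat$ reports this formula unsatisfiable, no genuine counterexample exists, i.e., $\phi_{in}\land\neg\phi_{out}$ is unsatisfiable.

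The main obstacle is entirely bookkeeping rather than conceptual: making the loop invariant precise enough that all three returns fall out uniformly, and in particular being careful that the LP relaxations and box constraints injected both in the backward pass and in the final encoding are sound over-approximations of the \emph{conditioned} reachable set (with respect to $\phi_{in}\land\neg\phi_{out}$) and not merely of $f(\phi_{in})$ --- otherwise a collapse to $\bot$ would only witness unsatisfiability of a strictly stronger formula. Theorems~\ref{thm:forwardback_theorem}, \ref{thm:LP_refinement}, and \ref{thm:backward_refine} already carry the nontrivial content; the remaining work is to assemble them around the invariant and dispatch the three cases.
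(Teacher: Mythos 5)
Your proposal is correct and follows the same route as the paper: the paper's own proof is the one-line observation that ``the individual steps in Algorithm~1 are sound,'' which implicitly relies on exactly the loop invariant and three-return-point case analysis you spell out (soundness of $\forwardbackwardT$, of the LP bounds, of $\cond$ and $\sqcap$, and of the final mixed exact/relaxed encoding). Your version is simply a fully elaborated form of the paper's argument, with no substantive divergence.
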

\begin{proof}
The individual steps in \algoref{alg:forbackward} are sound.
\end{proof}

\subsection{Handling GNN architectures.\label{subsec:skip}}

GNNs often contain residual connections due to message passing. In those cases, we still need to make sure that when performing abstraction refinement for a certain layer $k$, we have already refined all subsequent layers that take $k$'s outputs as inputs. To obtain this refinement order, we first construct a DAG from the neural network architecture where each node represents a layer and an edge exists from layer $k$ to layer $\ell$ if the output of the former feeds into the latter. Drawing inspirations from data-flow analysis~\cite{aho2007compilers}, the backward abstraction refinement is conducted in post-order.

\section{Node abstraction for GNN verification.}
\label{sec:abtraction}

\algoref{alg:forbackward} is our core algorithm for verifying a single step property $\phi_{in}\to\phi_{out}$ where $\neg\phi_{out}$ is of the form $\bigwedge \Sigma a_i \cdot \out_i \bowtie c_i$ (i.e., a conjunction of linear constraints over the output layer) for a feed-forward neural network. However, only allowing this form of bad outputs is restrictive as in practice the bad outputs specified in many properties (e.g., robustness, strategy-proofness) are disjunctive sets. In particular, the output property often specifies that a subset of output neurons $\Theta$ all satisfy certain simple post-condition $\phi$, i.e., 
\begin{equation}
\hspace{-0.8cm}\phi_{out} := \bigwedge_{{\out_i}\in{\Theta}} \phi({\out_i}) \label{eq:dnf}    
\end{equation}
Existing works often handle this type of post condition by considering each disjunct individually: we can use \algoref{alg:forbackward} (or any procedure that can handle simple post-conditions) to check whether $\phi_{in} \to\phi(\out_i)$ holds for each output variable $\out_i $. The original property holds if the solver returns $\hold$ every time, and is violated if the solver returns $\nothold$ for one of the disjuncts. However, this strategy could be inefficient when the number of disjuncts is large. 
Instead, in this section, we describe a general procedure to efficiently handle post conditions of this form tailored for node prediction/classification tasks in GNNs. We believe our contributions for verification presented here can be adapted to handle GNN application domains other than job-scheduling such as recommender systems and malware detection.

Our key insight is that each score $s_i$ is computed by applying the same transformation to the node embedding $\embed_i$, $\vect{\out_i} := \pred(\embed_i, \summary(\{\feat_i,\embed_i | \node_i \in G\}))$. Therefore, to simultaneously reason about the GNN's output for a group of nodes $\{\node_i | i\in\Theta\}$, it is natural to consider an abstraction that treats $\{\embed_i, i\in \Theta\}$ as an equivalence class.
Given a set of constraints $M$ as defined in \eqref{eq:graph_concrete} that exactly captures the concrete behaviors of the GNN with respect to $\phi_{in}$, we can construct an abstraction $\rlx{M}$ by mapping each constraint ``$\out_i = \pred(\embed_i, \embedMore)$'' to ``$\abst{\out} = \pred(\abst{\embed}, \embedMore) \land I(\abst{\embed})$'', where $\abst{\embed}$ and $\abst{\out}$ are free real-valued variables and $I$ is an \emph{invariant} which we will describe next:
\begin{align*}
    \rlx{M} := 
\begin{cases}
    M_{in} \text{ (defined in \eqref{eq:graph_concrete})}\\
    \abst{\out} = \pred(\abst{\embed}, \embedMore)\land I(\abst{\embed})
\end{cases}
\end{align*}

\begin{definition}[Soundness]
We say $\rlx{M}$ is a \emph{sound} abstraction of $M$ with respect to a simple output property $\phi$, if 
\[
\rlx{M}\to \phi(\abst{\out}) \implies M\to \bigwedge_{i\in \Theta} \phi(\out_i).
\]
\end{definition}
By definition, given a sound abstraction, we can prove the original property by proving a simple specification on $\rlx{M}$.

\begin{lemma}
if $M_{in} \to I(\embed_i)$ for each $i \in \Theta$, then $\rlx{M}$ instantiated with $I$ is sound.\label{lemma:sound}
\end{lemma}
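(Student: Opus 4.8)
The plan is to prove the implication $\rlx{M} \to \phi(\abst{\out}) \implies M \to \bigwedge_{i\in\Theta}\phi(\out_i)$ directly, by taking an arbitrary concrete point satisfying $M$ and showing that each $\out_i$ with $i\in\Theta$ satisfies $\phi$. The key observation that makes this work is that the only difference between $M$ and $\rlx{M}$ is that the family of constraints $\out_i = \pred(\embed_i, \embedMore)$ for $i\in\Theta$ is replaced by a single constraint $\abst{\out} = \pred(\abst{\embed},\embedMore) \land I(\abst{\embed})$ on fresh variables, while the shared part $M_{in}$ is identical.

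First I would fix a valuation $\nu$ of the variables $\feat_1,\dots,\feat_N$, $\embed_1,\dots,\embed_N$, $\embedMore$, $\out_i$ ($i\in\Theta$) that satisfies $M$. Since $M$ contains $M_{in}$, the restriction of $\nu$ to the variables of $M_{in}$ satisfies $M_{in}$. Now fix any particular index $j\in\Theta$. I would extend $\nu$ to a valuation $\nu'$ of the variables of $\rlx{M}$ by setting $\abst{\embed} := \nu(\embed_j)$ and $\abst{\out} := \nu(\out_j)$ (leaving $\embedMore$ and the $M_{in}$-variables as in $\nu$). I then need to check that $\nu'$ satisfies $\rlx{M}$: the $M_{in}$ part holds because $\nu$ satisfies it; the constraint $\abst{\out} = \pred(\abst{\embed},\embedMore)$ holds because $\nu(\out_j) = \pred(\nu(\embed_j),\nu(\embedMore))$ is exactly the constraint $\out_j = \pred(\embed_j,\embedMore)$ from $M$; and $I(\abst{\embed}) = I(\nu(\embed_j))$ holds because the hypothesis $M_{in}\to I(\embed_j)$ (together with $\nu$ satisfying $M_{in}$) gives it. Hence $\nu'$ satisfies $\rlx{M}$, so by the assumption $\rlx{M}\to\phi(\abst{\out})$ we get $\phi(\nu'(\abst{\out})) = \phi(\nu(\out_j))$. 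Since $j\in\Theta$ was arbitrary, $\nu$ satisfies $\bigwedge_{i\in\Theta}\phi(\out_i)$, and since $\nu$ was an arbitrary model of $M$, we conclude $M\to\bigwedge_{i\in\Theta}\phi(\out_i)$, i.e.\ $\rlx{M}$ is sound.

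The argument is essentially bookkeeping about which constraints survive the abstraction, so there is no real ``hard part''; the one point requiring care is verifying that the witness $\nu'$ actually satisfies $I(\abst{\embed})$, which is precisely where the lemma's hypothesis $M_{in}\to I(\embed_i)$ is used — without it the chosen value $\nu(\embed_j)$ for $\abst{\embed}$ need not lie in the invariant and the reduction would fail. I would also note (if needed for later sections) that $\embedMore$ is treated as a shared variable rather than re-derived, so nothing about the summary network $\summary$ needs to be unpacked; its defining constraint lives inside $M_{in}$ and is carried over verbatim.
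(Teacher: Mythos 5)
Your proof is correct and is essentially the same argument as the paper's: where the paper reads $\rlx{M}\to\phi(\abst{\out})$ as a universally quantified implication over the free variable $\abst{\embed}$ and instantiates it with $\embed_i$, you perform the semantic counterpart by extending a model of $M$ with $\abst{\embed}:=\nu(\embed_j)$, $\abst{\out}:=\nu(\out_j)$, and both proofs invoke the hypothesis $M_{in}\to I(\embed_i)$ at exactly the same point to discharge the invariant.
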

\begin{proof}
Suppose $\rlx{M}\to \phi(\abst{\out})$. It follows that $\forall \embed, M_{in} \land I(\embed) \to \phi(\pred(\embed,\embedMore))$. For each $i\in \Theta$, we can instantiate $\embed$ with $\embed_i$ and get $M_{in} \land I(\embed_i) \to \phi(\pred(\embed_i,\embedMore))$. Since $M_{in} \to I(\embed_i)$, we have $M_{in} \to \phi(\pred(\embed_i,\embedMore))$. This is equivalent to $M_{in} \land (\out_i= \pred(\embed_i,\embedMore)) \to \phi(\out_i)$. Notice that $M_{in} \land (\out_i= \pred(\embed_i,\embedMore))$ is a subset of constraints in $M$. Therefore, $M \to \phi(\out_i)$.
\end{proof}

\begin{wrapfigure}{r}{0.6\textwidth}
\begin{minipage}[t]{0.6\textwidth}
\vspace{-0.7cm}
\begin{algorithm}[H]
  \small
  \begin{algorithmic}[1]
    \State {\bfseries Input: a message passing component $m$, a summary component $\summary$ and a prediction component $\pred$, and a specification $\phi : \phi_{in}\to \bigwedge_{i\in{\Theta}} \phi(\out_i)$} 
    \State {\bfseries Output: $\hold/\nothold/\unknown$}
    \Function{checkWithNodeAbtraction}{$m, \summary, \pred, \phi$}
    \State $A, C\mapsto \Theta, \emptyset$
    \While {$C \neq \Theta$}
    \State $a_{e_1}, \ldots, a_{e_N} \mapsto \forward(M, \phi_{in})$
    \State $I \mapsto \convexRelaxation(a_{e_1}, \ldots, a_{e_N})$
    \State $\rlx{M} \mapsto \createAbstraction(\phi_{in}, m, \summary, \pred, I)$
    \State $r \mapsto\forwardBackwardAnalysis(\rlx{M}, \phi(\abst{\out}))$
    \If{$r = \hold$}
    \State $C \mapsto C \cup A$
    \State $A \mapsto \emptyset$
    \Else
    \State $a_{e_k} \mapsto \pickNode(A)$
    \State $A \mapsto A \setminus \{k\}$
    \State $t \mapsto\forwardBackwardAnalysis(F, \phi_{in}\to \phi(\out_k))$
        \If {$t = \hold$}
        \State $C \mapsto C\cup \{k\}$
        \Else 
        \State {\bf return} t
        \EndIf
    \EndIf
    \EndWhile
    \State {\bf return} \hold
    \EndFunction
  \end{algorithmic}
  \caption{Node abstraction with iterative refinement.\label{alg:node-abstract}}
\end{algorithm}
\vspace{-0.8cm}
\end{minipage}
\end{wrapfigure}
\paragraph{Node abstractions.}
While the choice of $I$ is flexible as long as it yields a sound abstraction, there is a trade-off controlled by $I$ between the difficulty of proving $\rlx{M}\to \phi(\abst{\out})$ and the likelihood that this property holds. For example, the weakest invariant is $\top$, which leaves $\embed'$ unconstrained, making it less likely that $\phi(\abst{p})$ holds. On the other hand, the strongest invariant is $\bigvee_{i\in\Theta} \abst{\embed} = \embed_i$, which just moves the disjunction in the output property to be over $\abst{\embed}$ and does not reduce the computational complexity.

A general recipe for constructing $I$ relies on the forward abstract interpretation. After performing the forward analysis on GNN $F$ with pre-condition $\phi_{in}$ using the abstract transformers $T^\#_{g}$, for each $\embed_i$, we can obtain a (typically) convex region $\psi_i$ from the abstract element $a_{e_i}$ such that $\psi_i$ over-approximates the values $e_i$ can take under the pre-condition $\phi_{in}$, or in other words, $M_{in} \to \psi_i(e_i)$. Next, we can compute a convex relaxation of the union of the convex regions and use that as our invariant $I$, i.e., $I = \conv(\bigcup_{i\in\Theta} \psi_i)$. In practice, we take $\psi_i$ to be the tightest intervals of $\embed_i$, and let $I$ be the join of those intervals. 

\begin{theorem}\label{thm:abs-sound}
If $M_{in} \to \psi_i(\embed_i)$ for each $i\in\Theta$, then the abstraction $\rlx{M}$ instantiated by $I = \conv(\bigcup_{i\in\Theta} \psi_i)$ is sound.
\end{theorem}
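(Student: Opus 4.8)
The plan is to reduce Theorem~\ref{thm:abs-sound} to Lemma~\ref{lemma:sound} by verifying that the specific invariant $I = \conv(\bigcup_{i\in\Theta}\psi_i)$ satisfies the hypothesis of the lemma, namely $M_{in}\to I(\embed_i)$ for each $i\in\Theta$. Once that implication is established, Lemma~\ref{lemma:sound} applies verbatim and gives soundness of $\rlx{M}$, so the bulk of the work is purely set-theoretic reasoning about the convex hull.

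First I would fix an arbitrary $i\in\Theta$ and argue that $M_{in}\to I(\embed_i)$. By hypothesis, $M_{in}\to\psi_i(\embed_i)$, i.e., every concrete value of $\embed_i$ consistent with $M_{in}$ lies in the region denoted by $\psi_i$. Since $\psi_i\subseteq\bigcup_{j\in\Theta}\psi_j\subseteq\conv(\bigcup_{j\in\Theta}\psi_j) = I$ (the convex hull only adds points), any point in $\psi_i$ is also a point in $I$. Chaining these inclusions gives $M_{in}\to I(\embed_i)$. This is the only place the particular shape of $I$ (a convex relaxation of a union) is used, and it is essentially immediate from monotonicity of $\bigcup$ and the fact that $\conv$ is extensive ($S\subseteq\conv(S)$).

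Having shown $M_{in}\to I(\embed_i)$ for each $i\in\Theta$, I would then simply invoke Lemma~\ref{lemma:sound}, which states exactly that under this condition $\rlx{M}$ instantiated with $I$ is sound, i.e., $\rlx{M}\to\phi(\abst{\out})$ implies $M\to\bigwedge_{i\in\Theta}\phi(\out_i)$. That completes the proof. I do not expect any genuine obstacle here: the theorem is a corollary of the lemma specialized to a concrete choice of invariant, and the only content is checking the inclusion $\psi_i\subseteq\conv(\bigcup_{j\in\Theta}\psi_j)$. The one point worth stating carefully is that soundness does not depend on $\psi_i$ or $I$ being tight or even convex in any essential way — only on the over-approximation direction — so if one wanted to be thorough, I would remark that convexity of $I$ is needed for the downstream use of LP/convex solvers on $\rlx{M}$ but is irrelevant to soundness itself.
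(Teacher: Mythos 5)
Your proposal is correct and follows exactly the paper's own argument: establish $\psi_i(\embed_i) \to I(\embed_i)$ from the fact that the convex hull contains each $\psi_i$, conclude $M_{in} \to I(\embed_i)$, and invoke Lemma~\ref{lemma:sound}. Your added remark that convexity of $I$ matters only for downstream tractability, not for soundness, is accurate and a nice clarification.
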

\begin{proof}
For any $\embed_i$ where $i\in\Theta$, by the definition of convex approximation $\psi_i(\embed_i) \to I(\embed_i)$. Therefore $M_{in} \to I(\embed_i)$ and by Lemma~\ref{lemma:sound} the statement holds.
\end{proof}

Note that $\rlx{M}$ can be viewed as the concrete semantics of a GNN $\rlx{F}$ with an augmented input $\abst{\embed}$ and one output $\abst{\out}$. Under this view, checking $\rlx{M}\to\phi(\abst{\out})$ is equivalent to checking the specification $\phi_{in}(\feat_1, \ldots, \feat_N) \land I(\abst{\embed}) \to \phi(\abst{\out})$ on $\rlx{F}$. 

\paragraph{Iterative refinement of node abstraction.}
If $\rlx{M}\to\phi(\abst{\out})$ is proved, then the original property also holds. Otherwise we cannot conclude that the original property is violated. In that case, we can obtain a refinement of the over-approximation by considering fewer nodes in the abstraction. This yields an iterative refinement procedure as described in \algoref{alg:node-abstract}. We maintain two sets of indices: $A$ is the set of node indices treated as equivalent in the abstraction and a node index $i\in C$ if $\phi_{in}\to\phi(\out_i)$ has been proved.
Given a specification of the form described in Eq.~\eqref{eq:dnf}, we start by creating an abstract network $\abst{F}$ that treats all nodes in the disjuncts as equivalent. If the simple property $\phi_{in}\to\phi_{\abst{\out}}$ can be proved on F', then we add all node indices in the equivalence class $C$. If we fail to prove the property, we refine the abstraction by considering fewer nodes in the abstraction. In particular, we heuristically pick one node (Line 13) to remove from $A$. In practice, we pick the node whose removal results in the largest decrease in the volume of the convex relaxation $I$. In the next iteration, we obtain a different abstract network $F'$ that abstracts over fewer nodes.

\begin{theorem}\label{thm:sound_complete_node_abstraction}
\sloppy
If $\rlx{M}$ is a sound abstraction of $M$ and $\forwardBackwardAnalysis$ is sound and complete, then \algoref{alg:node-abstract} is sound and complete.
\end{theorem}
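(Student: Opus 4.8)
The plan is to prove soundness and completeness of Algorithm~\ref{alg:node-abstract} separately, relying on the soundness of the abstraction ($\rlx{M}$ sound w.r.t.\ each simple property $\phi$) established in Theorem~\ref{thm:abs-sound} and on the assumed soundness and completeness of $\forwardBackwardAnalysis$. The key structural observation is that the algorithm maintains the loop invariant that $C$ is exactly the set of indices $i\in\Theta$ for which $\phi_{in}\to\phi(\out_i)$ has already been established (by a sound procedure), and $A = \Theta\setminus C$ is the set still to be handled; the loop terminates precisely when $C=\Theta$, i.e.\ when $\phi(\out_i)$ has been verified for every $i\in\Theta$, at which point $\phi_{out}=\bigwedge_{i\in\Theta}\phi(\out_i)$ holds and returning $\hold$ is correct.

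For \textbf{soundness}, I would argue that the algorithm only ever returns $\hold$ or returns the result $t$ of an inner call. If it returns at Line~21 with $C=\Theta$, then every $\phi_{in}\to\phi(\out_i)$ holds, so the conjunction holds. The only other return is $t$ from $\forwardBackwardAnalysis(F,\phi_{in}\to\phi(\out_k))$ at Line~19; since $\forwardBackwardAnalysis$ is sound, if $t=\nothold$ then $\phi_{in}\to\phi(\out_k)$ genuinely fails for $k\in\Theta$, hence the conjunctive post-condition genuinely fails, so returning $\nothold$ is correct. I also need the two ways indices enter $C$ to be justified: when $r=\hold$ (Line~10), Theorem~\ref{thm:abs-sound} gives $M\to\bigwedge_{i\in A}\phi(\out_i)$, so adding all of $A$ to $C$ is sound; when $t=\hold$ (Line~18), soundness of $\forwardBackwardAnalysis$ directly gives $\phi_{in}\to\phi(\out_k)$, so adding $k$ is sound. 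Thus the loop invariant is preserved and soundness follows.

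For \textbf{completeness}, I would first argue termination: the multiset/pair $(|A|, \text{unknown})$ strictly decreases each iteration in a well-founded way. Whenever $r\neq\hold$ the algorithm removes one element from $A$ (Line~14) and either returns (done) or adds it to $C$ (Line~17); whenever $r=\hold$ it empties $A$ into $C$. So $|C|$ strictly increases each iteration unless the algorithm returns, and since $C\subseteq\Theta$ is finite, the loop runs at most $|\Theta|$ times. Completeness then means: if the specification is violated the algorithm returns $\nothold$ (not $\unknown$), and if it holds the algorithm returns $\hold$. If the specification holds, every inner $\forwardBackwardAnalysis$ call on a true sub-property returns $\hold$ by completeness of $\forwardBackwardAnalysis$ (the abstract query $\rlx{M}\to\phi(\abst{\out})$ may still fail to hold, but that merely triggers refinement, never a wrong answer), so the algorithm never hits the $\nothold$ branch and eventually exits with $C=\Theta$ returning $\hold$. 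If the specification is violated, some $k^*\in\Theta$ has $\phi_{in}\not\to\phi(\out_{k^*})$; that index never enters $C$ via either mechanism (by soundness of both, as shown above), so it stays in $A$, and since the loop cannot terminate with $C=\Theta$ while $k^*\notin C$, eventually $k^*$ is the node picked at Line~13, the complete call at Line~16 returns $\nothold$, and the algorithm returns it.

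The main obstacle I anticipate is making the termination-plus-progress argument fully rigorous: one must check that $\pickNode$ always selects from the current $A$ (so the removed index is a genuine, not-yet-decided node) and that the $r=\hold$ branch cannot stall (it can only be taken finitely often because it strictly shrinks $\Theta\setminus C$). A secondary subtlety is disentangling ``$\rlx{M}\to\phi(\abst{\out})$ fails'' from ``the property is violated'': a failed abstract query is uninformative and only drives refinement, so completeness crucially rests on the fact that the \emph{concrete} fallback call at Line~16 uses the exact network $F$ and a complete engine, guaranteeing that each individual disjunct is eventually decided exactly. Once these points are spelled out the proof is a short induction on the loop, so I would keep it to a compact paragraph invoking Theorem~\ref{thm:abs-sound} and the sound-and-complete hypothesis on $\forwardBackwardAnalysis$.
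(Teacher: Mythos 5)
Your proof is correct and follows essentially the same route as the paper's: the paper likewise argues soundness via the invariant that $C$ only ever contains conjuncts proved by a sound step (either the sound abstraction plus Theorem~\ref{thm:abs-sound}, or the sound concrete call), and completeness via the facts that each iteration strictly grows $C$ or returns, that a violated conjunct can never enter $C$, and that it is therefore eventually picked and decided exactly by the complete concrete call on $F$. Your version merely spells out the loop invariant and termination measure more explicitly than the paper's compressed argument.
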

\begin{proof}
If the specification can be violated, then $r$ at Line 9 must always equal $\nothold$ and each iteration picks and checks a different conjunct (Lines 14-16). Since $\forwardBackwardAnalysis$ is sound and complete, it must return $\nothold$ for one conjunct. Now suppose the specification holds. Since the relaxation is sound and $\forwardBackwardAnalysis$ is sound and complete, $C$ must only contain indices corresponding to conjuncts that hold. Moreover, each iteration must increase the size of $C$ (Lines 11 and 18). Since $\Theta$ is finite, the algorithm will return $\hold$ in finite number of steps. 
\end{proof}

\section{Reasoning over traces}
\label{sec:multi}

So far we have introduced a verification engine in \algoref{alg:node-abstract} for single-step properties with pre-condition $\phi_{in}$ that can be expressed as a conjunction of linear constraints on the input node features and post condition $\phi_{out}$ of the form described in \eqref{eq:dnf}. 
Building upon this engine, we now develop an analysis to reason about the system behavior across multiple time steps. In this setting, we want to prove that bad traces from a set $T$ are not feasible starting from any state satisfying $\phi_{in}$ that can be expressed as a conjunction of linear constraints on the input node features. 
Next, we develop a baseline algorithm that iterates over traces and performs early punning of safe traces. Then, we discuss the precision and performance trade-off in this procedure and describe an efficient encoding of the system across multiple steps that still preserves completeness.

\paragraph{Multi-step verification with trace enumeration.}
We first present a multi-step verification procedure in  \algoref{alg:multi} and then describe our optimizations for improving its speed. 
At a high-level, the algorithm searches for an initial state in $\phi_{in}$ that would result in a trace $t\in T$ by trying to compute all possible traces starting from $\phi_{in}$. The progress of the search is tracked by a stack (initialized with an emptry trace $\textsc{nil}$) containing the set of (partial) traces that need to be explored.

During the search, we pick a partial trace $t$ (Line~\ref{lst:line:trace_pick}) from the stack and check whether it matches any traces in $T$ (Line~\ref{lst:line:trace_match}). There is a match between two traces if they have the same length and the same action sequence. There are three outcomes of our check. If there is a match ($\match$), then the search terminates as we have found a (potential) violation of the property, and we return either $\nothold$ or $\unknown$ depending on whether the encoding and the base solver ($\checkDisjuncts$) is complete.  Otherwise, if no trace in $T$ has $t$ as a prefix, then the check returns $\notmatch$. In this case, we can conclude any trace with prefix $t$ is not in $T$ and move on to analyze another trace. If this check is inconclusive (i.e., no trace in $T$ is equal to $t$, but some traces have $t$ as prefix), then we must expand this trace to determine whether there is a potential property violation. That is, we need to compute the possible next actions of the GNN agent conditioned on the initial state $\phi_{in}$, the transition system $\mathcal{T}$, and the current trace $t$ (Line~\ref{lst:line:trace_explore_start}-\ref{lst:line:get_possible_actions}). 

\begin{wrapfigure}{r}{0.57\textwidth}
\begin{minipage}[t]{0.57\textwidth}
\vspace{-0.7cm}
\begin{algorithm}[H]
  \small
  \begin{algorithmic}[1]
    \State {\bfseries Input: a message passing component $m$, a summary component $\summary$, a prediction component $\pred$, an initial state $G_0$, a transition system $\mathcal{T}$, and a specification $\phi : \phi_{in}\to \unreach(T)$} 
    \State {\bfseries Output: $\hold/\nothold/\unknown$}
    \Function{checkWithTraceEnumeration}{$m, f, g, G_0, \phi$}
    \State $stack \mapsto \{\texttt{nil}\}$
    \While {$\neg stack.\textsc{empty}()$}
        \State $t \mapsto stack.\textsc{pop}()$\label{lst:line:trace_pick}
        \State $r \mapsto \matchTrace(t, T)$\label{lst:line:trace_match}
        \If {$r = \match$}
            {\bf return} $\nothold/\unknown$
        \ElsIf {$r = \notmatch$}
             {\bf continue}\label{lst:line:trace_nomatch}
        \Else { $M, G, k \mapsto \phi_{in}, G_0, 0$}\label{lst:line:trace_explore_start}
            \For {$\node_i \text{ in } t$}\label{lst:line:for_loop_encode_start}
                \State $M \mapsto M \land \encodeState(m,\pred,\summary, G)$\label{lst:line:encode_state}
                \State $M \mapsto M \land \encodeAction(M, \node_i)$\label{lst:line:encode_action}
                \State $M \mapsto M \land \encodeTransition(\mathcal{T}, G, \node_i)$\label{lst:line:encode_transition}
                \State $G, k \mapsto \mathcal{T}(G, \node_i), k + 1$\label{lst:line:for_loop_encode_last}
            \EndFor
            \State $M \mapsto M \land \encodeState(m,\pred,\summary, G)$\label{lst:line:encode_state2}
            \State {$\Theta \mapsto \getFrontier(G)$}
            \State $Q \mapsto \checkDisjuncts(M, \Theta)$ \label{lst:line:get_possible_actions}
            \State $stack \mapsto stack \cup \{t::\node_i | \node_i \in Q\}$
        \EndIf
    \EndWhile
    \State {\bf return} $\hold$\label{lst:line:multi_step_hold}
    \EndFunction
    \Function{$\checkDisjuncts$}{$M, \Theta, G$}
    \State {$Q \mapsto \{\}$, $\phi(\node) \mapsto (\bigvee_{j\in\Theta}\node_j \geq \node)$}\label{lst:line:node_not_max}
    \For {$J \in G$}
    \State {$\phi_{out} \mapsto \bigwedge_{\node_i\in J\cap\Theta} \phi(\node\mapsto \node_i)$}
    \State {$Q\mapsto Q\cup \checkWithNodeAbstraction'(M, \phi_{out})$}
    \EndFor
    \State {\bf return} Q
    \EndFunction
  \end{algorithmic}
  \caption{Multi-step verification via trace enumeration.\label{alg:multi}}
\end{algorithm}
\vspace{-0.8cm}
\end{minipage}
\end{wrapfigure}
Note that an exact encoding of the transition system up to $t$ (Lines~\ref{lst:line:for_loop_encode_start}-\ref{lst:line:for_loop_encode_last}) involves the precise encoding of 1) the network at each time step (Line~\ref{lst:line:encode_state}); 2) the action taken at each time step (Line~\ref{lst:line:encode_action}); and 3) the updates of the feature vectors (Line~\ref{lst:line:encode_transition}).An incomplete encoding can be achieved by ignoring the first two components. This amounts to ignoring the previous trace and only encoding the network at the current step. We explore the runtime-precision trade-off of the complete encoding in our experimental section. The algorithm returns \hold\xspace if no traces from $T$ are matched during the enumeration. Notice that $T$ is only used in the $\match$ function for checking whether the current trace we are exploring is a ``bad'' trace. Therefore, in practice, instead of providing $T$ as a concrete set of traces, it is sufficient and relatively easier  to provide an implementation of the $\match$ function corresponding to the property.

The $\checkDisjuncts$ method at Line 18 can build on top of any single-step verification engine including, in particular, the procedure introduced in \secref{sec:abtraction}. One instantiation tailored to GNN-based job schedulers is described in \algoref{alg:multi}.
Our goal is to check whether an action $\node$ indexed by a set $\Theta$ can be scheduled. This can be formulated as checking whether the post-condition that $\node$ is not the maximum (Line~\ref{lst:line:node_not_max}) holds. We use the techniques introduced in \secref{sec:abtraction} to reason about candidate actions belonging to the same job DAG simultaneously. This is based on the observation that candidate actions corresponding to the same job often have similar verification results. Note that here we use a slightly modified version of \algoref{alg:node-abstract} where instead of returning $\hold/\nothold/\unknown$ we return all disjuncts in the post condition that does not hold. This amounts to modifying Line~\ref{lst:line:multi_step_hold} in \algoref{alg:multi} to continue the search instead of returning.

\begin{theorem}\label{thm:stop}
\algoref{alg:multi} terminates if $\ T$ has finite length traces.
\end{theorem}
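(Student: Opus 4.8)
The plan is to show that Algorithm~\ref{alg:multi} unrolls only a finite search tree and performs finite work at each of its nodes; the number of while-loop iterations is then bounded, which yields termination. Since $T$ is a finite set of traces, each of finite length by hypothesis, set $L = \max_{\tau\in T}\lvert\tau\rvert < \infty$. A partial trace $t$ is extended (i.e., some $t :: v$ is pushed) only in the \textsc{else} branch, which is entered only when $\matchTrace(t,T)$ is inconclusive, i.e., no $\tau\in T$ equals $t$ yet some $\tau\in T$ has $t$ as a prefix; since $t\neq\tau$, the trace $t$ is a \emph{proper} prefix of that $\tau$, so $\lvert t\rvert < \lvert\tau\rvert \le L$. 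Hence no trace of length exceeding $L$ is ever pushed onto the stack.

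Next I would establish finite branching. When $t$ is extended, its successors are $\{t :: v : v\in Q\}$ with $Q\subseteq \frontier(G)$, where $G$ is the state obtained by replaying $t$ through $\mathcal{T}$ from $G_0$. Because $\mathcal{T}$ is restricted to node removals and affine feature updates (static profiles, no node additions), every reachable $G$ has at most $\lvert G_0\rvert$ nodes, so $\frontier(G)$ — and therefore $Q$ — is finite; let $B$ bound $\lvert\frontier(G)\rvert$ over all reachable states. Combining this with the depth bound $L$, the set of partial traces that can ever be pushed is contained in a tree of depth $\le L$ and out-degree $\le B$, which has at most $N := 1 + B + \dots + B^L < \infty$ nodes.

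I would then argue that each partial trace is pushed at most once, by induction on length: \texttt{nil} is pushed once at initialization and never inside the loop (every trace added inside the loop has length $\ge 1$); a trace $s$ of length $k+1$ factors uniquely as $s = t :: v$ (concatenation is injective), and $s$ is pushed only in an iteration that pops $t$, so by the induction hypothesis — $t$ is pushed, hence popped, at most once — $s$ is pushed at most once. Consequently the while loop performs at most $N$ pops, i.e., at most $N$ iterations. Each iteration runs $\matchTrace$ on finite data, a for-loop of length $\lvert t\rvert \le L$ accumulating constraints over the finite (albeit deep) GNN and the finite graph, and one call to $\checkDisjuncts$, which ranges over the finitely many job DAGs of $G$ and invokes the single-step engine; the latter terminates under the hypotheses of Theorem~\ref{thm:sound_complete_node_abstraction} (termination of $\forwardBackwardAnalysis$, which holds, e.g., for Decima's piecewise-linear activations via the convergence criterion of Section~\ref{sec:backward}). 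Hence every iteration terminates, and the algorithm terminates.

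The delicate point — the one I would flag as the main obstacle — is ruling out an infinite run in which some partial trace is re-pushed and re-expanded forever, which would defeat the bounded-depth/finite-branching count. This is exactly what the ``pushed at most once'' induction handles, and it hinges on the injectivity of concatenation (each non-empty trace has a unique parent $t$) together with the fact that each $t$ is popped at most once. A secondary dependency is the termination of the inner single-step engine; if one prefers not to assume it, the statement should be read as termination of the outer trace-enumeration loop relative to that oracle.
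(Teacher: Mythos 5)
Your argument is correct and follows essentially the same route as the paper's proof: bound the depth of expandable traces by the longest trace in $T$ (the paper's point (b)), bound the branching by the finitely many frontier actions (the paper's $K$), and show by induction on trace length that no partial trace is pushed or checked twice (the paper's point (a)), giving the same $K^R$-style bound on iterations. Your additional remarks on the termination of the inner single-step engine make explicit an assumption the paper leaves implicit, but they do not change the substance of the argument.
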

\begin{proof}
Let $K$ be the maximum number of possible actions at a given time step and $R$ be the longest trace length in $T$. In the worst case, there are a finite number of traces ($K^R$) to check. This is because the algorithm: (a) does not check the same trace twice (this can be proved by induction on the length of the partial trace $t$); and (b) does not expand any $t$ whose prefix does not match a trace in $T$ (Line~\ref{lst:line:trace_nomatch} of Algorithm~\ref{alg:multi}). 
\end{proof}

\begin{theorem}[Soundness]\label{thm:sound}
If the encodings (Lines~\ref{lst:line:encode_state},\ref{lst:line:encode_action},\ref{lst:line:encode_transition}, \ref{lst:line:encode_state2}) and $\checkDisjuncts$ are sound,  
then \algoref{alg:multi} is sound. 
\end{theorem}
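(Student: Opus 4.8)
The claim to establish is that whenever \algoref{alg:multi} returns $\hold$, the specification $\phi_{in}\to\unreach(T)$ holds, i.e., for no initial state $G_0\models\phi_{in}$ is any trace in $T$ realized by the scheduler. The plan is to argue the contrapositive. Call a trace $t$ \emph{feasible} if there is a $G_0\models\phi_{in}$ such that $t$ is a prefix of the deterministic execution $F(G_0),\,F(\trans(G_0,F(G_0))),\dots$; assume some $t^{\ast}\in T$ is feasible, and show that \algoref{alg:multi} cannot return $\hold$.

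The heart of the argument is a lemma proved by induction on $|t|$: \emph{if \algoref{alg:multi} returns $\hold$, then every feasible trace $t$ that is a prefix of some trace in $T$ is pushed onto the stack at some point during the run.} For the empty trace $t=\texttt{nil}$ this holds by the initialization of the stack. For the inductive step, write $t=t'::\node$; then $t'$ is feasible and a prefix of a trace in $T$, so by the induction hypothesis $t'$ is pushed, and since the run returns $\hold$ — hence the loop exits with an empty stack — $t'$ is eventually popped and processed. When $t'$ is processed, $\matchTrace(t',T)$ is neither $\notmatch$ (some trace in $T$ has $t'$ as a prefix) nor $\match$ (that would force a $\nothold/\unknown$ return, contradicting $\hold$), so the inconclusive branch runs: it builds $M$ from the sequence $\encodeState,\encodeAction,\encodeTransition$ along $t'$ followed by a final $\encodeState$, takes $\Theta=\getFrontier(G)$ for the graph $G$ reached after executing $t'$, and sets $Q=\checkDisjuncts(M,\Theta,G)$. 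Because $t=t'::\node$ is feasible, there is a $G_0\models\phi_{in}$ whose execution has prefix $t'$ and whose next scheduled node is $\node$; by soundness of the encoding steps the concrete trajectory of this $G_0$ through step $|t'|$ satisfies $M$, and by soundness of $\checkDisjuncts$ (it over-approximates the set of frontier nodes that can be the scheduled action under $M$) we obtain $\node\in Q$. Hence $t=t'::\node$ is added to the stack, closing the induction.

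Given the lemma, the theorem follows quickly: a feasible $t^{\ast}\in T$ is in particular a feasible prefix of a trace in $T$, so it is pushed and — the loop having terminated with an empty stack — later popped; then $\matchTrace(t^{\ast},T)=\match$ since $t^{\ast}$ has exactly the length and action sequence of $t^{\ast}\in T$, so \algoref{alg:multi} returns $\nothold$ or $\unknown$, contradicting that it returned $\hold$. Therefore no feasible trace in $T$ exists, which is exactly $\phi_{in}\to\unreach(T)$. (That the loop actually exits with an empty stack whenever $\hold$ is returned — so every pushed trace is indeed popped and processed — is underwritten by the termination argument of Theorem~\ref{thm:stop}.)

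I expect the delicate step to be spelling out what ``the encodings are sound'' and ``$\checkDisjuncts$ is sound'' give us and how they compose: that $\encodeAction$'s argmax constraint together with $\encodeState$ and $\encodeTransition$ excludes no genuine execution, that the deterministic updates $G\mapsto\trans(G,\node_i)$ performed inside the loop faithfully mirror the environment's structural evolution (so $\Theta=\getFrontier(G)$ is the true frontier after $t'$), and that $\checkDisjuncts$ — built on the modified \algoref{alg:node-abstract}, which returns the disjuncts of the post-condition that may be violated — misses no schedulable node. A minor case to dispatch is when a proper prefix of $t^{\ast}$ (or a second trace in $T$) is itself in $T$ and triggers an earlier $\match$; this only reinforces the conclusion, since the algorithm then again returns $\nothold/\unknown$ rather than $\hold$.
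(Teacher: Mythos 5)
Your proof is correct and takes essentially the same route as the paper's: the paper argues that, by soundness of the encodings and of $\checkDisjuncts$, the set $Q$ over-approximates the feasible next actions, and then shows by induction on trace length that the traces pushed onto the stack over-approximate the reachable traces, so a $\hold$ return implies no reachable trace matches $T$. Your contrapositive phrasing is a more careful rendering of the same induction — in particular, restricting the invariant to feasible traces that are \emph{prefixes of traces in $T$} is the right refinement, since the $\notmatch$ pruning means the stack does not literally contain all reachable traces.
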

\begin{proof}
The assumptions guarantee that $Q$ is a super-set of the actual feasible actions. By induction on the length of the trace added to $stack$, we can prove that the traces added to $stack$ are a super-set of the actual reachable set of traces. Therefore, if no trace added to the stack matches any trace in $T$, no actual reachable trace can match any trace in $T$.  
\end{proof}


\paragraph{Proof-transfer encoding.} In general, if there are changes in node features or graph structures, the message passing would result in different node embeddings. A na\"ive complete encoding would re-encode message-passing (and subsequent GNN components) for every step. This quickly becomes too expensive as the number of time steps increases. However, taking a closer look at the message passing scheme, we observe that the effect of the graph structure/feature updates on the message passing is local to the disconnected component where the updates occur. This means that we only need to re-encode disconnected components of the graphs that are updated. In the case of Decima, we observe that between every scheduling event (invokation of the GNN agent), only a small subset of the job DAGs are updated. This results in significant savings in the length of the encoding and runtime as demonstrated by our experimental results.

\section{Specifications for job scheduler}
\label{sec:spec}

In this section we define the properties we verify for GNN-based schedulers like Decima. We emphasize that our framework allows the user to specify a rich set of verification properties. Here, we focus on two formulations of the strategy-proofness properties to demonstrate the capabilities of our method. We choose to study strategy-proofness as it is not only important in practice but also representative of the general form of specifications that our framework can handle.
\extVersion{We additionally investigate the data-locality property of the scheduler in App.~\ref{app:locality}, which was identified as one of the critical factors in job scheduler performance~\cite{locality}.}

Strategy-proofness is a desirable property of schedulers that intuitively means: ``a user cannot benefit by mis-representing their need.''  For example, we expect that the user cannot get their jobs scheduled earlier by requiring more resources for them. If this basic property does not hold, malicious users can mislead the system into stalling all but their jobs.  Interestingly, this property holds for simple schedulers such as FIFO (first-in-first-out) and CMMF (Constrained Max-Min Fairness)~\cite{shenker2013choosy}. However, due to the non-interpretable nature of the GNN-based scheduler, strategy-proofness cannot be guaranteed by construction. 

\begin{definition}[Single-step strategy-proofness\label{def:single-step-sp}]
Given an initial job profile $G = (A, X)$ containing K jobs $G_1, \ldots, G_K$, suppose the scheduler picks a node from job $G_k$. For each node feature vector $\vect{\feat_i}$, let $\feat_{id}$ and $\feat_{it}$ denote the entries of estimated total duration and the number of tasks, respectively. Let $G_a\in G$ be a job other than $G_k$ (e.g., the job of an adversarial user). Let $C$ and $C_a$ denote the frontier nodes in $G$ and $G_a$, respectively. The job scheduler is strategy-proof with respect to $G$ and $G_a$, where $a\neq k$, if $\forall G' = (A, X')$,
\begin{align*}
\bigwedge_{\node_i \in C_a} (\feat'_{id} \in [\feat_{id}, \scale_d \feat_{id}] \tikzmark{top2}\\
\land \feat'_{it} \in [\feat_{it}, \scale_t \feat_{it}] \tikzmark{right2}\\
\land \frac{\feat'_{id}}{\feat'_{it}} \geq \frac{\feat_{id}}{\feat_{it}})\tikzmark{bot2}\\ 
\to \bigwedge_{\node_i\in C_a} \Big(\neg\big(\bigwedge_{\node_j\in C\backslash C_a} \out'_i > \out'_j\big)\Big)
\end{align*}
\AddNote{top2}{bot2}{right2}{\scriptsize $:=\phi^{sp}_{in}(X')$}
where $\scale_d$ and $\scale_t$ are scalars (> 1).
\end{definition}

Intuitively, the pre-condition specifies that the owner of the adversarial job $G_a$ can increase all the features related to job utilization as well as the average task duration (implied by the third constraint) in the frontier nodes of their job. The $\alpha$ parameters define the level of perturbation that the adversary is allowed. 
The post condition states that none of the frontier nodes in $G_a$ can be scheduled, thus implying strategy-proofness. Note that the inner constraint in the post condition $\bigwedge_{j\in C\backslash C_a} \out_i > \out_j$ is a simple post condition (by De Morgan's law), thus the post condition has the form described in Eq.~\ref{eq:dnf}. Also note that the strategy-proofness property is very different from the adversarial robustness property~\cite{szegedy2013intriguing}, which states that the neural network's decision does not change in response to small perturbation in the input feature. In contrast, strategy-proofness allows the network's decision to change as we increase the input features of $G_a$. The property holds as long as no node from $G_a$ is scheduled next (i.e., the malicious user cannot benefit). 

\begin{definition}[$T$-step strategy-proofness]
Given a concrete initial job profile $G = (A, X)$ containing K jobs $G_1, \ldots, G_K$, let $G_a\in G$ be a job that is not scheduled within $T$ steps starting from $G$. The job scheduler is $T$-step strategy-proof with respect to $G$ and $G_a$, where $a\neq k$, if $\forall G' = (A, X')$,
\begin{align*}
\phi^{sp}_{in}(X') \to \unreach(\{t \mid |t|\leq T \land (\exists \node \in C_a, s.t. \node \in t)\})
\end{align*}
\end{definition}

The pre-condition is the same as in Def.~\ref{def:single-step-sp}. Intuitively, the property states that for a job $G_a$ that is not scheduled in $T$ step in the original trace starting from $G$, the owner of the job cannot get the job to be scheduled earlier by lying about the amount of work in the job. Note that when $T$ is equal to 1, the definition is equivalent to the single-step strategy-proofness property in Def.~\ref{def:single-step-sp}.

Without a verification engine, we could only obtain \emph{empirical} guarantees about both properties by checking a finite number of job profiles within the range specified by the pre-condition. However, our framework allows us to check \emph{all} job profiles in that range, thus obtaining \emph{formal} guarantees.

\paragraph{Local vs. global properties} Similar to previous work in neural network verification, the strategy-proofness properties here are defined locally, i.e., with respect to concrete initial job profiles. It is possible to define and verify a global version of the strategy-proofness property, for a set of initial job profiles with the same graph structure using our framework\extVersion{ (see App.~\ref{app:global-sp})}. However, checking this property requires a second copy of the GNN-encoding, thus doubling the size of the verification problem. Taking a step even further, it is possible to verify that the strategy-proofness holds for all job profiles with less than $N$ nodes, by checking the property with our verification engine for each unique job profile with less than $N$ nodes (there are finitely many of them). Abstraction techniques for reasoning about job profiles with different graph structures as a whole are likely needed to improve the verification time. We leave the verification of global properties as future work.  

\section{Experimental Evaluation}
\label{sec:experiments}

We implemented the proposed techniques in a tool called \sys and
performed an experimental evaluation by using \sys to check whether the single and multi-step strategy-proofness properties as described in the previous section hold on Decima~\cite{decima}---a state-of-the-art GNN-based job scheduler. 
There are other GNN-based schedulers available (e.g., \cite{park2021learning,sun2021deepweave}). We choose Decima as the representative as it is by far the most popular and influential. We note that our techniques are general and applicable to other GNN-based schedulers and properties as discussed above.
Feedback from \sys can be used by the system developer to adjust their schedulers to balance different user expectations. 

\subsection{Implementation}

Our implementation of \sys is three-fold, including:
\begin{enumerate}
\item  A \textbf{GNN-based scheduler module} that takes the graph structure of the job profile and the architecture of the GNN agent, and converts them into an internal representation of a feed-forward neural network (with residual connections) with the node features as inputs and node predictions as outputs. 
As the front-end of \sys,
the module also contains API calls to define pre-conditions and post-conditions of the forms described in Subsecs.~\ref{subsec:spec} and ~\ref{prelim:gnn-verify}.
\item  A \textbf{single-step verification engine} which contains a generic implementation of the algorithms introduced in Secs.~\ref{sec:backward} and \ref{sec:abtraction}. For forward-backward analysis, we use DeepPoly as the abstract domain, and use the linear approximation proposed in~\cite{planet} extended to Leaky ReLU for the LP encoding. It is worth noting that unlike the original DeepPoly implementation~\cite{deeppoly}, our forward-backward analysis handles feed-forward neural networks with arbitrary residual connections. This generality is needed even for verifying different properties of the same GNN because the graph structures of the initial state affect the order of message passing and yield neural networks with different architectures. 
The node-abstraction implementation applies to GNN-based node prediction models generally.
\item A \textbf{multi-step verification engine} that contains an implementation of the trace enumeration procedure described in Sec.~\ref{sec:multi}, which repeatedly invokes the single-step verification engine. We support both complete and incomplete multi-step encodings.
\end{enumerate}

\subsection{Experimental setup} 
We use the same GNN-architecture and training configurations as introduced in the original work~\cite{decima}, and obtain similar performance results as in the original work. \extVersion{Training details can be found in \appref{app:training}.} The trained network has Leaky ReLU as activation function. Unlike the traditional neural network verification setting, where the neural network size (e.g., number of activations) is fixed, the GNN size depends on the sizes of the input graphs, which we specify later. 
All experiments are run on a cluster equipped with Intel Xeon E5-2637 v4 CPUs running Ubuntu 16.04. Each benchmark is run with 32 threads and 128GB memory. For single-step verification, each benchmark run is given a 1-hour wall-clock timeout. For multi-step verification, the wall-clock timeout is set to 2 hours.

\subsection{Single-step verification}\label{subsec:single-step-experiment}
\sloppy
We first evaluate \sys on single-step verification benchmarks. The main question we pose here is: Does \sys scale to large GNN-based schedulers? 
To answer this question, we perform an extensive evaluation of all the techniques we proposed. 
Our results demonstrate a significant performance gain over baselines based on state-of-the-art verifiers.

\paragraph{Benchmarks.}
We evaluate our proposed techniques on verifying the single-step strategy-proofness property as introduced in \secref{sec:spec}. The scalars $\alpha$'s (see Def.~\ref{def:single-step-sp}) are set to 20, meaning the owner of the adversarial job can increase the estimated total duration and number of tasks by at most 20 times for any frontier nodes in the job. We consider job profiles that contain either 5 or 10 jobs, which yields a median of 5845 and 10997 activations (Leaky ReLU) in the encoding respectively. After unrolling, the network has 140 layers (treating affine and activation as separate layers). We heuristically select job profile and adversarial jobs ($G_a$'s) from Decima's test beds that would likely result in challenging verification benchmarks following the steps below:
\begin{itemize}
    \item Sample initial job states with 5 and 10 jobs from Decima's native test bed\footnote{\url{https://github.com/hongzimao/decima-sim/}} using random seeds 0-24. 
    \item For each of the initial job states, we run the simulation environment until 1/3 of the nodes are scheduled. At each step, we rank the jobs by the sum of the scores of the frontier nodes in decreasing order. We record steps where the total scores of frontier nodes in the top job and that in the second top job are close (<0.9). 
    \item We use strategy-proofness properties defined on states corresponding to those steps as the benchmarks for single step verification with $G_a$ being the second top job because these are vulnerable states that make for challenging verification benchmarks.
\end{itemize}
The resulting initial job profiles are sparse graphs, with each job containing on average $9.2 \pm 4.2$  nodes and $8.5 \pm 4.4$ edges. 

\paragraph{Configurations}
To evaluate our proposed techniques, we consider 4 different configurations: 
1) \base first tries to solve the problem with forward abstract interpretation and falls back to a complete MILP encoding with DeepPoly bounds via the Gurobi optimizer;
2) \backOnce performs the forward-backward analysis for one iteration (forward, backward, and forward again) and falls back to Gurobi;
3) \backConverge is the same as \backOnce except that it performs the forward-backward analysis repeatedly until the stopping condition described in \secref{sec:backward} is met, and
4) \abstBackConverge runs \algoref{alg:node-abstract} on top of \backConverge.
We note that $\base$ is equivalent to ERAN~\cite{deeppoly} with its optimal configuration for complete verification. 
We did not compare with off-the-shelf verification tools~\cite{prima,kpoly,bcrown,scaling,marabou}, because to our knowledge none can handle the complex architecture of GNNs without significant implementation overhead.

We first evaluate the first three configurations on the full benchmark set. The result is shown in \tabref{tab:backward}. We observe that the two configurations that perform the forward-backward analysis solve significantly more benchmarks than $\base$, which only performs a forward pass, with a gain of 77\% and 8\% of solved instances for graphs with 5 jobs and 10 jobs, respectively. On the other hand, we observe an incremental gain from performing the forward-backward analysis for 1 iteration to performing it to convergence.  While we solve 5 more benchmarks, performing forward-backward analysis until the stopping condition might lead to a non-negligible runtime overhead. For example, the average time to solve a 10-job benchmark increases from 206 seconds to 324 seconds (i.e., an overhead of ~2 minutes). 

\begin{table}[h!]
\caption{Instances solved by different configurations and their runtime (in seconds) on
  \emph{solved} instances (i.e., runtime for timed-out instances is not included). \label{tab:backward}}
\setlength\tabcolsep{10pt}
\centering		
\sffamily
\begin{tabular}{ccccccc}
  \toprule
\# jobs (\# bench.) 
& \multicolumn{2}{c}{\base}
& \multicolumn{2}{c}{\backOnce}
& \multicolumn{2}{c}{\backConverge}\\
\cmidrule(lr){2-3} \cmidrule(lr){4-5} \cmidrule(lr){6-7}
& Solved & Time & Solved & Time & Solved & Time \\
5 (66) & 26 & 32742 & 46 & 31147 & \noindent\textbf{49} & 38080 \\
10 (232) & 207 & 45662 & 224 & 46201 & \noindent\textbf{226} & 73295 \\
\bottomrule
\end{tabular}
\end{table}

The cactus plot in Fig.~\ref{fig:cactus} sheds more light on the pay-off of the abstraction refinement scheme. While \base can solve certain easy instances faster than \backOnce and \backConverge, the benefit of the proposed forward-backward analysis becomes evident once the time limit surpasses 100 seconds. On the other hand, \backConverge starts to overtake \backOnce when the time limit is above 1000 seconds. To further compare the long-term behaviors of \backOnce and \backConverge, we run them on the unsolved benchmarks in Tab.~\ref{tab:backward} (there are 23 of them) using a longer timeout (2 hours). \backOnce is able to solve 1 of the 23 benchmarks while \backConverge is able to solve 4. Based on these results, we recommend to use \backConverge when computational resources are not a concern. 

Among the 298 verification queries, 257 are proved, 20 are disproved (with counter-examples), and 28 are unknown. This suggests that the current version of Decima is not always strategy-proof, and adjustments in the training algorithm are potentially needed to guarantee it without compromising performance too much.

To evaluate the node abstraction scheme, we focus on the subset of benchmarks where there are at \emph{least 4 frontier nodes} in the adversarial job. This means that the number of disjuncts is at least 4. We run \abstBackConverge on this subset of benchmarks and compare it with $\backConverge$. As shown in \tabref{tab:abstraction}, while only 1 more instance is solved due to the node abstraction, the runtime is reduced significantly. In particular, while both configurations solve the same number of 10-job benchmarks within the time limit, \abstBackConverge solves them with a runtime reduction of 51.2\%. These results clearly demonstrate that using abstract nodes leads to significant computational saving if there are multiple frontier nodes available for scheduling.

\begin{figure}[t]
\centering
\begin{minipage}{.47\textwidth}
\centering
\vspace{-1mm}
\includegraphics[width=0.99\textwidth]{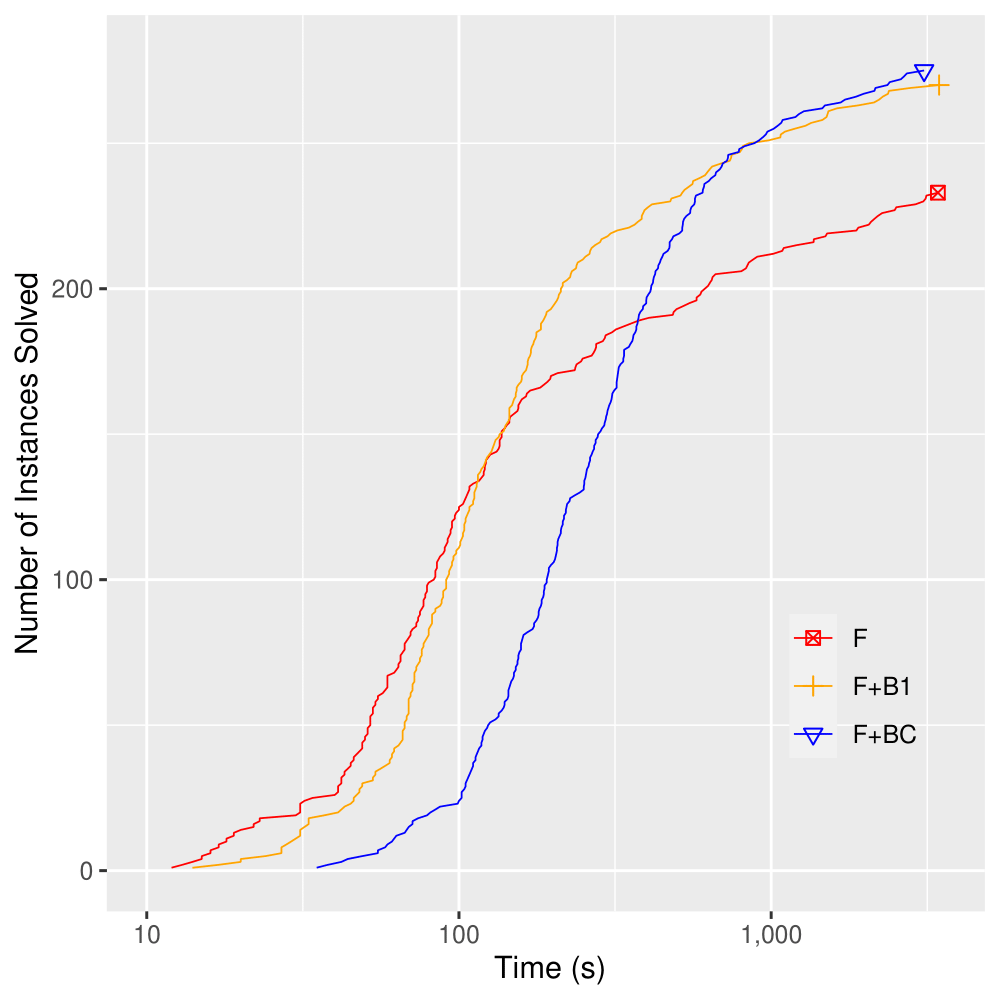}
\vspace{-4.5mm}
\caption{Cactus plot of the three configurations on the full benchmark set.}
\label{fig:cactus}
\end{minipage}%
\hspace{2mm}
\begin{minipage}{.48\textwidth}
\centering
\includegraphics[width=\textwidth]{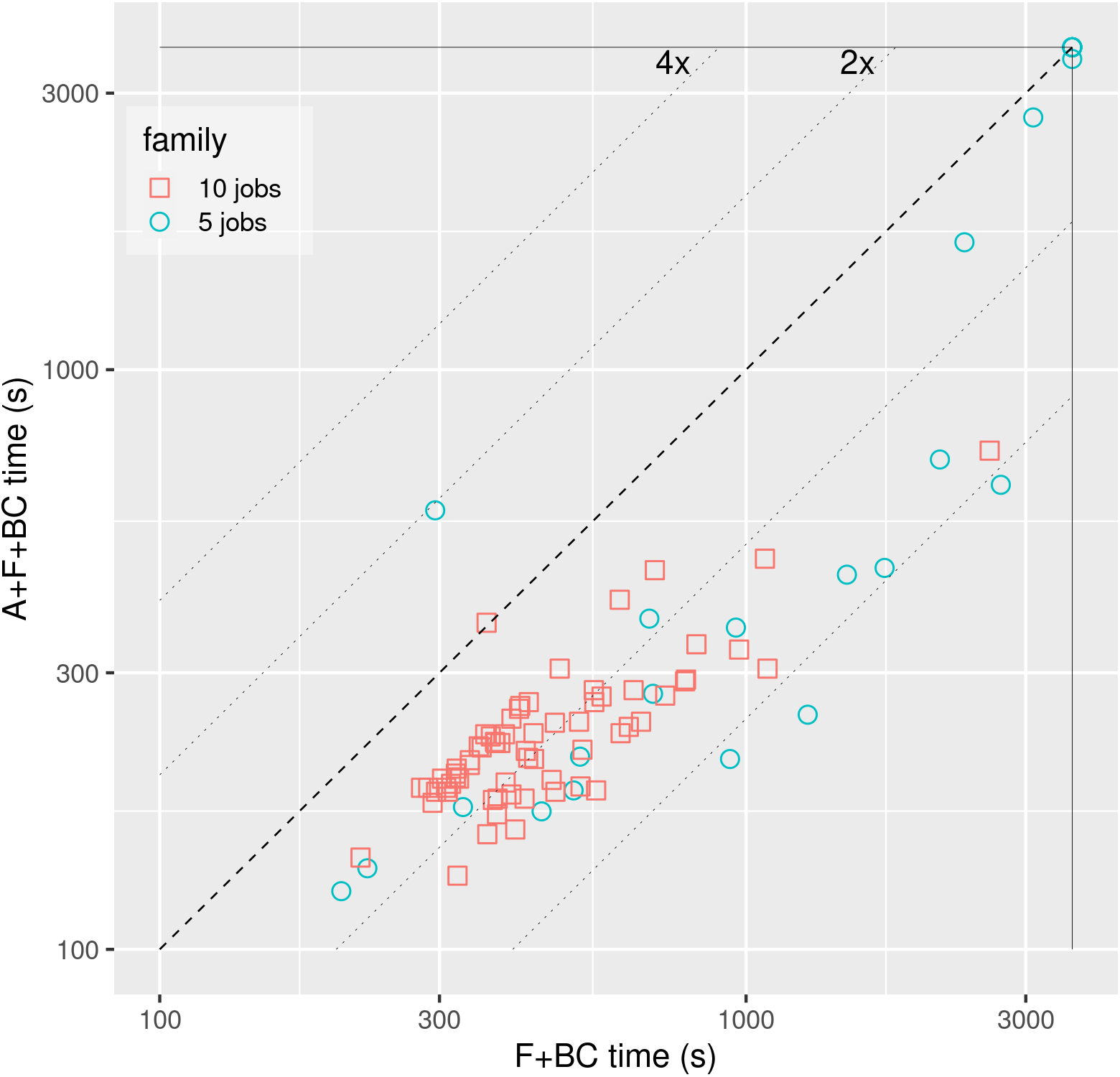}
\caption{Runtime of \abstBackConverge and \backConverge on benchmarks with at least 4 disjuncts.}
\label{fig:abstraction-scatter}
\end{minipage}
\end{figure}

\begin{table}[h]
\caption{Instances solved by different configurations and their runtime (in seconds) on solved instances. \label{tab:abstraction}}
\setlength\tabcolsep{12pt}
\centering		
\sffamily
\begin{tabular}{ccccc}
  \toprule
\# jobs (\# bench.) 
& \multicolumn{2}{c}{\backConverge}
& \multicolumn{2}{c}{\abstBackConverge}\\
\cmidrule(lr){2-3} \cmidrule(lr){4-5}
& Solved & Time & Solved & Time \\
5 (26) & 18 & 20600 & 19 & 13112 \\
10 (66) & 66 & 33061 & 66 & 16028 \\
\bottomrule
\end{tabular}
\end{table}

The scatter plot in Fig.~\ref{fig:abstraction-scatter} shows the concrete run time of the two configurations on the benchmarks. The node abstraction scheme brings around ~2-4x speed-up in a majority of the cases. We notice there are two cases where the verification time is not improved (or even becomes worse). In those cases, the attempts in Alg.~\ref{alg:node-abstract} to reason about multiple disjuncts simultaneously keep failing, and most disjuncts end up being analyzed individually. This suggests that additional performance gain could be potentially achieved if a more sophisticated heuristic to pick which node to remove from the equivalence class (see Sec.~\ref{sec:abtraction}) is used.

\subsection{Evaluating forward-backward analysis in isolation}\label{subsec:additional evaluation}

In this section, we further evaluate the effectiveness of the forward-backward analysis as a stand-alone technique. In particular, we pose two research questions: 
\begin{enumerate}
 \item Is the forward-backward analysis effective as a stand-alone technique (without falling back to a complete solver)? [Yes]
 \item Is the forward-backward analysis useful on other verification benchmarks such as adversarial robustness properties on image classifiers? [Yes]
\end{enumerate}

\paragraph{Benchmarks.}
We consider the same initial job profiles as described in Sec.~\ref{subsec:single-step-experiment}. The specification is also the same except that the scalars $\alpha$'s are set to 8 instead of 20. We choose this value of $\alpha$ because for larger values abstract interpretation alone usually fails to prove the property without invoking the complete solver and for smaller values (<1.5) forward abstract interpretation alone can prove most properties. 
Additionally, we train two classifiers, \mnistA and \mnistB, on the MNIST dataset~\cite{mnist}. Both are PGD-trained, fully-connected feed-forward, and using Leaky ReLU activations. \mnistA has 5 hidden layers with 100 neurons per layer. \mnistB has 8 hidden layers with 100 neurons per layer. We consider standard local $l_{\infty}$ adversarial robustness properties on the first 100 correctly classified test images. The perturbation bound is set to 0.02 (the inputs are normalized between 0 and 1).

\paragraph{Configurations} We consider three configurations,  \base', \backOnce', and \backConverge', which are the same as their counterparts in Sec.~\ref{subsec:single-step-experiment}, except that the former do not fall back to complete solvers and instead return \texttt{UNKNOWN} if abstraction interpretation fails to prove the property.

\begin{table}[h!]
\caption{Instances solved by different configurations and their runtime (in seconds) on
  \emph{solved} instances (i.e., runtime for timed-out instances is not included). \label{tab:additional}}
\setlength\tabcolsep{10pt}
\centering		
\sffamily
\begin{tabular}{ccccccc}
  \toprule
Benchmark (\#) 
& \multicolumn{2}{c}{\base'}
& \multicolumn{2}{c}{\backOnce'}
& \multicolumn{2}{c}{\backConverge'}\\
\cmidrule(lr){2-3} \cmidrule(lr){4-5} \cmidrule(lr){6-7}
& Solved & Time & Solved & Time & Solved & Time \\
SP 5 jobs (66) & 0 & 0 & 22 & 259 & \noindent\textbf{25} & 440 \\
SP 10 jobs (232) & 0 & 0 & 83 & 3195 & \noindent\textbf{94} & 4815 \\
\cmidrule(lr){1-7}
\mnistA (100) & 31 & 29 & 52 & 72 & \noindent\textbf{56} & 135 \\
\mnistB (100) & 24 & 37 & 43 & 100 & \noindent\textbf{46} & 175 \\
\bottomrule
\end{tabular}
\end{table}

\paragraph{Experiments} The evaluation results of the three configurations on the aforementioned benchmarks are shown in Table~\ref{tab:additional}. On the verification queries over GNNs, forward abstract interpretation alone (\base') is not able to solve any benchmarks, while the two configurations that perform backward abstraction refinement can solve a significant number of the benchmarks. This shows the benefits of forward-backward analysis as a stand-alone technique in improving verification precision and scalability. It is also worth noting that the effect of performing forward- and backward- analyses multiple times is more evident in this setting, where \backConverge' solves $13.3\%$ ($\frac{25 + 94}{22 + 83}$) more than \backOnce'.

On the adversarial robustness benchmarks, the forward-backward analysis also significantly boosts the verification precision. In particular, \backConverge' solves $85\%$ more than \base'. This confirms that the forward-backward abstract interpretation can be useful beyond GNN verification. 

\subsection{Multi-step verification\label{subsec:multi-step-experiment}}
We now turn to multi-step verification. Here we ask:
\begin{enumerate}
 \item Complete vs incomplete encodings. Is a complete encoding crucial to proving verification queries (due to imprecision of an incomplete encoding)? [Yes]
 \item Does the proof-transfer encoding speed up complete verification? [Yes]
\end{enumerate}

\paragraph{Benchmarks}
We evaluate our proposed techniques on verifying the $T$-step strategy-proofness as introduced in \secref{sec:spec}. We choose $T=5$, which requires enumeration of all possible traces of length 5, and $\alpha = 10$. Initial job profiles are heuristically selected following the steps below with the rationale of identifying challenging benchmarks:
\begin{itemize}
    \item Sample initial job states with 5 jobs from Decima's native test bed using random seeds 0-99. 
    \item For each initial job state, we run the simulation environment until 1/3 of the nodes are scheduled. At each step, if the score of the top node in the second top job is close to that of the top node in the top job ($< 0.9$),  we use the single-step verification engine to check whether from the pre-condition of the multi-step strategy-proofness properties, the scheduler can choose \emph{multiple} nodes as the next action. 
    \item For each step $t$ satisfying this condition, we add the job state at $t$, $t-2$, and $t-4$ to the multi-step benchmarks. The resulting benchmarks are guaranteed to have multiple possible traces from the initial set, thus making for more challenging benchmarks.
\end{itemize}
\extVersion{Additional evaluation result on the data-locality property~\cite{locality} can be found in App.~\ref{app:locality}.}


\begin{wrapfigure}{r}{0.64\textwidth}
\vspace{-5mm}
    \includegraphics[width=0.64\textwidth]{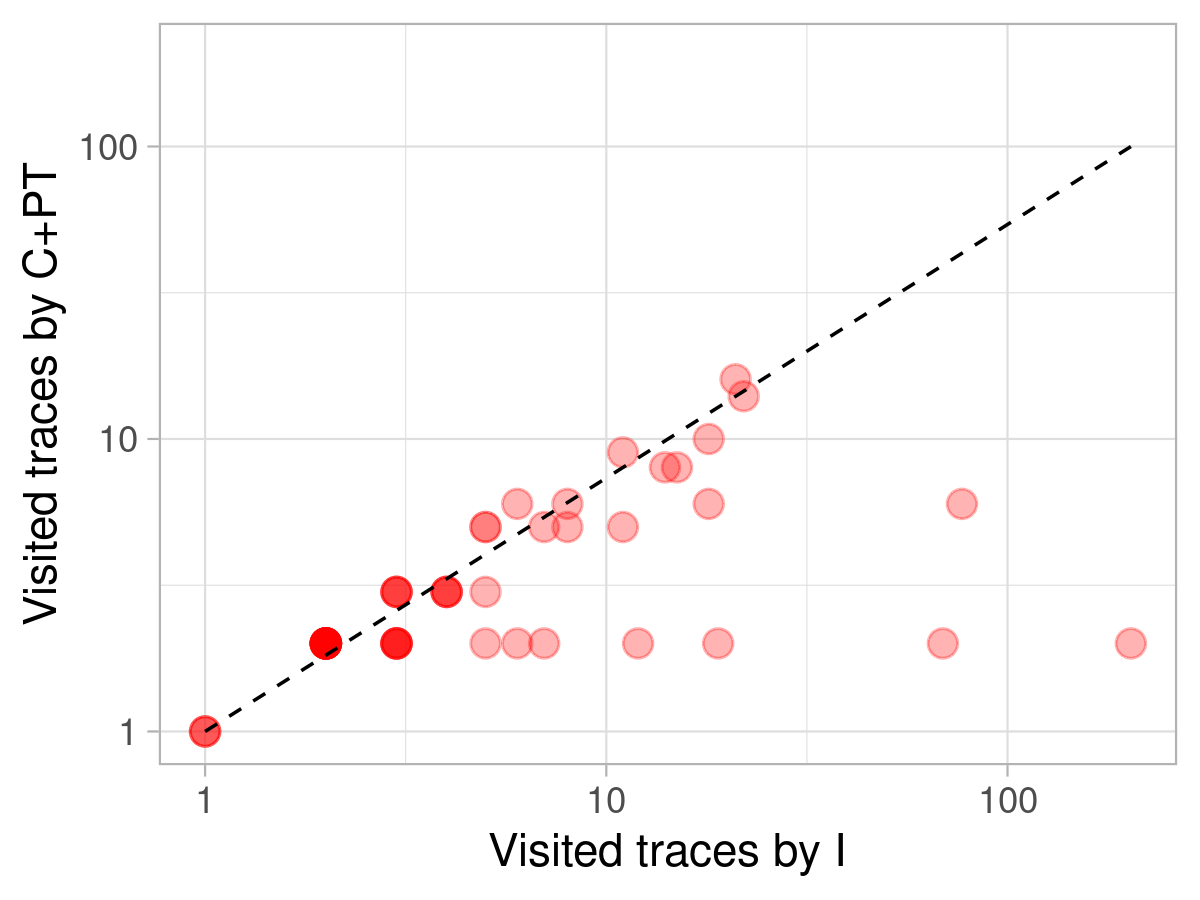}
  \vspace{-9mm}
  \caption{The number of visited traces by \singleStep and \completePT}
      \label{fig:ivscpt}
      \vspace{-4mm}
\end{wrapfigure}
\paragraph{Experiment}
We first evaluate the performance of the three configurations on the task of trace enumeration for 5 steps starting from the pre-conditions specified in strategy-proofness. 
\paragraph{Configurations}
We consider three configurations of \algoref{alg:multi}:
1) \singleStep does a sound but incomplete encoding of the state as described in \secref{sec:multi};
2) \completeNaive encodes the states and the state transitions precisely with a na\"ive unrolling;
3) \completePT also encodes the states and the state transitions precisely but with the proof-transfer encoding described in \secref{sec:multi}.

We found that \singleStep and \completePT both terminate on all benchmarks within 2 hours, while \completeNaive timed out on 15 of the 55 benchmarks. The average runtime of \singleStep, \completePT, and \completeNaive are respectively 350, 1129, and 3582 seconds (treating timed out instances as having runtime 7200 seconds).
To understand these results, we recall that all three configurations are based on trace enumeration and pruning. Both \completeNaive and \completePT encode the system precisely and therefore do not explore spurious (unrealizable) traces. However, as our experiment shows, the proof-transfer encoding results in significant speedup.

\figref{fig:ivscpt} shows the number of enumerated traces by \singleStep and \completePT on each benchmark. While \completePT enumerates the exact set of feasible traces, \singleStep enumerates a super-set of feasible traces due to its incomplete encoding. We observe that \singleStep includes spurious (i.e., infeasible) traces on 30 out of the 55 benchmarks. The difference in the number of visited traces can get quite large. For instance, for a certain initial condition (bottom right of \figref{fig:ivscpt}), there are only two feasible traces but \singleStep visits a total of 203 traces. The results emphasize the importance of a precise encoding, as exploring a large number of spurious traces can be computationally prohibitive and might prevent a solver from proving properties, as we see next. 

We now turn to verifying the multi-step strategy-proofness properties. To understand whether a precise encoding has benefits over an incomplete encoding, we focus on the 30 initial states where there is a difference in the enumerated traces between the complete and incomplete encodings. The result is shown in \tabref{tab:multi}. Due to the incomplete nature of \singleStep, not only is it unable to generate actual counter-examples, it also proves less properties than a complete procedure like \completePT. In contrast, \completePT solved all queries, and is able to prove 3 more specifications than \singleStep. This illustrates the benefit of a complete encoding. 

\begin{table}[h]
\caption{ \singleStep vs. \completePT on the multi-step strategy-proofness benchmarks. We show the number of instances that are proved and disproved respectively, as well as the runtime (s) on \emph{solved} instances. \label{tab:multi}}
\setlength\tabcolsep{5pt}
\centering		
\sffamily
\begin{tabular}{ccccccc}
  \toprule
Prop. (\# bench.)
& \multicolumn{3}{c}{\singleStep}
& \multicolumn{3}{c}{\completePT}\\
\cmidrule(lr){2-4} \cmidrule(lr){5-7}
& Proved & Disproved & Time & Proved & Disproved & Time \\
SP (30) & 19 & 0 & 14704 & \bf{22} & \bf{8} & 46099 \\
\bottomrule
\end{tabular}
\end{table}
\section{Related Work}
\label{sec:rel}

Most state-of-the-art neural network verifiers perform forward analysis, but a combination with backward analysis is under-explored. \cite{urban2020perfectly} proposes a procedure specific to fairness properties which uses a forward pre-analysis to partition the input region and a post-condition guided backward analysis to prove the properties for all activation patterns in the input region. \cite{yang2021improving} proposes a fundamentally different abstraction-refinement loop where the backward analysis iteratively refines the pre-condition for DeepPoly analysis. In contrast, we develop a general framework for forward-backward analysis on neural networks that can be instantiated with different abstract domains and prove theoretical results about soundness, monotonicity, and computational complexity of the framework. We also validate our approaches on more complex benchmarks compared with the aforementioned work.
Related to abstraction refinement,  \cite{frown,ryou2021scalable} use the post condition to refine the choice of slopes in forward over-approximations but do not consider the post condition as a hard constraint. \cite{singh2019boosting} combines forward abstract interpretation with MILP/LP solving for refinement but only considers the pre-condition, and the refinement is due to the precision gain of the MILP/LP solving.

Verification of RL-driven systems have also gained increasing attention recently~\cite{sun2019formal,amir2021towards}. Most recently, \cite{amir2021towards} explores the general use of techniques such as k-induction and invariant inference in those settings while treating the verification procedure as a black box. In contrast, we focus on a challenging and interesting setting of verifying GNN-based job schedulers, where the neural network architecture is more complex and much larger. This motivates the development of a set of new techniques to improve solver's scalability.

The verification of Graph Neural Networks is an important yet under-explored topic. Previous work on GNN-verification focuses on structural perturbations~\cite{wang2021certified,bojchevski2019certifiable} and robustness properties. In contrast, we focus on a rich set of properties emerging from the scheduling domain. \cite{wang2021certified} uses random smoothing and therefore gives probabilistic guarantees. On the other hand, \cite{bojchevski2019certifiable} consider a finite perturbation space (adding/removing finite number of edges) while we focus on infinite perturbation sets defined as linear constraints over the node features.

\section{Conclusion and future work}
\label{sec:concl}
In this work, we proposed the first verification framework for GNN-based job schedulers. This setting poses unique challenges due to deeper network architecture and richer specifications compared to those handled by existing neural network verifiers. We considered both single-step and multi-step verification and designed general methods for both that leverage abstractions, refinements, solvers, and proof transfer to experimentally achieve significantly better precision and speed than baselines. We believe that \sys can be used by system developers to check whether different user expectations are met and make adjustments if needed.
\sys can also be potentially integrated in the training loop of the GNN-based scheduler to guarantee by construction properties specified by the system designer. Similar approaches have been used to create stable neural network controllers in the robotics domain~\cite{dai2021lyapunov}.
We also believe that the verification benchmarks used in the experiments, which are different from the canonical adversarial robustness queries, would in themselves be a valuable contribution to the research community. 

We also note that our proposed techniques have different levels of generality. The forward-backward analysis applies to any feed-forward/convolutional/residual neural networks and any abstraction satisfying the conditions in Sec.~\ref{sec:backward}. The node abstraction scheme is generalizable to GNN-based node prediction tasks. The multi-step verification procedure is specific to job-scheduling but could potentially be extended to other RL-driven systems. Exploring the general effectiveness of the proposed techniques would be an interesting direction for future work. There are multiple other promising future directions. First, there is still room to improve the performance of \sys: e.g., using more precise abstraction domains or devising a specialized complete procedure that better leverages the problem structure. Second, it would be interesting to evaluate global properties of the GNN-scheduler, which (as discussed in Sec.~\ref{sec:spec}) presents additional research challenges.

\section*{Acknowledgments}

We thank the anonymous reviewers for their constructive feedback, and Guy Katz for some early discussion on forward-backward analysis. This work was conducted while the first author was an intern at VMWare Research.  It was also partially supported by NSF (RINGS \#2148583 and NSF-BSF \#1814369).

\newpage

\bibliography{bibli}


\begin{thebibliography}{72}


\ifx \showCODEN    \undefined \def \showCODEN     #1{\unskip}     \fi
\ifx \showDOI      \undefined \def \showDOI       #1{#1}\fi
\ifx \showISBNx    \undefined \def \showISBNx     #1{\unskip}     \fi
\ifx \showISBNxiii \undefined \def \showISBNxiii  #1{\unskip}     \fi
\ifx \showISSN     \undefined \def \showISSN      #1{\unskip}     \fi
\ifx \showLCCN     \undefined \def \showLCCN      #1{\unskip}     \fi
\ifx \shownote     \undefined \def \shownote      #1{#1}          \fi
\ifx \showarticletitle \undefined \def \showarticletitle #1{#1}   \fi
\ifx \showURL      \undefined \def \showURL       {\relax}        \fi
\providecommand\bibfield[2]{#2}
\providecommand\bibinfo[2]{#2}
\providecommand\natexlab[1]{#1}
\providecommand\showeprint[2][]{arXiv:#2}

\bibitem[Aho et~al\mbox{.}(2007)]%
        {aho2007compilers}
\bibfield{author}{\bibinfo{person}{Alfred~V Aho}, \bibinfo{person}{Monica~S
  Lam}, \bibinfo{person}{Ravi Sethi}, {and} \bibinfo{person}{Jeffrey~D
  Ullman}.} \bibinfo{year}{2007}\natexlab{}.
\newblock \bibinfo{booktitle}{\emph{Compilers: principles, techniques, \&
  tools}}.
\newblock \bibinfo{publisher}{Pearson Education India}.
\newblock


\bibitem[Amir et~al\mbox{.}(2021)]%
        {amir2021towards}
\bibfield{author}{\bibinfo{person}{Guy Amir}, \bibinfo{person}{Michael
  Schapira}, {and} \bibinfo{person}{Guy Katz}.}
  \bibinfo{year}{2021}\natexlab{}.
\newblock \showarticletitle{Towards Scalable Verification of Deep Reinforcement
  Learning}. In \bibinfo{booktitle}{\emph{2021 Formal Methods in Computer-Aided
  Design (FMCAD)}}. \bibinfo{pages}{193--203}.
\newblock


\bibitem[Anderson et~al\mbox{.}(2019)]%
        {optAndAbs}
\bibfield{author}{\bibinfo{person}{Greg Anderson}, \bibinfo{person}{Shankara
  Pailoor}, \bibinfo{person}{Isil Dillig}, {and} \bibinfo{person}{Swarat
  Chaudhuri}.} \bibinfo{year}{2019}\natexlab{}.
\newblock \showarticletitle{Optimization and Abstraction: A Synergistic
  Approach for Analyzing Neural Network Robustness}. In
  \bibinfo{booktitle}{\emph{Proc. Programming Language Design and
  Implementation (PLDI)}}. \bibinfo{pages}{731–744}.
\newblock


\bibitem[Bak et~al\mbox{.}(2020)]%
        {nnenum}
\bibfield{author}{\bibinfo{person}{Stanley Bak}, \bibinfo{person}{Hoang-Dung
  Tran}, \bibinfo{person}{Kerianne Hobbs}, {and} \bibinfo{person}{Taylor~T
  Johnson}.} \bibinfo{year}{2020}\natexlab{}.
\newblock \showarticletitle{Improved geometric path enumeration for verifying
  ReLU neural networks}. In \bibinfo{booktitle}{\emph{International Conference
  on Computer Aided Verification}}. Springer, \bibinfo{pages}{66--96}.
\newblock


\bibitem[Barroso et~al\mbox{.}(2013)]%
        {barroso2013datacenter}
\bibfield{author}{\bibinfo{person}{Luiz~Andr{\'e} Barroso},
  \bibinfo{person}{Jimmy Clidaras}, {and} \bibinfo{person}{Urs H{\"o}lzle}.}
  \bibinfo{year}{2013}\natexlab{}.
\newblock \showarticletitle{The datacenter as a computer: An introduction to
  the design of warehouse-scale machines}.
\newblock \bibinfo{journal}{\emph{Synthesis lectures on computer architecture}}
  \bibinfo{volume}{8}, \bibinfo{number}{3} (\bibinfo{year}{2013}),
  \bibinfo{pages}{1--154}.
\newblock


\bibitem[Bojchevski and G{\"u}nnemann(2019)]%
        {bojchevski2019certifiable}
\bibfield{author}{\bibinfo{person}{Aleksandar Bojchevski} {and}
  \bibinfo{person}{Stephan G{\"u}nnemann}.} \bibinfo{year}{2019}\natexlab{}.
\newblock \showarticletitle{Certifiable robustness to graph perturbations}.
\newblock \bibinfo{journal}{\emph{arXiv preprint arXiv:1910.14356}}
  (\bibinfo{year}{2019}).
\newblock


\bibitem[Boopathy et~al\mbox{.}(2019)]%
        {cnn-cert}
\bibfield{author}{\bibinfo{person}{Akhilan Boopathy}, \bibinfo{person}{Tsui-Wei
  Weng}, \bibinfo{person}{Pin-Yu Chen}, \bibinfo{person}{Sijia Liu}, {and}
  \bibinfo{person}{Luca Daniel}.} \bibinfo{year}{2019}\natexlab{}.
\newblock \showarticletitle{Cnn-cert: An efficient framework for certifying
  robustness of convolutional neural networks}. In
  \bibinfo{booktitle}{\emph{Proceedings of the AAAI Conference on Artificial
  Intelligence}}, Vol.~\bibinfo{volume}{33}. \bibinfo{pages}{3240--3247}.
\newblock


\bibitem[Botoeva et~al\mbox{.}(2020)]%
        {dependency}
\bibfield{author}{\bibinfo{person}{Elena Botoeva}, \bibinfo{person}{Panagiotis
  Kouvaros}, \bibinfo{person}{Jan Kronqvist}, \bibinfo{person}{Alessio
  Lomuscio}, {and} \bibinfo{person}{Ruth Misener}.}
  \bibinfo{year}{2020}\natexlab{}.
\newblock \showarticletitle{Efficient verification of relu-based neural
  networks via dependency analysis}. In \bibinfo{booktitle}{\emph{Proceedings
  of the AAAI Conference on Artificial Intelligence}},
  Vol.~\bibinfo{volume}{34}. \bibinfo{pages}{3291--3299}.
\newblock


\bibitem[Bunel et~al\mbox{.}(2020)]%
        {babsr}
\bibfield{author}{\bibinfo{person}{Rudy Bunel}, \bibinfo{person}{Jingyue Lu},
  \bibinfo{person}{Ilker Turkaslan}, \bibinfo{person}{Pushmeet Kohli},
  \bibinfo{person}{P Torr}, {and} \bibinfo{person}{P Mudigonda}.}
  \bibinfo{year}{2020}\natexlab{}.
\newblock \showarticletitle{Branch and bound for piecewise linear neural
  network verification}.
\newblock \bibinfo{journal}{\emph{Journal of Machine Learning Research}}
  \bibinfo{volume}{21}, \bibinfo{number}{2020} (\bibinfo{year}{2020}).
\newblock


\bibitem[Bunel et~al\mbox{.}(2018)]%
        {unified}
\bibfield{author}{\bibinfo{person}{Rudy~R Bunel}, \bibinfo{person}{Ilker
  Turkaslan}, \bibinfo{person}{Philip Torr}, \bibinfo{person}{Pushmeet Kohli},
  {and} \bibinfo{person}{Pawan~K Mudigonda}.} \bibinfo{year}{2018}\natexlab{}.
\newblock \showarticletitle{A Unified View of Piecewise Linear Neural Network
  Verification}. In \bibinfo{booktitle}{\emph{Advances in Neural Information
  Processing Systems}}, \bibfield{editor}{\bibinfo{person}{S.~Bengio},
  \bibinfo{person}{H.~Wallach}, \bibinfo{person}{H.~Larochelle},
  \bibinfo{person}{K.~Grauman}, \bibinfo{person}{N.~Cesa-Bianchi}, {and}
  \bibinfo{person}{R.~Garnett}} (Eds.), Vol.~\bibinfo{volume}{31}.
  \bibinfo{publisher}{Curran Associates, Inc.}
\newblock
\urldef\tempurl%
\url{https://proceedings.neurips.cc/paper/2018/file/be53d253d6bc3258a8160556dda3e9b2-Paper.pdf}
\showURL{%
\tempurl}


\bibitem[Dai et~al\mbox{.}(2021)]%
        {dai2021lyapunov}
\bibfield{author}{\bibinfo{person}{Hongkai Dai}, \bibinfo{person}{Benoit
  Landry}, \bibinfo{person}{Lujie Yang}, \bibinfo{person}{Marco Pavone}, {and}
  \bibinfo{person}{Russ Tedrake}.} \bibinfo{year}{2021}\natexlab{}.
\newblock \showarticletitle{Lyapunov-stable neural-network control}.
\newblock \bibinfo{journal}{\emph{arXiv preprint arXiv:2109.14152}}
  (\bibinfo{year}{2021}).
\newblock


\bibitem[De~Palma et~al\mbox{.}(2021)]%
        {scaling}
\bibfield{author}{\bibinfo{person}{Alessandro De~Palma},
  \bibinfo{person}{Harkirat~Singh Behl}, \bibinfo{person}{Rudy Bunel},
  \bibinfo{person}{Philip~HS Torr}, {and} \bibinfo{person}{M~Pawan Kumar}.}
  \bibinfo{year}{2021}\natexlab{}.
\newblock \showarticletitle{Scaling the convex barrier with active sets}.
\newblock \bibinfo{journal}{\emph{arXiv preprint arXiv:2101.05844}}
  (\bibinfo{year}{2021}).
\newblock


\bibitem[Defferrard et~al\mbox{.}(2016)]%
        {defferrard2016convolutional}
\bibfield{author}{\bibinfo{person}{Micha{\"e}l Defferrard},
  \bibinfo{person}{Xavier Bresson}, {and} \bibinfo{person}{Pierre
  Vandergheynst}.} \bibinfo{year}{2016}\natexlab{}.
\newblock \showarticletitle{Convolutional neural networks on graphs with fast
  localized spectral filtering}.
\newblock \bibinfo{journal}{\emph{Advances in neural information processing
  systems}}  \bibinfo{volume}{29} (\bibinfo{year}{2016}).
\newblock


\bibitem[Dutta et~al\mbox{.}(2018)]%
        {sherlock}
\bibfield{author}{\bibinfo{person}{Souradeep Dutta}, \bibinfo{person}{Susmit
  Jha}, \bibinfo{person}{Sriram Sankaranarayanan}, {and}
  \bibinfo{person}{Ashish Tiwari}.} \bibinfo{year}{2018}\natexlab{}.
\newblock \showarticletitle{Output Range Analysis for Deep Feedforward Neural
  Networks}. In \bibinfo{booktitle}{\emph{{NASA} Formal Methods - 10th
  International Symposium, {NFM} 2018, Newport News, VA, USA, April 17-19,
  2018, Proceedings}}.
\newblock


\bibitem[Duvenaud et~al\mbox{.}(2015)]%
        {DuvenaudMABHAA15}
\bibfield{author}{\bibinfo{person}{David Duvenaud}, \bibinfo{person}{Dougal
  Maclaurin}, \bibinfo{person}{Jorge Aguilera{-}Iparraguirre},
  \bibinfo{person}{Rafael G{\'{o}}mez{-}Bombarelli}, \bibinfo{person}{Timothy
  Hirzel}, \bibinfo{person}{Al{\'{a}}n Aspuru{-}Guzik}, {and}
  \bibinfo{person}{Ryan~P. Adams}.} \bibinfo{year}{2015}\natexlab{}.
\newblock \showarticletitle{Convolutional Networks on Graphs for Learning
  Molecular Fingerprints}. In \bibinfo{booktitle}{\emph{Proc. Advances in
  Neural Information Processing Systems (NeurIPS)}},
  \bibfield{editor}{\bibinfo{person}{Corinna Cortes}, \bibinfo{person}{Neil~D.
  Lawrence}, \bibinfo{person}{Daniel~D. Lee}, \bibinfo{person}{Masashi
  Sugiyama}, {and} \bibinfo{person}{Roman Garnett}} (Eds.).
  \bibinfo{pages}{2224--2232}.
\newblock


\bibitem[Ehlers(2017)]%
        {planet}
\bibfield{author}{\bibinfo{person}{Ruediger Ehlers}.}
  \bibinfo{year}{2017}\natexlab{}.
\newblock \showarticletitle{Formal verification of piece-wise linear
  feed-forward neural networks}. In \bibinfo{booktitle}{\emph{International
  Symposium on Automated Technology for Verification and Analysis}}. Springer,
  \bibinfo{pages}{269--286}.
\newblock


\bibitem[Fischetti and Jo(2017)]%
        {mip01}
\bibfield{author}{\bibinfo{person}{Matteo Fischetti} {and}
  \bibinfo{person}{Jason Jo}.} \bibinfo{year}{2017}\natexlab{}.
\newblock \showarticletitle{Deep Neural Networks as 0-1 Mixed Integer Linear
  Programs: {A} Feasibility Study}.
\newblock \bibinfo{journal}{\emph{CoRR}}  \bibinfo{volume}{abs/1712.06174}
  (\bibinfo{year}{2017}).
\newblock


\bibitem[Fout et~al\mbox{.}(2017)]%
        {FoutBSB:17}
\bibfield{author}{\bibinfo{person}{Alex Fout}, \bibinfo{person}{Jonathon Byrd},
  \bibinfo{person}{Basir Shariat}, {and} \bibinfo{person}{Asa Ben{-}Hur}.}
  \bibinfo{year}{2017}\natexlab{}.
\newblock \showarticletitle{Protein Interface Prediction using Graph
  Convolutional Networks}. In \bibinfo{booktitle}{\emph{Proc. Advances in
  Neural Information Processing Systems (NeurIPS)}}.
  \bibinfo{pages}{6530--6539}.
\newblock


\bibitem[Fromherz et~al\mbox{.}(2020)]%
        {fromherz2020fast}
\bibfield{author}{\bibinfo{person}{Aymeric Fromherz}, \bibinfo{person}{Klas
  Leino}, \bibinfo{person}{Matt Fredrikson}, \bibinfo{person}{Bryan Parno},
  {and} \bibinfo{person}{Corina P{\u{a}}s{\u{a}}reanu}.}
  \bibinfo{year}{2020}\natexlab{}.
\newblock \showarticletitle{Fast geometric projections for local robustness
  certification}.
\newblock \bibinfo{journal}{\emph{arXiv preprint arXiv:2002.04742}}
  (\bibinfo{year}{2020}).
\newblock


\bibitem[Gehr et~al\mbox{.}(2018)]%
        {AI2}
\bibfield{author}{\bibinfo{person}{Timon Gehr}, \bibinfo{person}{Matthew
  Mirman}, \bibinfo{person}{Dana Drachsler{-}Cohen}, \bibinfo{person}{Petar
  Tsankov}, \bibinfo{person}{Swarat Chaudhuri}, {and}
  \bibinfo{person}{Martin~T. Vechev}.} \bibinfo{year}{2018}\natexlab{}.
\newblock \showarticletitle{{AI2:} Safety and Robustness Certification of
  Neural Networks with Abstract Interpretation}. In
  \bibinfo{booktitle}{\emph{2018 {IEEE} Symposium on Security and Privacy, {SP}
  2018, Proceedings, 21-23 May 2018, San Francisco, California, {USA}}}.
  \bibinfo{pages}{3--18}.
\newblock
\urldef\tempurl%
\url{https://doi.org/10.1109/SP.2018.00058}
\showDOI{\tempurl}


\bibitem[Ghodsi et~al\mbox{.}(2011)]%
        {fairness}
\bibfield{author}{\bibinfo{person}{Ali Ghodsi}, \bibinfo{person}{Matei
  Zaharia}, \bibinfo{person}{Benjamin Hindman}, \bibinfo{person}{Andy
  Konwinski}, \bibinfo{person}{Scott Shenker}, {and} \bibinfo{person}{Ion
  Stoica}.} \bibinfo{year}{2011}\natexlab{}.
\newblock \showarticletitle{Dominant resource fairness: Fair allocation of
  multiple resource types}. In \bibinfo{booktitle}{\emph{8th USENIX Symposium
  on Networked Systems Design and Implementation (NSDI 11)}}.
\newblock


\bibitem[Henriksen and Lomuscio(2021)]%
        {deepsplit}
\bibfield{author}{\bibinfo{person}{Patrick Henriksen} {and}
  \bibinfo{person}{Alessio Lomuscio}.} \bibinfo{year}{2021}\natexlab{}.
\newblock \showarticletitle{DEEPSPLIT: An efficient splitting method for neural
  network verification via indirect effect analysis}. In
  \bibinfo{booktitle}{\emph{Proceedings of the 30th international joint
  conference on artificial intelligence (IJCAI21). To Appear. ijcai. org}}.
\newblock


\bibitem[Huang et~al\mbox{.}(2017)]%
        {dlv}
\bibfield{author}{\bibinfo{person}{Xiaowei Huang}, \bibinfo{person}{Marta
  Kwiatkowska}, \bibinfo{person}{Sen Wang}, {and} \bibinfo{person}{Min Wu}.}
  \bibinfo{year}{2017}\natexlab{}.
\newblock \showarticletitle{Safety Verification of Deep Neural Networks}. In
  \bibinfo{booktitle}{\emph{CAV}}.
\newblock


\bibitem[Kandasamy et~al\mbox{.}(2020)]%
        {kandasamy2020online}
\bibfield{author}{\bibinfo{person}{Kirthevasan Kandasamy},
  \bibinfo{person}{Gur-Eyal Sela}, \bibinfo{person}{Joseph~E Gonzalez},
  \bibinfo{person}{Michael~I Jordan}, {and} \bibinfo{person}{Ion Stoica}.}
  \bibinfo{year}{2020}\natexlab{}.
\newblock \showarticletitle{Online learning demands in max-min fairness}.
\newblock \bibinfo{journal}{\emph{arXiv preprint arXiv:2012.08648}}
  (\bibinfo{year}{2020}).
\newblock


\bibitem[Katz et~al\mbox{.}(2017)]%
        {reluplex}
\bibfield{author}{\bibinfo{person}{G. Katz}, \bibinfo{person}{C. Barrett},
  \bibinfo{person}{D. Dill}, \bibinfo{person}{K. Julian}, {and}
  \bibinfo{person}{M. Kochenderfer}.} \bibinfo{year}{2017}\natexlab{}.
\newblock \showarticletitle{{Reluplex: An Efficient SMT Solver for Verifying
  Deep Neural Networks}}. In \bibinfo{booktitle}{\emph{Proc. 29th Int. Conf. on
  Computer Aided Verification (CAV)}}. \bibinfo{pages}{97--117}.
\newblock


\bibitem[Katz et~al\mbox{.}(2019)]%
        {marabou}
\bibfield{author}{\bibinfo{person}{Guy Katz}, \bibinfo{person}{Derek~A Huang},
  \bibinfo{person}{Duligur Ibeling}, \bibinfo{person}{Kyle Julian},
  \bibinfo{person}{Christopher Lazarus}, \bibinfo{person}{Rachel Lim},
  \bibinfo{person}{Parth Shah}, \bibinfo{person}{Shantanu Thakoor},
  \bibinfo{person}{Haoze Wu}, \bibinfo{person}{Aleksandar Zelji{\'c}},
  {et~al\mbox{.}}} \bibinfo{year}{2019}\natexlab{}.
\newblock \showarticletitle{The marabou framework for verification and analysis
  of deep neural networks}. In \bibinfo{booktitle}{\emph{International
  Conference on Computer Aided Verification}}. \bibinfo{pages}{443--452}.
\newblock


\bibitem[Khalil et~al\mbox{.}(2017)]%
        {khalil2017learning}
\bibfield{author}{\bibinfo{person}{Elias Khalil}, \bibinfo{person}{Hanjun Dai},
  \bibinfo{person}{Yuyu Zhang}, \bibinfo{person}{Bistra Dilkina}, {and}
  \bibinfo{person}{Le Song}.} \bibinfo{year}{2017}\natexlab{}.
\newblock \showarticletitle{Learning combinatorial optimization algorithms over
  graphs}.
\newblock \bibinfo{journal}{\emph{Advances in neural information processing
  systems}}  \bibinfo{volume}{30} (\bibinfo{year}{2017}).
\newblock


\bibitem[Khedr et~al\mbox{.}(2020)]%
        {peregrinn}
\bibfield{author}{\bibinfo{person}{Haitham Khedr}, \bibinfo{person}{James
  Ferlez}, {and} \bibinfo{person}{Yasser Shoukry}.}
  \bibinfo{year}{2020}\natexlab{}.
\newblock \showarticletitle{PEREGRiNN: Penalized-Relaxation Greedy Neural
  Network Verifier}.
\newblock \bibinfo{journal}{\emph{arXiv preprint arXiv:2006.10864}}
  (\bibinfo{year}{2020}).
\newblock


\bibitem[Kipf and Welling(2017)]%
        {KipfW:17}
\bibfield{author}{\bibinfo{person}{Thomas~N. Kipf} {and} \bibinfo{person}{Max
  Welling}.} \bibinfo{year}{2017}\natexlab{}.
\newblock \showarticletitle{Semi-Supervised Classification with Graph
  Convolutional Networks}. In \bibinfo{booktitle}{\emph{Proc. International
  Conference on Learning Representations, (ICLR)}}.
  \bibinfo{publisher}{OpenReview.net}.
\newblock


\bibitem[LeCun and Cortes(2010)]%
        {mnist}
\bibfield{author}{\bibinfo{person}{Yann LeCun} {and} \bibinfo{person}{Corinna
  Cortes}.} \bibinfo{year}{2010}\natexlab{}.
\newblock \showarticletitle{{MNIST} handwritten digit database}.
\newblock \bibinfo{howpublished}{http://yann.lecun.com/exdb/mnist/}.
\newblock  (\bibinfo{year}{2010}).
\newblock
\urldef\tempurl%
\url{http://yann.lecun.com/exdb/mnist/}
\showURL{%
\tempurl}


\bibitem[Lu and Kumar(2019)]%
        {branching}
\bibfield{author}{\bibinfo{person}{Jingyue Lu} {and} \bibinfo{person}{M~Pawan
  Kumar}.} \bibinfo{year}{2019}\natexlab{}.
\newblock \showarticletitle{Neural network branching for neural network
  verification}.
\newblock \bibinfo{journal}{\emph{arXiv preprint arXiv:1912.01329}}
  (\bibinfo{year}{2019}).
\newblock


\bibitem[Lyu et~al\mbox{.}(2020)]%
        {frown}
\bibfield{author}{\bibinfo{person}{Zhaoyang Lyu}, \bibinfo{person}{Ching-Yun
  Ko}, \bibinfo{person}{Zhifeng Kong}, \bibinfo{person}{Ngai Wong},
  \bibinfo{person}{Dahua Lin}, {and} \bibinfo{person}{Luca Daniel}.}
  \bibinfo{year}{2020}\natexlab{}.
\newblock \showarticletitle{Fastened crown: Tightened neural network robustness
  certificates}. In \bibinfo{booktitle}{\emph{Proceedings of the AAAI
  Conference on Artificial Intelligence}}, Vol.~\bibinfo{volume}{34}.
  \bibinfo{pages}{5037--5044}.
\newblock


\bibitem[Mao et~al\mbox{.}(2019)]%
        {decima}
\bibfield{author}{\bibinfo{person}{Hongzi Mao}, \bibinfo{person}{Malte
  Schwarzkopf}, \bibinfo{person}{Shaileshh~Bojja Venkatakrishnan},
  \bibinfo{person}{Zili Meng}, {and} \bibinfo{person}{Mohammad Alizadeh}.}
  \bibinfo{year}{2019}\natexlab{}.
\newblock \showarticletitle{Learning scheduling algorithms for data processing
  clusters}.
\newblock In \bibinfo{booktitle}{\emph{Proceedings of the ACM Special Interest
  Group on Data Communication}}. \bibinfo{pages}{270--288}.
\newblock


\bibitem[M{\"u}ller et~al\mbox{.}(2021b)]%
        {gpupoly}
\bibfield{author}{\bibinfo{person}{Christoph M{\"u}ller},
  \bibinfo{person}{Fran{\c{c}}ois Serre}, \bibinfo{person}{Gagandeep Singh},
  \bibinfo{person}{Markus P{\"u}schel}, {and} \bibinfo{person}{Martin Vechev}.}
  \bibinfo{year}{2021}\natexlab{b}.
\newblock \showarticletitle{Scaling Polyhedral Neural Network Verification on
  GPUs}.
\newblock \bibinfo{journal}{\emph{Proceedings of Machine Learning and Systems}}
   \bibinfo{volume}{3} (\bibinfo{year}{2021}).
\newblock


\bibitem[M{\"u}ller et~al\mbox{.}(2021a)]%
        {prima}
\bibfield{author}{\bibinfo{person}{Mark~Niklas M{\"u}ller},
  \bibinfo{person}{Gleb Makarchuk}, \bibinfo{person}{Gagandeep Singh},
  \bibinfo{person}{Markus P{\"u}schel}, {and} \bibinfo{person}{Martin Vechev}.}
  \bibinfo{year}{2021}\natexlab{a}.
\newblock \showarticletitle{Precise Multi-Neuron Abstractions for Neural
  Network Certification}.
\newblock \bibinfo{journal}{\emph{arXiv preprint arXiv:2103.03638}}
  (\bibinfo{year}{2021}).
\newblock


\bibitem[Niepert et~al\mbox{.}(2016)]%
        {NiepertAK:16}
\bibfield{author}{\bibinfo{person}{Mathias Niepert}, \bibinfo{person}{Mohamed
  Ahmed}, {and} \bibinfo{person}{Konstantin Kutzkov}.}
  \bibinfo{year}{2016}\natexlab{}.
\newblock \showarticletitle{Learning Convolutional Neural Networks for Graphs}.
  In \bibinfo{booktitle}{\emph{Proc. International Conference on Machine
  Learning, {ICML}}}, Vol.~\bibinfo{volume}{48}. \bibinfo{pages}{2014--2023}.
\newblock


\bibitem[Park et~al\mbox{.}(2021)]%
        {park2021learning}
\bibfield{author}{\bibinfo{person}{Junyoung Park}, \bibinfo{person}{Jaehyeong
  Chun}, \bibinfo{person}{Sang~Hun Kim}, \bibinfo{person}{Youngkook Kim}, {and}
  \bibinfo{person}{Jinkyoo Park}.} \bibinfo{year}{2021}\natexlab{}.
\newblock \showarticletitle{Learning to schedule job-shop problems:
  representation and policy learning using graph neural network and
  reinforcement learning}.
\newblock \bibinfo{journal}{\emph{International Journal of Production
  Research}} \bibinfo{volume}{59}, \bibinfo{number}{11} (\bibinfo{year}{2021}),
  \bibinfo{pages}{3360--3377}.
\newblock


\bibitem[Raghunathan et~al\mbox{.}(2018)]%
        {sdp}
\bibfield{author}{\bibinfo{person}{Aditi Raghunathan}, \bibinfo{person}{Jacob
  Steinhardt}, {and} \bibinfo{person}{Percy Liang}.}
  \bibinfo{year}{2018}\natexlab{}.
\newblock \showarticletitle{Semidefinite relaxations for certifying robustness
  to adversarial examples}.
\newblock \bibinfo{journal}{\emph{arXiv preprint arXiv:1811.01057}}
  (\bibinfo{year}{2018}).
\newblock


\bibitem[Ryou et~al\mbox{.}(2021)]%
        {ryou2021scalable}
\bibfield{author}{\bibinfo{person}{Wonryong Ryou}, \bibinfo{person}{Jiayu
  Chen}, \bibinfo{person}{Mislav Balunovic}, \bibinfo{person}{Gagandeep Singh},
  \bibinfo{person}{Andrei Dan}, {and} \bibinfo{person}{Martin Vechev}.}
  \bibinfo{year}{2021}\natexlab{}.
\newblock \showarticletitle{Scalable Polyhedral Verification of Recurrent
  Neural Networks}. In \bibinfo{booktitle}{\emph{International Conference on
  Computer Aided Verification}}. Springer, \bibinfo{pages}{225--248}.
\newblock


\bibitem[Salman et~al\mbox{.}(2019)]%
        {barrier}
\bibfield{author}{\bibinfo{person}{Hadi Salman}, \bibinfo{person}{Greg Yang},
  \bibinfo{person}{Huan Zhang}, \bibinfo{person}{Cho-Jui Hsieh}, {and}
  \bibinfo{person}{Pengchuan Zhang}.} \bibinfo{year}{2019}\natexlab{}.
\newblock \showarticletitle{A Convex Relaxation Barrier to Tight Robustness
  Verification of Neural Networks}. In \bibinfo{booktitle}{\emph{Advances in
  Neural Information Processing Systems}},
  \bibfield{editor}{\bibinfo{person}{H.~Wallach},
  \bibinfo{person}{H.~Larochelle}, \bibinfo{person}{A.~Beygelzimer},
  \bibinfo{person}{F.~d\textquotesingle Alch\'{e}-Buc},
  \bibinfo{person}{E.~Fox}, {and} \bibinfo{person}{R.~Garnett}} (Eds.),
  Vol.~\bibinfo{volume}{32}. \bibinfo{publisher}{Curran Associates, Inc.}
\newblock
\urldef\tempurl%
\url{https://proceedings.neurips.cc/paper/2019/file/246a3c5544feb054f3ea718f61adfa16-Paper.pdf}
\showURL{%
\tempurl}


\bibitem[Shenker and Stoica(2013)]%
        {shenker2013choosy}
\bibfield{author}{\bibinfo{person}{Ali Ghodsi Matei Zaharia~Scott Shenker}
  {and} \bibinfo{person}{Ion Stoica}.} \bibinfo{year}{2013}\natexlab{}.
\newblock \showarticletitle{Choosy: Max-Min Fair Sharing for Datacenter Jobs
  with Constraints}.
\newblock  (\bibinfo{year}{2013}).
\newblock


\bibitem[Singh et~al\mbox{.}(2019a)]%
        {kpoly}
\bibfield{author}{\bibinfo{person}{Gagandeep Singh}, \bibinfo{person}{Rupanshu
  Ganvir}, \bibinfo{person}{Markus P{\"u}schel}, {and} \bibinfo{person}{Martin
  Vechev}.} \bibinfo{year}{2019}\natexlab{a}.
\newblock \showarticletitle{Beyond the single neuron convex barrier for neural
  network certification}.
\newblock \bibinfo{journal}{\emph{Advances in Neural Information Processing
  Systems}}  \bibinfo{volume}{32} (\bibinfo{year}{2019}),
  \bibinfo{pages}{15098--15109}.
\newblock


\bibitem[Singh et~al\mbox{.}(2018a)]%
        {deepz}
\bibfield{author}{\bibinfo{person}{Gagandeep Singh}, \bibinfo{person}{Timon
  Gehr}, \bibinfo{person}{Matthew Mirman}, \bibinfo{person}{Markus
  P{\"u}schel}, {and} \bibinfo{person}{Martin Vechev}.}
  \bibinfo{year}{2018}\natexlab{a}.
\newblock \showarticletitle{Fast and effective robustness certification}.
\newblock \bibinfo{journal}{\emph{Advances in Neural Information Processing
  Systems}}  \bibinfo{volume}{31} (\bibinfo{year}{2018}),
  \bibinfo{pages}{10802--10813}.
\newblock


\bibitem[Singh et~al\mbox{.}(2019b)]%
        {deeppoly}
\bibfield{author}{\bibinfo{person}{Gagandeep Singh}, \bibinfo{person}{Timon
  Gehr}, \bibinfo{person}{Markus P{\"u}schel}, {and} \bibinfo{person}{Martin
  Vechev}.} \bibinfo{year}{2019}\natexlab{b}.
\newblock \showarticletitle{An abstract domain for certifying neural networks}.
\newblock \bibinfo{journal}{\emph{Proceedings of the ACM on Programming
  Languages}} \bibinfo{volume}{3}, \bibinfo{number}{POPL}
  (\bibinfo{year}{2019}), \bibinfo{pages}{1--30}.
\newblock


\bibitem[Singh et~al\mbox{.}(2019c)]%
        {singh2019boosting}
\bibfield{author}{\bibinfo{person}{Gagandeep Singh}, \bibinfo{person}{Timon
  Gehr}, \bibinfo{person}{Markus Püschel}, {and} \bibinfo{person}{Martin
  Vechev}.} \bibinfo{year}{2019}\natexlab{c}.
\newblock \showarticletitle{Boosting Robustness Certification of Neural
  Networks}. In \bibinfo{booktitle}{\emph{International Conference on Learning
  Representations}}.
\newblock


\bibitem[Singh et~al\mbox{.}(2017)]%
        {singh2017fast}
\bibfield{author}{\bibinfo{person}{Gagandeep Singh}, \bibinfo{person}{Markus
  P{\"u}schel}, {and} \bibinfo{person}{Martin Vechev}.}
  \bibinfo{year}{2017}\natexlab{}.
\newblock \showarticletitle{Fast polyhedra abstract domain}. In
  \bibinfo{booktitle}{\emph{Proceedings of the 44th ACM SIGPLAN Symposium on
  Principles of Programming Languages}}. \bibinfo{publisher}{ACM New York, NY,
  USA}, \bibinfo{pages}{46--59}.
\newblock


\bibitem[Singh et~al\mbox{.}(2018b)]%
        {SinghPV:18}
\bibfield{author}{\bibinfo{person}{Gagandeep Singh}, \bibinfo{person}{Markus
  P{\"{u}}schel}, {and} \bibinfo{person}{Martin~T. Vechev}.}
  \bibinfo{year}{2018}\natexlab{b}.
\newblock \showarticletitle{A practical construction for decomposing numerical
  abstract domains}.
\newblock \bibinfo{journal}{\emph{Proc. {ACM} Program. Lang.}}
  \bibinfo{volume}{2}, \bibinfo{number}{{POPL}} (\bibinfo{year}{2018}),
  \bibinfo{pages}{55:1--55:28}.
\newblock


\bibitem[Sun et~al\mbox{.}(2021)]%
        {sun2021deepweave}
\bibfield{author}{\bibinfo{person}{Penghao Sun}, \bibinfo{person}{Zehua Guo},
  \bibinfo{person}{Junchao Wang}, \bibinfo{person}{Junfei Li},
  \bibinfo{person}{Julong Lan}, {and} \bibinfo{person}{Yuxiang Hu}.}
  \bibinfo{year}{2021}\natexlab{}.
\newblock \showarticletitle{Deepweave: Accelerating job completion time with
  deep reinforcement learning-based coflow scheduling}. In
  \bibinfo{booktitle}{\emph{Proceedings of the Twenty-Ninth International
  Conference on International Joint Conferences on Artificial Intelligence}}.
  \bibinfo{pages}{3314--3320}.
\newblock


\bibitem[Sun et~al\mbox{.}(2019)]%
        {sun2019formal}
\bibfield{author}{\bibinfo{person}{Xiaowu Sun}, \bibinfo{person}{Haitham
  Khedr}, {and} \bibinfo{person}{Yasser Shoukry}.}
  \bibinfo{year}{2019}\natexlab{}.
\newblock \showarticletitle{Formal verification of neural network controlled
  autonomous systems}. In \bibinfo{booktitle}{\emph{Proceedings of the 22nd ACM
  International Conference on Hybrid Systems: Computation and Control}}.
  \bibinfo{pages}{147--156}.
\newblock


\bibitem[Szegedy et~al\mbox{.}(2013)]%
        {szegedy2013intriguing}
\bibfield{author}{\bibinfo{person}{Christian Szegedy},
  \bibinfo{person}{Wojciech Zaremba}, \bibinfo{person}{Ilya Sutskever},
  \bibinfo{person}{Joan Bruna}, \bibinfo{person}{Dumitru Erhan},
  \bibinfo{person}{Ian Goodfellow}, {and} \bibinfo{person}{Rob Fergus}.}
  \bibinfo{year}{2013}\natexlab{}.
\newblock \showarticletitle{Intriguing properties of neural networks}.
\newblock \bibinfo{journal}{\emph{arXiv preprint arXiv:1312.6199}}
  (\bibinfo{year}{2013}).
\newblock


\bibitem[Tjandraatmadja et~al\mbox{.}(2020)]%
        {barrier-revisited}
\bibfield{author}{\bibinfo{person}{Christian Tjandraatmadja},
  \bibinfo{person}{Ross Anderson}, \bibinfo{person}{Joey Huchette},
  \bibinfo{person}{Will Ma}, \bibinfo{person}{KRUNAL~KISHOR PATEL}, {and}
  \bibinfo{person}{Juan~Pablo Vielma}.} \bibinfo{year}{2020}\natexlab{}.
\newblock \showarticletitle{The Convex Relaxation Barrier, Revisited: Tightened
  Single-Neuron Relaxations for Neural Network Verification}. In
  \bibinfo{booktitle}{\emph{Advances in Neural Information Processing
  Systems}}, \bibfield{editor}{\bibinfo{person}{H.~Larochelle},
  \bibinfo{person}{M.~Ranzato}, \bibinfo{person}{R.~Hadsell},
  \bibinfo{person}{M.~F. Balcan}, {and} \bibinfo{person}{H.~Lin}} (Eds.),
  Vol.~\bibinfo{volume}{33}. \bibinfo{publisher}{Curran Associates, Inc.},
  \bibinfo{pages}{21675--21686}.
\newblock
\urldef\tempurl%
\url{https://proceedings.neurips.cc/paper/2020/file/f6c2a0c4b566bc99d596e58638e342b0-Paper.pdf}
\showURL{%
\tempurl}


\bibitem[Tjeng et~al\mbox{.}(2019)]%
        {mipverify}
\bibfield{author}{\bibinfo{person}{Vincent Tjeng}, \bibinfo{person}{Kai~Y.
  Xiao}, {and} \bibinfo{person}{Russ Tedrake}.}
  \bibinfo{year}{2019}\natexlab{}.
\newblock \showarticletitle{Evaluating Robustness of Neural Networks with Mixed
  Integer Programming}. In \bibinfo{booktitle}{\emph{7th International
  Conference on Learning Representations, {ICLR} 2019, New Orleans, LA, USA,
  May 6-9, 2019}}. \bibinfo{publisher}{OpenReview.net}.
\newblock
\urldef\tempurl%
\url{https://openreview.net/forum?id=HyGIdiRqtm}
\showURL{%
\tempurl}


\bibitem[Tran et~al\mbox{.}(2020)]%
        {nnv}
\bibfield{author}{\bibinfo{person}{Hoang-Dung Tran}, \bibinfo{person}{Stanley
  Bak}, \bibinfo{person}{Weiming Xiang}, {and} \bibinfo{person}{Taylor~T
  Johnson}.} \bibinfo{year}{2020}\natexlab{}.
\newblock \showarticletitle{Verification of deep convolutional neural networks
  using imagestars}. In \bibinfo{booktitle}{\emph{International Conference on
  Computer Aided Verification}}. Springer, \bibinfo{pages}{18--42}.
\newblock


\bibitem[Urban et~al\mbox{.}(2020)]%
        {urban2020perfectly}
\bibfield{author}{\bibinfo{person}{Caterina Urban}, \bibinfo{person}{Maria
  Christakis}, \bibinfo{person}{Valentin W{\"u}stholz}, {and}
  \bibinfo{person}{Fuyuan Zhang}.} \bibinfo{year}{2020}\natexlab{}.
\newblock \showarticletitle{Perfectly parallel fairness certification of neural
  networks}.
\newblock \bibinfo{journal}{\emph{Proceedings of the ACM on Programming
  Languages}} \bibinfo{volume}{4}, \bibinfo{number}{OOPSLA}
  (\bibinfo{year}{2020}), \bibinfo{pages}{1--30}.
\newblock


\bibitem[Vincent and Schwager(2020)]%
        {rpm}
\bibfield{author}{\bibinfo{person}{Joseph~A Vincent} {and} \bibinfo{person}{Mac
  Schwager}.} \bibinfo{year}{2020}\natexlab{}.
\newblock \showarticletitle{Reachable Polyhedral Marching (RPM): A Safety
  Verification Algorithm for Robotic Systems with Deep Neural Network
  Components}.
\newblock \bibinfo{journal}{\emph{arXiv preprint arXiv:2011.11609}}
  (\bibinfo{year}{2020}).
\newblock


\bibitem[Wang et~al\mbox{.}(2021a)]%
        {wang2021certified}
\bibfield{author}{\bibinfo{person}{Binghui Wang}, \bibinfo{person}{Jinyuan
  Jia}, \bibinfo{person}{Xiaoyu Cao}, {and} \bibinfo{person}{Neil~Zhenqiang
  Gong}.} \bibinfo{year}{2021}\natexlab{a}.
\newblock \showarticletitle{Certified robustness of graph neural networks
  against adversarial structural perturbation}. In
  \bibinfo{booktitle}{\emph{Proceedings of the 27th ACM SIGKDD Conference on
  Knowledge Discovery \& Data Mining}}. \bibinfo{pages}{1645--1653}.
\newblock


\bibitem[Wang et~al\mbox{.}(2019)]%
        {WangCYLNTGLCY:19}
\bibfield{author}{\bibinfo{person}{Shen Wang}, \bibinfo{person}{Zhengzhang
  Chen}, \bibinfo{person}{Xiao Yu}, \bibinfo{person}{Ding Li},
  \bibinfo{person}{Jingchao Ni}, \bibinfo{person}{Lu{-}An Tang},
  \bibinfo{person}{Jiaping Gui}, \bibinfo{person}{Zhichun Li},
  \bibinfo{person}{Haifeng Chen}, {and} \bibinfo{person}{Philip~S. Yu}.}
  \bibinfo{year}{2019}\natexlab{}.
\newblock \showarticletitle{Heterogeneous Graph Matching Networks for Unknown
  Malware Detection}. In \bibinfo{booktitle}{\emph{Proc. International Joint
  Conference on Artificial Intelligence, (IJCAI)}}.
  \bibinfo{pages}{3762--3770}.
\newblock


\bibitem[Wang et~al\mbox{.}(2018a)]%
        {neurify}
\bibfield{author}{\bibinfo{person}{Shiqi Wang}, \bibinfo{person}{Kexin Pei},
  \bibinfo{person}{Justin Whitehouse}, \bibinfo{person}{Junfeng Yang}, {and}
  \bibinfo{person}{Suman Jana}.} \bibinfo{year}{2018}\natexlab{a}.
\newblock \showarticletitle{Efficient Formal Safety Analysis of Neural
  Networks}. In \bibinfo{booktitle}{\emph{Advances in Neural Information
  Processing Systems 31: Annual Conference on Neural Information Processing
  Systems 2018, NeurIPS 2018, 3-8 December 2018, Montr{\'{e}}al, Canada}}.
  \bibinfo{pages}{6369--6379}.
\newblock
\urldef\tempurl%
\url{http://papers.nips.cc/paper/7873-efficient-formal-safety-analysis-of-neural-networks}
\showURL{%
\tempurl}


\bibitem[Wang et~al\mbox{.}(2018b)]%
        {reluval}
\bibfield{author}{\bibinfo{person}{Shiqi Wang}, \bibinfo{person}{Kexin Pei},
  \bibinfo{person}{Justin Whitehouse}, \bibinfo{person}{Junfeng Yang}, {and}
  \bibinfo{person}{Suman Jana}.} \bibinfo{year}{2018}\natexlab{b}.
\newblock \showarticletitle{Formal Security Analysis of Neural Networks using
  Symbolic Intervals}. In \bibinfo{booktitle}{\emph{27th {USENIX} Security
  Symposium, {USENIX} Security 2018, Baltimore, MD, USA, August 15-17, 2018}}.
  \bibinfo{pages}{1599--1614}.
\newblock
\urldef\tempurl%
\url{https://www.usenix.org/conference/usenixsecurity18/presentation/wang-shiqi}
\showURL{%
\tempurl}


\bibitem[Wang et~al\mbox{.}(2021b)]%
        {bcrown}
\bibfield{author}{\bibinfo{person}{Shiqi Wang}, \bibinfo{person}{Huan Zhang},
  \bibinfo{person}{Kaidi Xu}, \bibinfo{person}{Xue Lin}, \bibinfo{person}{Suman
  Jana}, \bibinfo{person}{Cho-Jui Hsieh}, {and} \bibinfo{person}{J~Zico
  Kolter}.} \bibinfo{year}{2021}\natexlab{b}.
\newblock \showarticletitle{Beta-crown: Efficient bound propagation with
  per-neuron split constraints for complete and incomplete neural network
  verification}.
\newblock \bibinfo{journal}{\emph{arXiv preprint arXiv:2103.06624}}
  (\bibinfo{year}{2021}).
\newblock


\bibitem[Weng et~al\mbox{.}(2018)]%
        {fastlin}
\bibfield{author}{\bibinfo{person}{Lily Weng}, \bibinfo{person}{Huan Zhang},
  \bibinfo{person}{Hongge Chen}, \bibinfo{person}{Zhao Song},
  \bibinfo{person}{Cho-Jui Hsieh}, \bibinfo{person}{Luca Daniel},
  \bibinfo{person}{Duane Boning}, {and} \bibinfo{person}{Inderjit Dhillon}.}
  \bibinfo{year}{2018}\natexlab{}.
\newblock \showarticletitle{Towards fast computation of certified robustness
  for relu networks}. In \bibinfo{booktitle}{\emph{International Conference on
  Machine Learning}}. PMLR, \bibinfo{pages}{5276--5285}.
\newblock


\bibitem[Wong and Kolter(2018)]%
        {wong}
\bibfield{author}{\bibinfo{person}{Eric Wong} {and} \bibinfo{person}{Zico
  Kolter}.} \bibinfo{year}{2018}\natexlab{}.
\newblock \showarticletitle{Provable defenses against adversarial examples via
  the convex outer adversarial polytope}. In
  \bibinfo{booktitle}{\emph{International Conference on Machine Learning}}.
  PMLR, \bibinfo{pages}{5286--5295}.
\newblock


\bibitem[Wu et~al\mbox{.}(2020a)]%
        {wu2020parallelization}
\bibfield{author}{\bibinfo{person}{Haoze Wu}, \bibinfo{person}{Alex Ozdemir},
  \bibinfo{person}{Aleksandar Zelji{\'c}}, \bibinfo{person}{Kyle Julian},
  \bibinfo{person}{Ahmed Irfan}, \bibinfo{person}{Divya Gopinath},
  \bibinfo{person}{Sadjad Fouladi}, \bibinfo{person}{Guy Katz},
  \bibinfo{person}{Corina Pasareanu}, {and} \bibinfo{person}{Clark Barrett}.}
  \bibinfo{year}{2020}\natexlab{a}.
\newblock \showarticletitle{Parallelization techniques for verifying neural
  networks}. In \bibinfo{booktitle}{\emph{2020 Formal Methods in Computer Aided
  Design (FMCAD)}}. IEEE, \bibinfo{pages}{128--137}.
\newblock


\bibitem[Wu et~al\mbox{.}(2022)]%
        {wu2022efficient}
\bibfield{author}{\bibinfo{person}{Haoze Wu}, \bibinfo{person}{Aleksandar
  Zelji{\'c}}, \bibinfo{person}{Guy Katz}, {and} \bibinfo{person}{Clark
  Barrett}.} \bibinfo{year}{2022}\natexlab{}.
\newblock \showarticletitle{Efficient Neural Network Analysis with
  Sum-of-Infeasibilities}. In \bibinfo{booktitle}{\emph{International
  Conference on Tools and Algorithms for the Construction and Analysis of
  Systems}}. Springer, \bibinfo{pages}{143--163}.
\newblock


\bibitem[Wu et~al\mbox{.}(2020b)]%
        {wu2020comprehensive}
\bibfield{author}{\bibinfo{person}{Zonghan Wu}, \bibinfo{person}{Shirui Pan},
  \bibinfo{person}{Fengwen Chen}, \bibinfo{person}{Guodong Long},
  \bibinfo{person}{Chengqi Zhang}, {and} \bibinfo{person}{S~Yu Philip}.}
  \bibinfo{year}{2020}\natexlab{b}.
\newblock \showarticletitle{A comprehensive survey on graph neural networks}.
\newblock \bibinfo{journal}{\emph{IEEE transactions on neural networks and
  learning systems}} \bibinfo{volume}{32}, \bibinfo{number}{1}
  (\bibinfo{year}{2020}), \bibinfo{pages}{4--24}.
\newblock


\bibitem[Xiang et~al\mbox{.}(2018)]%
        {star}
\bibfield{author}{\bibinfo{person}{Weiming Xiang}, \bibinfo{person}{Hoang-Dung
  Tran}, {and} \bibinfo{person}{Taylor~T Johnson}.}
  \bibinfo{year}{2018}\natexlab{}.
\newblock \showarticletitle{Output reachable set estimation and verification
  for multilayer neural networks}.
\newblock \bibinfo{journal}{\emph{IEEE transactions on neural networks and
  learning systems}} \bibinfo{volume}{29}, \bibinfo{number}{11}
  (\bibinfo{year}{2018}), \bibinfo{pages}{5777--5783}.
\newblock


\bibitem[Xu et~al\mbox{.}(2020)]%
        {xu2020fast}
\bibfield{author}{\bibinfo{person}{Kaidi Xu}, \bibinfo{person}{Huan Zhang},
  \bibinfo{person}{Shiqi Wang}, \bibinfo{person}{Yihan Wang},
  \bibinfo{person}{Suman Jana}, \bibinfo{person}{Xue Lin}, {and}
  \bibinfo{person}{Cho-Jui Hsieh}.} \bibinfo{year}{2020}\natexlab{}.
\newblock \showarticletitle{Fast and complete: Enabling complete neural network
  verification with rapid and massively parallel incomplete verifiers}.
\newblock \bibinfo{journal}{\emph{arXiv preprint arXiv:2011.13824}}
  (\bibinfo{year}{2020}).
\newblock


\bibitem[Yang et~al\mbox{.}(2021)]%
        {yang2021improving}
\bibfield{author}{\bibinfo{person}{Pengfei Yang}, \bibinfo{person}{Renjue Li},
  \bibinfo{person}{Jianlin Li}, \bibinfo{person}{Cheng-Chao Huang},
  \bibinfo{person}{Jingyi Wang}, \bibinfo{person}{Jun Sun},
  \bibinfo{person}{Bai Xue}, {and} \bibinfo{person}{Lijun Zhang}.}
  \bibinfo{year}{2021}\natexlab{}.
\newblock \showarticletitle{Improving neural network verification through
  spurious region guided refinement}.
\newblock \bibinfo{journal}{\emph{Tools and Algorithms for the Construction and
  Analysis of Systems}}  \bibinfo{volume}{12651} (\bibinfo{year}{2021}),
  \bibinfo{pages}{389}.
\newblock


\bibitem[Ying et~al\mbox{.}(2018)]%
        {YingHCEHL:18}
\bibfield{author}{\bibinfo{person}{Rex Ying}, \bibinfo{person}{Ruining He},
  \bibinfo{person}{Kaifeng Chen}, \bibinfo{person}{Pong Eksombatchai},
  \bibinfo{person}{William~L. Hamilton}, {and} \bibinfo{person}{Jure
  Leskovec}.} \bibinfo{year}{2018}\natexlab{}.
\newblock \showarticletitle{Graph Convolutional Neural Networks for Web-Scale
  Recommender Systems}. In \bibinfo{booktitle}{\emph{Proc. {ACM} {SIGKDD}
  Knowledge Discovery {\&} Data Mining, {KDD}}}. \bibinfo{publisher}{{ACM}},
  \bibinfo{pages}{974--983}.
\newblock


\bibitem[Zaharia et~al\mbox{.}(2010)]%
        {locality}
\bibfield{author}{\bibinfo{person}{Matei Zaharia}, \bibinfo{person}{Dhruba
  Borthakur}, \bibinfo{person}{Joydeep Sen~Sarma}, \bibinfo{person}{Khaled
  Elmeleegy}, \bibinfo{person}{Scott Shenker}, {and} \bibinfo{person}{Ion
  Stoica}.} \bibinfo{year}{2010}\natexlab{}.
\newblock \showarticletitle{Delay scheduling: a simple technique for achieving
  locality and fairness in cluster scheduling}. In
  \bibinfo{booktitle}{\emph{Proceedings of the 5th European conference on
  Computer systems}}. \bibinfo{pages}{265--278}.
\newblock


\bibitem[Zelazny et~al\mbox{.}(2022)]%
        {zelazny2022optimizing}
\bibfield{author}{\bibinfo{person}{Tom Zelazny}, \bibinfo{person}{Haoze Wu},
  \bibinfo{person}{Clark Barrett}, {and} \bibinfo{person}{Guy Katz}.}
  \bibinfo{year}{2022}\natexlab{}.
\newblock \showarticletitle{On Optimizing Back-Substitution Methods for Neural
  Network Verification}.
\newblock \bibinfo{journal}{\emph{arXiv preprint arXiv:2208.07669}}
  (\bibinfo{year}{2022}).
\newblock


\bibitem[Zhang et~al\mbox{.}(2018)]%
        {crown}
\bibfield{author}{\bibinfo{person}{Huan Zhang}, \bibinfo{person}{Tsui-Wei
  Weng}, \bibinfo{person}{Pin-Yu Chen}, \bibinfo{person}{Cho-Jui Hsieh}, {and}
  \bibinfo{person}{Luca Daniel}.} \bibinfo{year}{2018}\natexlab{}.
\newblock \showarticletitle{Efficient Neural Network Robustness Certification
  with General Activation Functions}. In \bibinfo{booktitle}{\emph{Advances in
  Neural Information Processing Systems}},
  \bibfield{editor}{\bibinfo{person}{S.~Bengio}, \bibinfo{person}{H.~Wallach},
  \bibinfo{person}{H.~Larochelle}, \bibinfo{person}{K.~Grauman},
  \bibinfo{person}{N.~Cesa-Bianchi}, {and} \bibinfo{person}{R.~Garnett}}
  (Eds.), Vol.~\bibinfo{volume}{31}. \bibinfo{publisher}{Curran Associates,
  Inc.}
\newblock
\urldef\tempurl%
\url{https://proceedings.neurips.cc/paper/2018/file/d04863f100d59b3eb688a11f95b0ae60-Paper.pdf}
\showURL{%
\tempurl}


\end{thebibliography}

\ifextended
\newpage

\appendix

\section{Bound derivation with DeepPoly in Fig.~\ref{fig:backToyExample}\label{app:deeppoly}}

The forward abstract elements and bounds after the first forward abstract interpretation are shown in Fig.~\ref{fig:deeppoly}.
Each DeepPoly element contains four constraints, denoting the symbolic upper/lower bounds, and the concrete upper/lower bounds. The symbolic upper bounds of a Leaky ReLU output is the upper bound in the triangular linear relaxation~\cite{planet}. While the symbolic lower bounds are either $\alpha x_b$ or $x_b$ where $x_b$ is the input to the activation. An area heuristic is used to select which lower bound to use. The concrete bounds of a layer can be computed using a sequence of back-substitution steps described in \cite{deeppoly}.
\begin{figure*}[h]
    \centering
    \includegraphics[width=0.9\linewidth]{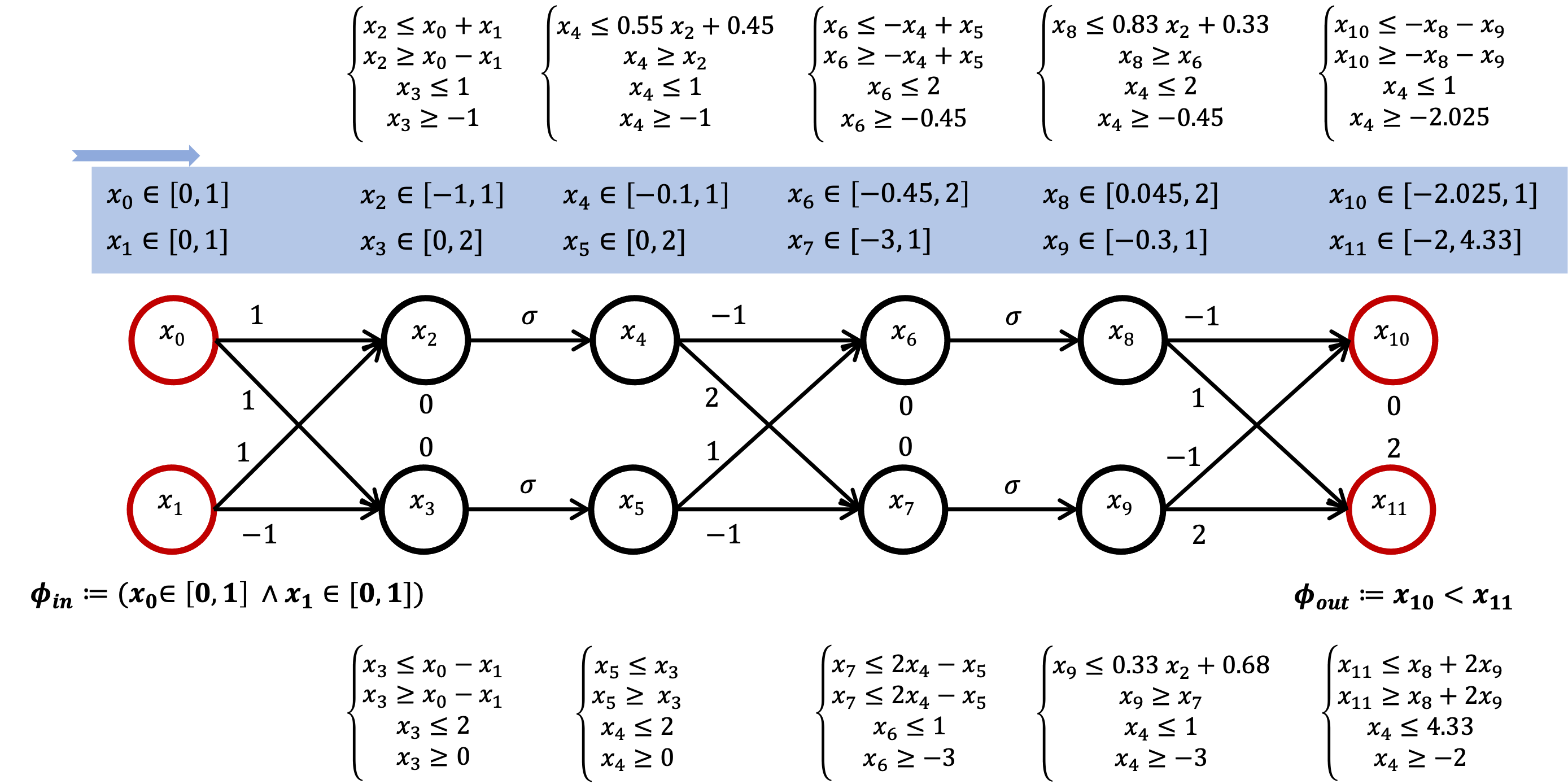}
    \caption{DeepPoly elements after the first forward analysis.}
    \label{fig:deeppoly}
\end{figure*}

\section{Training\label{app:training}}
We trained the Decima policy using the default parameters and the source code
published by Mao et al.~\cite{decima}. Specifically, we performed training on
TPC-H queries executed on the Alibaba production trace. To produce node, job,
and global embeddings, we used neural networks with six fully-connected layers
each, containing [16, 8, 8, 16, 8, 8] neurons in their layers,
respectively. Finally, two neural network with four fully-connected layers,
containing [32, 16, 8, 1] neurons respectively, mapped the embeddings to
actions--i.e., a node selected for scheduling and a number of executors to
assign. Training was performed using the REINFORCE policy-gradient
algorithm executed on 16 workers. We set the learning rate at
$1\times{}10^{-3}$ and used the ADAM optimizer. We ran training
for 10,000 episodes, when we observed that the average job completion-time has converged. Under this training configuration, Mao et al.\ found that Decima outperforms state-of-the-art scheduling algorithms.

\section{Feature vector of Decima \label{app:features}}
In Decima, each node in a job profile is described by a \emph{feature vector}. The feature vector of a node for Decima has the following 5 dimensions:
\begin{itemize}
    \item Number of executors in the job.
    \item Whether the current executor belongs to the job.
    \item Number of source executors.
    \item Total duration of the tasks currently in the job stage.
    \item number of tasks in the job stage.
\end{itemize}

The pre-condition of the strategy-proofness properties that we verify are linear constraints over the last two features of a group of nodes.

\section{Other properties. \label{app:spec}}
\subsection{Locality property.}\label{app:locality}
Data-locality is known to be crucial to scheduler performance~\cite{locality}.
To gain more insights into Decima's performance, we check whether the high performance is due to preservation of data locality:
when a node $\node_i$ is picked as an action (i.e. it is assigned an executor), Decima selects nodes in the same job when that executor is freed.  
We formulate a multi-step locality property for the multi-user single-executor setting: 

\begin{definition}[$K$-Locality] We say that the scheduler is $K$-local with respect to a set of initial job profiles $M=\{(A,X) \mid \phi_{in}(X)\}$, if $\forall G = (A, X), e\in E$,

\begin{align*}
\phi_{in}(X) \to \\
\unreach
\Big(
\big\{
t\mid \big(\{\cdot  \mid \node_c \}, \node_a,\node^1_{b},\ldots,\node^{K-1}_b\big)\subseteq t, \\ 
\differentJobs(\node_c, \node_a)\\
\land \exists \node^{j}_b, \differentJobs(\node^j_b,\node_a) \\
\land |\job(\node_a)@step(\node_a)| \geq K
\big\}
\Big)
\end{align*}
\label{def:locality}
\end{definition}

Intuitively, the post condition states that if the scheduler picks a node $\node_a$ from a certain job (assuming that either $\node_a$ is the first node in the trace, or the previous node belongs to a different job), it will schedule nodes from the \emph{same} job for at least $K$ consecutive steps if there are $K$ nodes left in that job. Note that depending on the scheduler characteristics and the choice of $K$, $K$-locality is not necessarily a desirable property. Here, as a way to understand Decima's performance, we check whether Decima is at least 2-local with respect to different input sets using Alg.~\ref{alg:multi}. In particular, we choose the pre-condition to be the same as that in the statrategy-proofness properties as described in Sec.~\ref{subsec:multi-step-experiment}. As a proof-of-concept, we check whether this property holds for 5 steps.

Again, we compare the effect of using the incomplete encoding (\singleStep) vs. the complete proof-transfer encoding (\completePT) on the 30 pre-conditions where the former produces spurious traces. Both configurations terminate within 2 hours for each benchmark. The result is shown in Table~\ref{tab:multi-local}. 

\begin{table}[h]
\setlength\tabcolsep{5pt}
\centering		
\sffamily
\begin{tabular}{ccccccc}
  \toprule
Prop. (\# bench.)
& \multicolumn{3}{c}{\singleStep}
& \multicolumn{3}{c}{\completePT}\\
\cmidrule(lr){2-4} \cmidrule(lr){5-7}
& Proved & Disproved & Time & Proved & Disproved & Time \\
Loc. (30) & 8 & 0 & 7912 & \bf{13} & \bf{17} & 28156 \\
\bottomrule
\end{tabular}
\caption{ \singleStep vs. \completePT on the 2-locality benchmarks. We show the number of instances that are proved and disproved respectively, as well as the runtime (s) on \emph{solved} instances. \label{tab:multi-local}}
\end{table}

We observe that \completePT again proves more properties than \singleStep, which is incomplete and unable to guarantee real counter-examples. Based on our results, the locality property does not seem to hold often. Therefore there are other factors to Decima's high performance. 
In the future, it would be interesting to look into other explanations and verify them with \sys.

\subsection{Global strategy-proofness property.\label{app:global-sp}}

\begin{definition}[Global single-step strategy-proofness\label{def:global-single-step-sp}]
Given a set of initial job profiles $M_A =\{(A,X) \mid \phi_{in}(X)\}$ with the same graph structure $A$. Suppose each job profile contains K jobs $G_1, \ldots, G_K$. For each node feature vector $\vect{\feat_i}$, let $\feat_{id}$ and $\feat_{it}$ denote the entries of estimated total duration and the number of tasks, respectively. Let $\node_{c}\in G_k$ be the node that is supposed to be scheduled next. Let $G_a\in G$ be a job other than $G_k$ (e.g., the job of an adversarial user). Let $C$ and $C_a$ denote the frontier nodes in $G$ and $G_a$, respectively. The job scheduler is strategy-proof with respect to $M_A$, $\node_c$ and $G_a$, if $\forall G = (A, X), G'=(A,X')$,
\begin{align*}
\Big(\phi_{in}(X) \land \bigwedge_{\node_i \in C\backslash \{\node_c\}} \out_{c} \geq \out_i \\
\bigwedge_{\node_i \in C_a} \big(\feat'_{id} \in [\feat_{id}, \scale_d \feat_{id}] \land \feat'_{it} \in [\feat_{it}, \scale_t \feat_{it}] \land \frac{\feat'_{id}}{\feat'_{it}} \geq \frac{\feat_{id}}{\feat_{it}}\big)\Big)\\ 
\to \bigwedge_{\node_i\in C_a} \Big(\neg\big(\bigwedge_{\node_j\in C\backslash C_a} \out'_i > \out'_j\big)\Big)
\end{align*}where $\scale_d$ and $\scale_t$ are scalars (> 1).
\end{definition}

Intuitively, the first constraint in the pre-condition specifies that $\node_c$ will be scheduled next for a job profile $G\in M_A$. The rest of the pre-condition is the same as the pre-condition in the local property (Def.~\ref{def:single-step-sp}), which specifies that $G'$ is a job profile resulting from increasing the job utilization features and the average task duration of $G$. The post condition also stays the same, stating that none of the frontier node in the adversarial job $G_a$ can be scheduled. In practice, it would be ideal to establish this property for each $\node_c \in C$ and $G_a\in \{G_1, \ldots, G_K\}\backslash \{G_k\}$, therefore guaranteeing that every $G\in  M_A$ is locally strategy-proof with respect to any potential adversarial job.

It is possible to verify this property with our framework, with two practical challenges. First, while for local strategy-proofness we only need to track the neural network's outputs corresponding to $G'$, in the global case, we also need to track its outputs corresponding to $G$. This means we need another copy of the neural network encoding, resulting in doubling of the problem size. Second, since $X$ is no longer constant, $\frac{\feat'_{id}}{\feat'_{it}} \geq \frac{\feat_{id}}{\feat_{it}}$is no longer linear. Therefore, we need to consider the linear over-approximation of the non-linear constraint. Several techniques (e.g., incremental linearization) for constructing this approximation exist in the literature.

\fi



\end{document}